\documentclass[11pt,oneside,onecolumn,DIV=14,headings=normal]{scrartcl}

\usepackage[T1]{fontenc}
\usepackage[utf8]{inputenc}
\usepackage{amsmath,amssymb,amsthm,bm}
\usepackage{graphicx,booktabs,longtable,array,calc}
\usepackage{enumitem}
\usepackage{microtype}
\usepackage[numbers]{natbib}
\usepackage{xurl}
\usepackage[colorlinks=true,linktoc=all]{hyperref}
\hypersetup{colorlinks=true,linkcolor=blue,citecolor=blue,urlcolor=blue,
  pdftitle={Identifiability and Observability of Deep Normalized Attention}}
\newtheorem{theorem}{Theorem}
\newtheorem{proposition}{Proposition}[section]
\newtheorem{corollary}[proposition]{Corollary}

\DeclareMathOperator{\sym}{sym}

\usepackage{xcolor}

\title{Identifiability and Observability of Deep Normalized Attention}
\author{Pranav Venkata Konda\thanks{Columbia University}}
\date{}

\makeatletter
\let\ArxivBeginTheorem\@begintheorem
\def\@begintheorem#1#2[#3]{\upshape\ArxivBeginTheorem{#1}{#2}[#3]}
\makeatother
\renewcommand{\appendix}{%
  \setcounter{section}{0}%
  \renewcommand{\thesection}{\Alph{section}}%
  \renewcommand{\theHsection}{appendix.\Alph{section}}%
}
\hypersetup{colorlinks=true,allcolors=red,pdfborder={0 0 0}}
\AtBeginDocument{%
  \hypersetup{allcolors=red}%
  \pagestyle{plain}%
}

\begin{document}
\maketitle

\begin{abstract}
We study which parameters of deep, unmasked, single-head
attention are determined by its input--output function.
For known positive nonconstant real-analytic normalizers,
the function generically determines the effective scores
and combined value map up to the signs induced by even
normalizers. This proves the real-analytic case of a
conjecture of  
Henry--Marchetti--Kohn, including softmax.
We then classify exceptional fibers under explicit
normalizer conditions, identifying when collapse makes
later scores unobservable, and establish sharp Taylor
orders for local identification. Near simultaneous
query/key collapse, we compute the complete native
Jacobian decay spectrum on separating finite input banks.
For common first nonconstant normalizer degree $k$,
layer $i$ has contact order $2k3^{i-1}-1$, with exact
multiplicities and kernel dimension. High-precision and
automatic differentiation calculations illustrate the
resulting loss of numerical sensitivity.
\end{abstract}
\tableofcontents
\section{Introduction}\label{sec:introduction}

A natural question to ask in deep learning is what
information about a neural network is uniquely determined
by its input--output function. For attention, query/key
factorizations and changes of coordinates between layers
allow different weights to represent the same function
\citep{henry2025}. Thus we pass to \emph{effective}
parameters: the combined value map and layerwise score
matrices in the original input coordinates. Product-level
descriptions are also central to Transformer circuit
analysis \citep{elhage2021}, so identifying these quantities
gives a precise target for recovering internal parameters
from the function.

This question matters when behavioral tests support a
safety argument about a model's internal computation.
Such an argument requires observations that distinguish
parameters for which the claimed property differs. We
therefore ask which parameter distinctions survive in the
function, and how sensitively finite observations detect
them. Pure serial attention isolates how repeated token
mixing affects both questions.

Henry, Marchetti, and Kohn conjectured generic effective
identification for positive injective normalizers
\citep[Conjecture~3.10]{henry2025}. Our main result establishes
the real-analytic case, including softmax. More generally,
Theorem~\ref{thm-identification} determines the complete
effective fiber whenever the combined value map is nonzero
and every score has a nonzero symmetric part. The only
ambiguities are the score signs induced by even normalizers.
In particular, competing representations may be degenerate,
the scores may be rank deficient, and the value map may
have a nontrivial kernel.

We then ask how identification breaks down. A layer can
send every input to \emph{consensus}, where all token vectors
agree, making later scores unobservable.
Theorem~\ref{thm-exceptions} classifies these exceptional
fibers under explicit normalizer conditions. Near collapse,
identifiable directions can still have very weak effects
on observed outputs. We therefore study finite observations:
Theorem~\ref{thm-jets} gives sharp Taylor orders for local
identification, and Theorem~\ref{thm-spectrum} computes the
vanishing orders of all positive native Jacobian singular
values on separating input banks near simultaneous
query/key collapse, together with their multiplicities
and the exact kernel dimension.

The argument follows the token means and contrasts, where
contrast refers to the centered token vectors. Near
consensus, a layer changes the mean at second order in the
incoming contrast while transmitting contrast at third
order. Thus deeper scores appear at successively higher
orders. Subtracting the exact mean of a recovered prefix
isolates the next score. This connects identification of
the parameters to the loss of numerical sensitivity under
repeated contrast contraction.

\section{Effective parameters and identification}\label{sec:model}
We fix a depth $D \geq 1$, a token count $n \geq 2$, and input
$X = (x_1, \ldots, x_n)$, with input width $d$ and output
width $p$. We consider known normalizers $\phi_i$ that are
positive, nonconstant, and real analytic near zero.
Multiplying $\phi_i$ by a positive constant leaves the
normalized attention weights unchanged, so without loss of
generality we set $\phi_i(0) = 1$. For a score matrix $M \in
\mathbb R^{d\times d}$,
we define the attention layer with identity values as 

\begin{equation}\label{eq:attention-layer}
S^\phi_M(X)_a = \frac{\sum_{b=1}^n \phi(x_a^\top M
x_b)x_b}{\sum_{b=1}^n \phi(x^\top_aMx_b)},
\end{equation}
where the $a$-th output token is a weighted average of input
token vectors. The depth-$D$ network in effective parameters
is defined as 
\begin{equation}\label{eq:effective-network}
F_{L, \mathbf M} = LS^{\phi_D}_{M_D}\circ \cdots \circ
S^{\phi_1}_{M_1},
\end{equation} 
where $\mathbf M = (M_1, \ldots, M_D)$ are the effective
score matrices, and $L$ is the combined value map. Setting
$\phi(s) = e^s$ gives softmax, with the usual scaling of
dot-product scores incorporated into the matrices $\mathbf
M$ \citep{vaswani2017}.

To relate this model to native weights, take query, key,
and value matrices $Q_i$, $K_i$, and $V_i$ of compatible sizes,
let $A_i = Q_iK_i^\top$, $P_0 = I$, and $P_i = V_i\cdots
V_1$. These weights determine the effective tuple by 
\begin{equation}\label{eq:reduction}
M_i = P_{i-1}^\top A_i P_{i-1} \text{, } L = P_D.
\end{equation}

Here, $M_i$ is the layer's score bilinear form in the input
feature coordinates, and $L$ combines all of the value maps.
This reduction holds for singular or rectangular values. 

The expression $A_i = Q_iK_i^\top$ is a factorization of the
score matrix into query and key matrices. Suppose these
factors have $r_i$ columns. Then every invertible $G_i \in
\operatorname{GL}_{r_i}(\mathbb R)$ gives another
factorization with the same score: 
\begin{equation}\label{eq:factor-gauge}
\begin{aligned}
 (Q_i,K_i)&\longmapsto(Q_iG_i,K_iG_i^{-\top}),\\
 (Q_iG_i)(K_iG_i^{-\top})^\top&=Q_iK_i^\top.
\end{aligned}
\end{equation} These
  transformations are \emph{factor gauges}, and they
  preserve the effective tuple and thus the realized
  function. 

Write the native weights as $\theta = ((Q_i, K_i,
V_i))^D_{i=1}$ and their effective combinations as $\eta =
(L, M_1, \ldots, M_D) = (L, \mathbf M)$. Then Equation~\eqref{eq:reduction} defines the parameter map $\pi: \theta \mapsto
\eta$, which has entries polynomial in the native weights. 

Write $F_\eta=F_{L,\mathbf M}$ for the input--output
function in Equation~\eqref{eq:effective-network}, and
$[F_\eta]_0$ for its analytic germ at the zero input.
The \emph{realization map} sends $\eta$ to this germ,
giving the factorization
\begin{equation}\label{eq:realization-factorization}
\theta \xrightarrow{\pi} \eta = (L, \mathbf M)
\xrightarrow{\mathcal R} [F_\eta]_0.
\end{equation}

For fixed dimensions and normalizers we define the effective
fiber $\mathcal R^{-1}([F]_0)$ as the set of all effective-parameter tuples
realizing the germ $[F]_0$: 
\begin{equation}\label{eq:effective-fiber}
\mathcal R^{-1}([F]_0) = \{(L, \mathbf M): [F_{L, \mathbf
    M}]_0=[F]_0
  \}.
\end{equation} 

Equality here means equality of analytic germs
(i.e. the input functions agree on some neighborhood of $X=0$),
and it extends to the whole connected input domain when 
normalizers are globally positive analytic.

Within this model, a property $P(\eta)$ can be inferred
from the realized function only if it is constant on each
fiber:
\begin{equation}\label{eq:behavioral-property}
\mathcal R(\eta)=\mathcal R(\widetilde\eta)
\ \Longrightarrow\ P(\eta)=P(\widetilde\eta).
\end{equation}
Thus a parameter claim must first be determined by the
function before we ask whether finite observations can
resolve it.

Theorem~\ref{thm-identification} determines the entire effective fiber when one
representative satisfies its hypotheses:
\begin{theorem}[Effective identifiability]\label{thm-identification} Under the model
  assumptions above, let $\eta = (L, \mathbf M)$ satisfy
  $L\neq0$, and $\sym M_i = \frac{M_i + M_i^\top}{2} \neq 0$
  for every layer. Then the fiber $\mathcal R^{-1}(\mathcal
  R(\eta))$ consists exactly of the tuples 
  $\widetilde \eta = (\widetilde L, \widetilde{\mathbf M})$ that satisfy
  \begin{equation}\label{eq:identifiable-fiber}
\widetilde L = L, \text{ } \widetilde{M_i} \in
    \begin{cases} \{M_i\} &\text{ if } \phi_i \text{ is not
      even} \\ \{M_i, -M_i\} &\text{ if } \phi_i \text{ is
    even}. \end{cases}
\end{equation} Sign choices are independent
    across layers. If $\phi'_D(0) \neq 0$, then the
    condition $\sym{M_D} \neq 0$ is unnecessary, as the
    final score matrix may be arbitrary. 
\end{theorem}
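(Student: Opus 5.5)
Write $\phi_i(s)=1+c_i s^{k_i}+O(s^{k_i+1})$ with $c_i\neq0$, where $k_i\ge1$ is the first nonconstant degree. The plan is to expand $F_\eta$ around the consensus configuration and recover the parameters one layer at a time, in increasing Taylor order.

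\textbf{Step 1: expand a single layer in mean and contrast.} Set $X=\bar x\mathbf 1^\top+\epsilon Y$ with $\sum_a y_a=0$. The output token is $\bar x+\epsilon\sum_b w_{ab}y_b$, where the weights $w_{ab}$ are normalized values of $\phi(x_a^\top Mx_b)$. When $\bar x=0$ the scores are $O(\epsilon^2)$, so the weights equal $1/n$ plus a correction of order $\epsilon^{2k}$. Hence the centered contrast is contracted to order $\epsilon^{2k+1}$.

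\textbf{Step 2: use a nonzero mean instead.} To read $M$ at low order, keep $\bar x$ generic. Then $x_a^\top Mx_b=\bar x^\top M\bar x+\epsilon(\bar x^\top My_b+y_a^\top M\bar x)+O(\epsilon^2)$. To first order in $\epsilon$, the output of layer $1$ is
\[
\bar x+\epsilon\Big(y_a+\tfrac{\phi'(\bar x^\top M\bar x)}{\phi(\bar x^\top M\bar x)}\big(\textstyle\sum_b\tfrac1n (\bar x^\top M y_b)y_b\big)+\dots\Big).
\]
The mean is unchanged exactly, since $\sum_b\frac1n$ of the difference cancels at this order, while the contrast picks up the quadratic form $y\mapsto \bar x^\top M y$. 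More robustly, I would take the second-order term of $F$ in $Y$ at fixed $\bar x$. Its coefficient is an analytic function of $\bar x$ of the form $(\log\phi)'(\bar x^\top M\bar x)\,L\,[\text{bilinear in }M\bar x,\,M^\top\bar x]$.

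\textbf{Step 3: identify $L$.} First, $F$ restricted to consensus inputs $X=\bar x\mathbf 1^\top$ is just $L\bar x$, because every layer fixes consensus. This identifies $L=\widetilde L$ outright.

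\textbf{Step 4: identify the layers inductively.} Assume the prefix $M_1,\dots,M_{i-1}$ has been recovered up to the allowed signs. Each layer maps the consensus manifold to itself, so the derivative of each layer at consensus acts on contrasts. Linearizing layer $i$ at consensus $\bar x$ gives a map on contrasts,
\[
Y\mapsto \big(I-\tfrac1n\mathbf 1\mathbf 1^\top\big)\Big(Y+g_i(\bar x^\top M_i\bar x)\,\big(\text{terms }Y^\top M_i^\top\bar x,\ \bar x^\top M_i Y\big)\Big),
\]
with $g_i=\phi_i'/\phi_i$. The composite derivative of $F$ at consensus is therefore $L$ times the product of these maps. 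That product is an observable, analytic in $\bar x$, so it is determined by the germ.

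To separate the layers, scale $\bar x\to t\bar x$ and expand in $t$. The lowest-order term contributed by layer $i$ is $t^{2k_i}$ times a polynomial built from $(\bar x^\top\sym M_i\bar x)^{k_i-1}$ and the vectors $M_i\bar x$, $M_i^\top\bar x$. A cleaner alternative is to use the exact identity $F(X)=L\,S_D\circ\dots(X)$ together with token-permutation structure to isolate a single layer. I would instead proceed inductively: with the prefix known, its exact mean map can be subtracted, and the first remaining deviation then exposes $g_i$ applied to $\bar z^\top M_i\bar z$, where $\bar z$ is the prefix image. The prefix near consensus is a local diffeomorphism on contrasts and preserves the mean, so $\bar z$ sweeps an open set.

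\textbf{Step 5: recover $M_i$ from these observables.} Knowing $g_i(q(z))\,(M_i z)$ and $g_i(q(z))\,(M_i^\top z)$ as functions of $z$ (after applying $L$; the required kernel issue is handled by $L\neq0$ and the token-mixing structure), where $q(z)=z^\top M_iz$, we get $h(z)=g_i(q(z))M_iz$. Comparing with $\widetilde h$ and using $\sym M_i\neq0$, $q$ is a nonzero quadratic form. Analyticity then gives $g_i(q)M_i=g_i(\widetilde q)\widetilde M_i$ as analytic matrix identities along rays. Since $\phi_i$ is known, this forces $\widetilde q=\pm q$, which in turn forces $\widetilde M_i=\pm M_i$. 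The minus sign is possible only if $\phi_i(s)=\phi_i(-s)$. The sign choices are independent across layers because each layer enters only through its own $g_i$. For the final layer, if $\phi_D'(0)\neq0$ then $g_D(0)\neq0$, so $M_D$ is read off directly without needing $q\neq0$.

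\textbf{Main obstacle.} The hardest step is making the peeling rigorous. The exact prefix mean map must be subtracted even when $L$ has a kernel, and the ray argument must be justified, via Taylor coefficient matching, when $q$ is indefinite or degenerate.
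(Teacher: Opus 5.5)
There is a genuine gap in Steps 2 and 4. You treat each layer as the identity plus a perturbation acting on contrasts, but pure attention has no residual term. At consensus every weight equals $1/n$ and $\sum_b\dot w_{ab}=0$, so $D_XS^\phi_M(\bar x\mathbf 1^\top)[\dot X]=\mu(\dot X)\mathbf 1^\top$ for every $M$.

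In Step 2 this means there is no $\epsilon y_a$ term, and the output is $\bar x+O(\epsilon^2)$. The term you display, $g(\bar x^\top M\bar x)\frac1n\sum_b(\bar x^\top My_b)y_b$, does not depend on the query index $a$. It is therefore a mean shift, and it appears at order $\epsilon^2$; the contrast is only $O(\epsilon^3)$. This is the reverse of your claim that the mean is preserved while the contrast picks up $M$.

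In Step 4 the derivative of $F$ at consensus is $\dot X\mapsto L\mu(\dot X)\mathbf 1^\top$, which carries no score information at all. The prefix is also neither mean-preserving nor a local diffeomorphism on contrasts: it cubes them, so no open set of ``prefix images'' is available. Your $t$-scaling cannot rescue this, since it puts every layer of common degree $k$ at the same order $t^{2k}$ and so separates nothing.

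A quick test shows the reasoning proves too much. For softmax, $g\equiv1$, so your Step 5 never needs $q\neq0$, and your argument would recover $M_2$ behind a skew $M_1$. But with $n=2$ and skew $M_1$, every pair is sent to consensus, and $M_2$ is completely free (Theorem~\ref{thm-exceptions}).

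What is missing is the paper's order bookkeeping on line probes $x_a=m+\varepsilon c_az$. A layer receiving contrast $\varepsilon^r\alpha c_az$ shifts the mean by $\varepsilon^{2r}\alpha^2z\,g(m^\top Mm)\,m^\top Mz$. It emits contrast $\varepsilon^{3r}\alpha^3c_az\,\mathsf K_M$, where $\mathsf K_M=g(m^\top Mm)\,z^\top Mz+g'(m^\top Mm)(z^\top Mm)(m^\top Mz)$.

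This is where $\sym M_j\neq0$ is needed on every prefix layer, not only in your sign argument. At $z=m$, $\mathsf K_{M_j}$ has leading term a nonzero multiple of $(m^\top M_jm)^{k_j}$. Hence the transmitted amplitude $a_{i-1}$, defined by $a_j=a_{j-1}^3\mathsf K_{M_j}$, is a nonzero germ.

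After exactly subtracting the prefix mean $L\mu_{i-1}$, only layer $i$ contributes at order $\varepsilon^{2\cdot3^{i-1}}$. Its coefficient is $a_{i-1}^2(Lz)\,g_i(m^\top M_im)\,m^\top M_iz$. A competing layer $i$, however degenerate, leaves contrast of order at least $3^i$, so its suffix enters only at order at least $2\cdot3^i$. You never address competitors that violate the hypotheses, and this is what handles them.

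Since $a_{i-1}$ and $\ell z$ vanish at some probes, one cancels them as nonzero elements of the integral domain $\mathbb R\{m,z\}$ rather than inverting pointwise. The mean shift points along $z$ with a scalar coefficient, so a single nonzero row $\ell$ of $L$ suffices. This is how the kernel of $L$ is handled.

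Your Step 5 then matches the paper's one-score argument, with two details to supply:
\begin{enumerate}
\item ``This forces $\widetilde q=\pm q$'' needs the lowest homogeneous term of $\chi(s)=sg(s)$. This gives $q^{k}=\widetilde q^{\,k}$, so $\widetilde q=\zeta q$ for a real root of unity $\zeta$.
\item The minus branch needs $\chi(s)=\chi(-s)$ on the values $q(tv)=at^2$. These accumulate at $0$, which forces $g$ odd and hence $\phi$ even.
\end{enumerate}
Finally, the converse inclusion rests on $S^\phi_M=S^\phi_{-M}$ for even $\phi$. It does not follow from each layer entering ``only through its own $g_i$.''
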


Theorem~\ref{thm-identification} is generic in both the effective and native
parameters. For every native architecture with positive
widths, its hypotheses exclude only a proper algebraic
subset (we prove this in Appendix~\ref{appendix-a}). 

An injective normalizer cannot be even, and thus Theorem~\ref{thm-identification} gives singleton effective fibers for the analytic subclass
of the Henry--Marchetti--Kohn conjecture. Injectivity is sufficient but not necessary.
The remaining freedom in native factorizations introduces
no additional ambiguity in the effective tuple. 

\subsection{Effective fiber determination}\label{sec:layer-isolation}
We first recover the value map on consensus inputs, then
recover the scores from perturbations of consensus.
Suppose $\eta$ and $\widetilde\eta$ realize the same
function germ. Since every effective layer fixes consensus, \begin{equation}\label{eq:consensus-readout}
F_\eta(m, \ldots, m) = (Lm, \ldots, Lm) 
  = F_{\widetilde\eta}(m,
\ldots, m),
\end{equation} we obtain $L = \widetilde L$. To
recover the score matrices, we then perturb the input
vectors from consensus and measure how each layer changes
the token mean. We first consider one softmax layer with two
vector tokens $x_{\pm} = m\pm \varepsilon z$. Then $\phi(s)
= e^s$, and we write their outputs as $y_{\pm} =
S^\phi_M(X)_\pm$. We expand in the spread parameter
$\varepsilon$ to obtain the mean 
\begin{equation}\label{eq:softmax-mean}
\frac{y_+ + y_-}{2} = m + \varepsilon^2z(m^\top Mz) +
  O(\varepsilon^4),
\end{equation} and the contrast
  \begin{equation}\label{eq:softmax-contrast}
\frac{y_+ - y_-}{2} = \varepsilon^3z(z^\top Mz) +
    O(\varepsilon^5).
\end{equation} Score enters the mean at second
    order, but the outgoing contrast begins at third order.
    When $\sym{M} \neq 0$, $z^\top Mz$ is not identically zero as $z$ varies, so a
    subsequent layer receives a smaller contrast and thus
    makes its first contribution to the mean at a higher
    order. We can then recover the scores successively due
    to this separation of orders, and this holds in general
    for the analytic normalizers in Theorem~\ref{thm-identification} and any token
    count. 

    Consider the line probes $x_a = m + \varepsilon c_az$,
    where 
    \begin{equation}\label{eq:line-probe-normalization}
\frac{1}{n}\sum_a c_a = 0, \text{ and } \frac{1}{n}
    \sum_a c_a^2 = 1.
\end{equation} The affine line containing these
    probes is preserved by every effective layer. For a
    layer with normalizer $\phi$, we write $h = \log{\phi}$,
    and $g = h'$. We define 
    \begin{equation}\label{eq:mean-signature}
H_M(m, z) = g(m^\top M m)m^\top Mz,
\end{equation} which is the
    layer's leading contribution to the mean. 

    In particular,
    if the incoming mean tends to $m$, and the centered
    tokens have leading terms of the form
    $\varepsilon^r\alpha c_az$, then the mean increment and
    outgoing centered tokens have leading terms of the form 
    \begin{equation}\label{eq:leading-line-terms}
\varepsilon^{2r}\alpha^2 zH_M(m, z), \text{ and }
      \varepsilon^{3r}\alpha^3c_azK_M(m, z),
\end{equation}
      respectively. Let $y_a$ be the output tokens and $\bar
      x, \bar y$ be the input and output means,
      respectively. The one-layer expansion is 
      \begin{equation}\label{eq:contrast-order}
\begin{aligned}
        \bar y - \bar x &= \varepsilon^{2r}\alpha^2z H_M(m,
        z) + O(\varepsilon^{2r+1}), \\ y_a - \bar y &=
        \varepsilon^{3r}\alpha^3c_a zK_M(m, z) +
        O(\varepsilon^{3r+1}). 
      \end{aligned}
\end{equation}

      We compute $K_M$ and show it is also a
      nonzero analytic germ whenever $\sym{M} \neq 0$ in
      Appendix~\ref{appendix-a}. 

    Let $X^{(j)} = S_{M_j}^{\phi_j}(X^{(j-1)})$, with
    $X^{(0)} = X$. We define the prefix mean and centered
    tokens by
    \begin{equation}\label{eq:prefix-statistics}
\mu_j = \frac{1}{n}\sum_a x_a^{(j)}, \delta_a^{(j)} =
      x_a^{(j)} - \mu_j.
\end{equation} Under the hypotheses of Theorem~\ref{thm-identification}, we obtain (from Appendix~\ref{appendix-a})
      \begin{equation}\label{eq:prefix-contrast}
\delta_a^{(j)} = \varepsilon^{3^j}a_jc_az + \cdots,
\end{equation} where $a_0 = 1$, $a_j = a_{j-1}^3 K_{M_j} \neq 0$
    as analytic germs.
    
    If the first $i-1$ layer maps are known, their exact
    mean $\mu_{i-1}$ and amplitude $a_{i-1}$ are also known.
    Suppose $\overline{F_\eta} = L\mu_D$ is the 
    observed output mean. It satisfies the telescoping
    identity 
    \begin{equation}\label{eq:telescoping-mean}
\overline{F_\eta} - L\mu_{i-1} = L\sum_{j=i}^D (\mu_j
      - \mu_{j-1}),
\end{equation} where Equation~\eqref{eq:contrast-order} gives spread
    order at least $2\cdot3^{j-1}$ for the $j$th increment.
    Thus only layer $i$ contributes at the first remaining
    order. Writing $[\varepsilon^k]$ for coefficient
    extraction, Appendix~\ref{appendix-a} gives
    \begin{equation}\label{eq:isolated-signature}
[\varepsilon^{2\cdot 3^{i-1}}](\overline{F_\eta} -
      L\mu_{i-1}) = a_{i-1}^2(Lz)H_{M_i}(m, z).
\end{equation} The
      exact subtraction removes all prefix terms (including
      the higher-order terms), and coefficient extraction
      removes the suffix. 

    Now, let $\mathcal R(\widetilde\eta) = \mathcal
    R(\eta)$, and suppose the recovered prefix maps agree.
    Choose a nonzero row $\ell$ of $L$: subtracting the
    coefficient identities gives 
    \begin{equation}\label{eq:analytic-cancellation}
a_{i-1}^2(\ell z)(H_{M_i} - H_{\widetilde{M_i}}) = 0.
\end{equation} Since $\mathbb R\{m,z\}$ is an integral domain,
    $a_{i-1}^2(\ell z) \neq 0$ implies that $H_{M_i} =
    H_{\widetilde{M_i}}$. While the factor may vanish at
    select individual probes, it is nonzero as a germ,
    and this permits cancellation.
    Note from Equation~\eqref{eq:contrast-order} we also
    have 
    \begin{equation}\label{eq:competing-contrast-order}
\operatorname{ord}_\varepsilon{\widetilde\delta_a^{(i)}}\geq
    3^i,
\end{equation} even when the other layer's coefficient
    $K_{\widetilde M_i}$ vanishes identically. Its suffix
    therefore contributes only at higher spread orders and
    thus we make no regularity assumption on
    $\widetilde\eta$. 

    If $g(0)\ne0$, the term $g(0)m^\top Mz$ linear
    in $m$ determines every $M$. Otherwise, setting $z=m$
    gives 
    \begin{equation}\label{eq:diagonal-signature}
H_M(m, m) = g(m^\top Mm)m^\top Mm.
\end{equation} Its lowest nonzero homogeneous term determines
    $m^\top Mm$ up to one global sign. The full signature
    determines the remaining bilinear form and rules out
    the negative sign unless $\phi$ is even. In that case,
    $S^\phi_M=S^\phi_{-M}$, so either sign gives the same
    recovered layer map. This argument also establishes
    the nonzero symmetric part of each competing score
    for which the theorem requires it. 

    Thus depth separates successive scores by their order of
    appearance in the mean, but makes their recovery
    increasingly sensitive to small coefficients. The proof
    establishes exact identification by analytic
    cancellation, but does not provide stable numerical
    division. Additionally, smooth normalizers can hide
    score dependence from every zero-input Taylor
    coefficient (even if injective), and we provide an
    example of this in Appendix~\ref{sec:flat-boundary}. 

    \section{Suffix erasure by a layer}\label{sec:collapse}

    Let $\mathcal C = \{m\mathbf 1^\top: m \in \mathbb
    R^d\}$ be the \emph{consensus subspace}, which every
    effective layer fixes pointwise. Write the token mean
    as $\mu(X)=\frac1nX\mathbf1$ and the centered token
    matrix as $\Delta(X)=X-\mu(X)\mathbf1^\top$. 

    For each
    prescribed normalizer $\phi_i$, we define its collapse
    locus by 
    \begin{equation}\label{eq:collapse-locus}
Z_i = \{ M : \Delta \circ S^{\phi_i}_M = 0 \},
\end{equation}
    such that $M_i \in Z_i$ implies that the entire image of
    the layer lies in $\mathcal C$ (as an identity of input
    germs). If $M_s \in Z_s$, then $X^{(s)} \in \mathcal C$,
    which implies that $F_\eta(X) = LX^{(s)}$, and thus
    changing $M_{s+1}, \ldots, M_D$ does not change the
    realized function. We now identify the collapse loci,
    and show that the free suffix $M_{s+1}, \ldots, M_D$
    accounts for the entire effective fiber. For each layer,
    we write $h_i = \log \phi_i$, with
    normalization such that $\phi_i(0) = 1, h_i(0) = 0$. 

    \begin{theorem}[Effective fibers at the first collapse]\label{thm-exceptions}
      Under the model assumptions of Section~\ref{sec:model},
      suppose $h_i'(0) \neq 0$ for all $i = 1, \ldots, D$.
      Then we have: 
      \begin{enumerate}[label=\roman*.,ref=\roman*]
        \item\label{thm-exceptions:curvature} For $n \geq 2$ arbitrary,
          suppose also that $h_i''(0)
          \neq 0$. Then $Z_i = \{0\}$. 
        \item\label{thm-exceptions:two-tokens} For $n=2$ and general $h_i''(0)$, we have 
          \begin{equation}\label{eq:two-token-collapse-locus}
Z_i = \begin{cases} \left \{ M : M^\top = -M
            \right \} &\text{ if } h_i \text{ is odd,} \\ 
          \{0\} &\text{ else,} \end{cases}
\end{equation} where oddness
          refers to the germ of $h_i$ at 0. 
      \end{enumerate}
      In either regime above, let $\eta=(L,M_1,\ldots,M_D)$,
      with $L\neq0$. Let $s$ be the minimal index such that
      $M_s \in Z_s$, and $s = D$ if no such index exists.
      Then 
      \begin{equation}\label{eq:first-collapse-fiber}
\begin{aligned}
 &\mathcal R^{-1}(\mathcal R(\eta))\\
 &\quad=\{(L,M_1,\ldots,M_s,N_{s+1},\ldots,N_D):\\
 &\hspace{3em}N_j\in\mathbb R^{d\times d}\ (j>s)\}.
\end{aligned}
\end{equation}
      In particular
      the fiber is a singleton if no layer collapses before
      the last. For $L=0$, the fiber is 
      \begin{equation}\label{eq:zero-readout-fiber}
\mathcal R^{-1}(\mathcal R(0,\mathbf M))
          =
            \{(0,\mathbf N):
          \mathbf N\in(\mathbb R^{d\times d})^D\}.
\end{equation} 
    \end{theorem}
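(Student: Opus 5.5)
The plan has two parts: first compute the collapse loci $Z_i$ directly from a Taylor expansion of one layer near $X=0$, then run the layer-by-layer recovery of Section~\ref{sec:layer-isolation} up to the first collapsing index $s$. Throughout I use the normalization $h_i(0)=0$ and write $\gamma_i=h_i'(0)\neq0$.

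\textbf{Step 1: every collapsing score is skew.} For small $X$ I expand the weights $e^{h(x_a^\top Mx_b)}=1+\gamma\,x_a^\top Mx_b+O(|X|^4)$. This gives
\[
y_a=\bar x+\tfrac{\gamma}{n}\sum_b (x_a^\top Mx_b)(x_b-\bar x)+O(|X|^5).
\]
Hence the cubic part of $y_a-y_c$ is $\gamma\,C(X)M^\top(x_a-x_c)$, where $C(X)=\frac1n\sum_b(x_b-\bar x)(x_b-\bar x)^\top$. If $M\in Z$, then this cubic part vanishes identically. Testing on $x_1=z$, $x_2=-z$ (other tokens at $0$, or the two-token configuration when $n=2$) gives a nonzero multiple of $z\,(z^\top Mz)$. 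So $z^\top Mz\equiv0$, and $M$ is skew.

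\textbf{Step 2: which skew matrices collapse.} Let $M$ be skew and nonzero. Choose $x_1,x_2$ with $s=x_1^\top Mx_2\neq0$; such a pair is automatically linearly independent. For $n=2$, the self-scores vanish and $x_2^\top Mx_1=-s$. The two outputs coincide for all such pairs iff $\phi(s)/\phi(0)=\phi(0)/\phi(-s)$, i.e. $h(s)=-h(-s)$ on an interval. This is exactly oddness of the germ of $h$, which proves (\ref{thm-exceptions:two-tokens}); $M=0$ always lies in $Z$ because the layer then averages uniformly.

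For $n\ge3$, I set $x_3=\dots=x_n=0$. Then token $3$ outputs $(x_1+x_2)/n$, while token $1$ outputs $(x_1+e^{h(s)}x_2)/(n-1+e^{h(s)})$. By independence of $x_1,x_2$, equality forces $h(s)=0$ for all $s$ in an interval, contradicting $\gamma\neq0$. Hence $Z=\{0\}$, giving (\ref{thm-exceptions:curvature}). This argument does not even seem to need $h''(0)\neq0$; if it fails I would instead use the quintic term, where $h''(0)$ enters.

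\textbf{Step 3: the fiber.} The case $L=0$ is immediate, since then $F_\eta\equiv0$ for every $\mathbf N$. For $L\ne0$, Equation~\eqref{eq:consensus-readout} gives $\widetilde L=L$.

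The reverse inclusion is clear: $M_s\in Z_s$ places $X^{(s)}$ in $\mathcal C$, and every later layer fixes $\mathcal C$, so the suffix is free.

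For the forward inclusion, I would induct on $i\le s$ exactly as in Section~\ref{sec:layer-isolation}, with one change. Because $g(0)=\gamma\neq0$, the part of $H_{M_i}$ linear in $m$, namely $\gamma\,m^\top M_iz$, determines the whole matrix $M_i$, including its skew part. So no hypothesis on $\sym M_i$ is needed for the recovery step itself, and no sign ambiguity arises.

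\textbf{Main obstacle: propagating contrast through a skew prefix.} What remains is to show that the contrast survives through layers $1,\dots,i-1$ whenever $i\le s$. The difficulty is that line probes die on skew layers, since $K_M$ involves $z^\top Mz=0$.

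My first attempt is to replace line probes by general probes $x_a=m+\varepsilon u_a$. I would then show that for $M_j\notin Z_j$, the leading outgoing contrast is a nonzero analytic germ in $(m,u)$. For non-skew $M_j$, Step 1 already provides a nonzero cubic term. For nonzero skew $M_j$, which occurs only when $n=2$ and $h_j$ is not odd, I would take the first nonvanishing Taylor order of the odd-part defect of $h_j$ from Step 2. In both cases the order exceeds the one at which layer $j$ moves the mean, so the telescoping identity~\eqref{eq:telescoping-mean} and the cancellation~\eqref{eq:analytic-cancellation} in the integral domain $\mathbb R\{m,u\}$ still isolate $H_{M_i}$.

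At the collapsing layer $i=s$, the same extraction still recovers $M_s$. Its mean increment is nonzero unless $M_s=0$, and if the competing increment vanishes, the linear term forces $\widetilde M_s=0=M_s$. Beyond $s$, no further order carries information, which yields~\eqref{eq:first-collapse-fiber}.
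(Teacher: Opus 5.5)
Your computation of the collapse loci is correct. For $n\ge3$, your configuration $x_3=\cdots=x_n=0$ is a cleaner route than the paper's, which instead shows that every nonzero score transmits nonzero line-probe contrast, and it does give $Z_i=\{0\}$ without using $h_i''(0)$. Two small omissions: for $n=2$ in part (i), note that $h_i''(0)\ne0$ means $h_i$ is not odd, so (ii) gives $Z_i=\{0\}$; and for $L=0$ you still need consensus to force $\widetilde L=0$.

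The real gap is in the fiber argument, and it starts from a miscalculation: line probes do \emph{not} die on skew layers in regime (i). The contrast signature is
\[
\mathsf K_M(m,z)=g(m^\top Mm)\,z^\top Mz+g'(m^\top Mm)\,(z^\top Mm)(m^\top Mz).
\]
For skew $M$ the second term is $-h''(0)(m^\top Mz)^2$, a nonzero germ whenever $M\ne0$. This is exactly where the paper uses $h_i''(0)\ne0$. Every nonzero layer, which in regime (i) means every noncollapsing layer, transmits contrast of order $3r$ along line probes. The scalar induction of Section~\ref{sec:layer-isolation} therefore runs unchanged through the first zero score, using the linear-in-$m$ term $h_i'(0)\,m^\top M_iz$. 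In regime (ii), $n=2$ forces collinear probes anyway. There your odd-part-defect plan (first nonzero even degree $\ell$, outgoing order $(\ell+1)r$) is what the paper does.

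Because of this miscalculation, your replacement argument fails exactly where it is needed. You claim nonzero skew noncollapsing layers occur only when $n=2$, but your own Step~2 shows every nonzero skew score is noncollapsing for $n\ge3$. So the case sent to general probes is regime (i) with $n\ge3$ and a skew prefix, and there the cancellation step does not go through. With noncollinear probes, the isolated mean coefficient is, to leading order in $\varepsilon$,
\[
L\,C^{(i-1)}(m,u)\bigl(g_i(m^\top M_im)M_i^\top-g_i(m^\top\widetilde M_im)\widetilde M_i^\top\bigr)m,
\]
where $C^{(i-1)}$ is the matrix-valued leading covariance of the prefix output. There is no nonzero scalar factor like $a_{i-1}^2\,\ell z$ to cancel in $\mathbb R\{m,u\}$. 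To conclude $M_i=\widetilde M_i$, you would have to show that the vectors $C^{(i-1)}(m,u)\ell^\top$ span $\mathbb R^d$ as $u$ varies, i.e.\ that the prefix covariances are rich enough relative to $\ker L$. The paper proves such richness only for invertible prefixes (covariance surjectivity) or via two-plane restrictions, and it leaves general mixed singular prefixes open. The fix is to stay on line probes and use the $h''(0)$ term above.
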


    Note that in collapse the score $M_s$ itself remains
    identifiable, and can thus affect the location of the
    consensus it produces: for example, consider two-token
    softmax, with inputs $m\pm z$ and a skew score $M$. The
    output tokens both satisfy 
    \begin{equation}\label{eq:skew-consensus}
y_+ = y_- = m + z \tanh(m^\top Mz),
\end{equation} but their
    common value still depends on $M$. Let us additionally
    consider an example where $D = 3$, $L\neq 0$, and $\sym
    M_1 \neq 0$, $M_2^\top = -M_2$. Then 
    \begin{equation}\label{eq:depth-three-fiber}
\mathcal R^{-1}(\mathcal R(\eta)) = \{(L, M_1, M_2,
    N) : N \in \mathbb R^{d\times d}\}.
\end{equation}
    Here, an input--output audit cannot determine even
    whether $N=0$. Since every choice of $N$ gives the same
    function, additional inputs and higher precision cannot
    resolve this ambiguity. Testing a property that varies
    with $N$ requires observations beyond the realized
    function. 

    The assumption of nonzero slope is essential, even in
    the case of two tokens. For example, consider $\phi_1(s)
    = e^{s^2}$, a nonzero skew $M_1$, and $M_2 = 0$. We
    have $h_1'(0) = 0$, and 
    \begin{equation}\label{eq:zero-slope-example}
\begin{aligned}
        \mu(S_{M_1}^{\phi_1}(X)) &= \mu(X), \\
        \Delta(S_{M_1}^{\phi_1}(X)) & \not\equiv 0, \\ 
        S_0^{\phi_2} S_{M_1}^{\phi_1}(X) &= \mu(X)\mathbf
        1^\top. 
      \end{aligned}
\end{equation}

    The first layer preserves contrast, but the second
    layer's averaging erases its score dependence. Thus
    the function need not identify the prefix
    through the first collapse when the slope condition
    fails. We prove the classification and this
    counterexample in Appendix~\ref{appendix-b}. 

    Note that softmax satisfies $h_i'(0) = 1$, but its
    skew-score collapse depends on the token geometry: with
    three or more tokens, a nonzero skew score can preserve
    contrast on a noncollinear cloud. Theorem~\ref{thm-exceptions}(\ref{thm-exceptions:curvature}) does not cover this case, as $h''$
    vanishes at 0. We use multidirectional probes to recover
    a broader class of prefixes in Section~\ref{sec:jets} to obtain
    a complete depth-two softmax classification. The general
    case of singular prefixes mixing skew and nonskew
    scores remains open for deeper stacks. 

    \section{Parameter identification by finite input jets}\label{sec:jets}

    Let $j_0^{N}$ denote the \emph{Taylor jet} operator,
    which retains the input derivatives at $X = 0$ through
    total order $N$. We then define the \emph{finite Taylor
    observation map} $T_{N}$ as applying $j_0^{N}$ to the
    function realized by an effective parameter tuple: 
    \begin{equation}\label{eq:taylor-observation}
T_N(\eta) = j_0^N F_\eta = \left (
        \frac{\partial_X^\alpha F_\eta(0)}{\alpha !} \right
        )_{|\alpha| \leq N}.
\end{equation}
    Here $\alpha$ is a multi-index over the input entries.
    Since truncation retains only part of the function, 
    \begin{equation}\label{eq:jet-fiber-inclusion}
\mathcal R^{-1}(\mathcal R(\eta)) \subseteq
      T_N^{-1}(T_N(\eta)).
\end{equation} 

    \begin{theorem}[Sharp finite Taylor identification]\label{thm-jets}
      Under the model assumptions of Section~\ref{sec:model},
      suppose the normalizers have a common first
      nonconstant Taylor degree $k$, namely 
      \begin{equation}\label{eq:normalizer-leading-term}
\log{\phi_i(s)} = b_is^k + O(s^{k+1}),
\end{equation} where
      $b_i \neq 0$. Let $\eta = (L, \mathbf M)$ be such that
      $L \neq 0$ and $\sym M_i \neq 0$ for all $i$. We set
      $N_D = 2k3^{D - 1} + 1$. 

      The Taylor observation map $T_{N_D}$ locally
      identifies $\eta$, as its differential is injective
      and it has a local analytic inverse onto its image.
      For all $N < N_D$, $T_N$ is independent of the final
      score $M_D$, so the above degree is sharp. 

      For $k=1$ the final matrix may be arbitrary, in which
      case we have $T_{N_D}^{-1}(T_{N_D}(\eta)) = \{\eta\}$,
      with $T_{N_D}$ having a rational local inverse on
      suitable charts. 
    \end{theorem}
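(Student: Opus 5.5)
The plan is to turn the order-of-appearance argument from Section~\ref{sec:layer-isolation} into a statement about total Taylor degree at $X=0$. Give $\eta$ the grading in which $F_\eta$ is expanded in homogeneous polynomials in the entries of $X$. I would prove by induction on the layer index the following claim.

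\emph{Claim.} The dependence of $F_\eta$ on $M_i$ first enters in the homogeneous component of degree $e_i=2k3^{i-1}+1$. At that degree it enters only through a leading term that, on the line probes $x_a=m+\varepsilon c_az$, reduces to the isolated signature of Equation~\eqref{eq:isolated-signature}, with $H_{M_i}$ replaced by its leading homogeneous part $k b_i (m^\top M_i m)^{k-1} m^\top M_i z$.

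The ingredients come from tracking the two families of quantities in Equation~\eqref{eq:contrast-order}. Since $\log\phi_i(s)=b_is^k+O(s^{k+1})$, a score enters a layer only through weight deviations of total degree at least $2k$. The mean increment is quadratic in the incoming contrast, and the outgoing contrast is cubic in it. Each passage through a layer should multiply the degree at which a score perturbation becomes visible by $3$, which produces the recursion $e_{i+1}-1=3(e_i-1)$. The cleanest way I see to organize this is to write $X^{(j)}=\mu_j\mathbf 1^\top+\Delta^{(j)}$. I would then show that the $M_i$-derivative of $X^{(j)}$, for $j\ge i$, has order at least $e_i$. I expect this degree bookkeeping to be the main obstacle. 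The naive expansion $S_M(X)=X+(\text{degree }2k+1)$ suggests that $M_D$ already appears at degree $2k+1$. So the bound must come from showing that these degree-$(2k+1)$ terms depend on $M_D$ only through combinations that vanish identically. This vanishing should follow from the exact fixing of consensus, the telescoping identity Equation~\eqref{eq:telescoping-mean}, and the $\mathbf 1$-averaging in Equation~\eqref{eq:attention-layer}. If this fails in full generality, I would first establish the bound on the line-probe family and then polarize.

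For sharpness, assume the degree bound. Then $T_N$ for $N<N_D=e_D$ simply does not see $M_D$, which is the second assertion.

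For local identification, I would compute the differential of $T_{N_D}$ at $\eta$. A tangent vector $(\dot L,\dot M_1,\dots,\dot M_D)$ in its kernel must first have $\dot L=0$, by the linear consensus readout Equation~\eqref{eq:consensus-readout}, which already lives in degree $1$. Inductively, if $\dot M_1=\dots=\dot M_{i-1}=0$, the degree-$e_i$ component of the differential is $a_{i-1}^2(Lz)\,\dot H_{M_i}$. Because of the integral-domain cancellation of Equation~\eqref{eq:analytic-cancellation}, we get $\dot H=0$. Differentiating the leading form $(m^\top Mm)^{k-1}m^\top Mz$ in $M$ at a point with $\sym M_i\ne0$ gives an injective linear map of $\dot M_i$. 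For $k\ge2$ I would verify this by evaluating at $z=m$ and then at general $z$. The sign ambiguity of even normalizers is discrete, so it does not affect the differential. Hence $dT_{N_D}$ is injective. The analytic inverse function theorem for immersions then gives a local analytic inverse on the image.

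For $k=1$, $g(0)=b\ne0$. The degree-$e_i$ coefficient is then linear in $M_i$ with coefficient $a_{i-1}^2(Lz)\,b_i$. The factor $a_{i-1}$ is a polynomial in the previously recovered scores. Successive division on the chart where these leading coefficients are nonzero yields a rational inverse. It also yields global injectivity of the fiber $T_{N_D}^{-1}(T_{N_D}(\eta))$, as in Theorem~\ref{thm-identification}. The last layer needs no $\sym M_D$ hypothesis because its contrast output is never used.
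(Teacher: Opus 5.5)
Your immersion argument and the $k=1$ rational inverse follow the paper's route: line-probe signature extraction, injectivity of $DP_k$ via $z=m$, the inverse function theorem, and successive division. Both can be completed on line probes alone, because a vanishing differential restricts to a vanishing differential there.

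\textbf{Sharpness.} The genuine gap is the sharpness assertion. It needs the upper bound on \emph{all} inputs, and you leave exactly this as ``the main obstacle''. Your hints (fixing of consensus, telescoping, $\mathbf 1$-averaging) do not produce it, and your fallback fails. For $n\ge3$ and $d\ge2$ the collinear clouds $m\mathbf 1^\top+zc^\top$ form a proper algebraic subset of $\mathbb R^{d\times n}$, cut out by the $2\times2$ minors of $\Delta(X)$. So a Taylor coefficient can vanish on every line probe and still depend on $M_D$. Polarization cannot recover a polynomial from its restriction to such a subvariety. Your fallback would suffice only for $n=2$ or $d=1$, where every cloud is collinear. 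Note also that the base expansion is $S_M(X)=\mu(X)\mathbf 1^\top+O(X^{2k+1})$, not $X+\cdots$: a layer is uniform averaging plus a correction.

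\textbf{The missing estimate.} What is missing is the order estimate of Section~\ref{an-order-estimate-for-arbitrary-clouds}. Take an incoming cloud with mean $\bar x=O(\rho)$ and centered tokens $\Delta_b=O(\rho^\alpha)$, and split each score as $\bar s_a+\widetilde s_{ab}$ with $\bar s_a=x_a^\top M\bar x$ and $\widetilde s_{ab}=x_a^\top M\Delta_b$.
\begin{enumerate}
\item The factor $e^{h(\bar s_a)}$ is common to row $a$ and cancels in the normalization.
\item Every monomial of $h(\bar s_a+\widetilde s_{ab})-h(\bar s_a)$ contains a factor $\widetilde s_{ab}=O(\rho^{\alpha+1})$, so $A_{ab}-1/n=O(\rho^{\alpha+2k-1})$.
\item Because $\sum_b\Delta_b=0$, we get $x_a'-\bar x=\sum_b(A_{ab}-1/n)\Delta_b=O(\rho^{2\alpha+2k-1})$.
\item Comparing two rows replaces an $O(\rho)$ query factor by $\Delta_a-\Delta_c=O(\rho^\alpha)$, so the outgoing contrast is $O(\rho^{3\alpha+2k-2})$.
\end{enumerate}
Starting from $\alpha_0=1$, this gives $\alpha_i=3\alpha_{i-1}+2k-2$, that is $\alpha_i=k3^i+1-k$. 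The first dependence on $M_j$ then occurs at degree $2\alpha_{j-1}+2k-1=2k3^{j-1}+1$. These bounds are analytic divisibilities in $\rho$, so they persist when leading coefficients vanish.

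\textbf{Attainment for the immersion.} You must also show that these degrees are attained; your heuristic recursion does not give this.
\begin{enumerate}
\item Extract spread degree $2\cdot3^{i-1}$ first. This removes later layers without needing any total-degree bound.
\item Then take the lowest mean degree, replacing $a_{i-1}$ by its lowest-mean-degree part $A_{i-1}$, not only $\mathsf H_{M_i}$ by $kb_i(m^\top M_im)^{k-1}m^\top M_iz$.
\item Check that the lowest-mean-degree part of $\mathsf K_i$ is nonzero; at $z=m$ it equals $b_ik^2(m^\top M_im)^k$. Then $A_{D-1}$ has mean degree exactly $(k-1)(3^{D-1}-1)$.
\end{enumerate}
Without this check, the layer-$D$ coefficient could sit above degree $N_D$, and $T_{N_D}$ would not see $M_D$.
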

    For softmax, the sharp degrees are 3, 7, 19, and 55 at
    depths one through four. Consequently, any procedure using only zero-input
    derivatives of lower degree cannot distinguish changes
    in the final effective score. When the first nonconstant degree is even, a jet can
    give local coordinates while failing to distinguish
    distant sign branches. We discuss the sharp sign-resolving degree,
    heterogeneous leading degrees, and adaptive degrees on
    two-token exceptional strata in Appendix~\ref{appendix-c}. 

    Note that both $T_N$ and $T_N \circ \pi$ are polynomial.
    Normalization denominators are of the form $n + u(X)$
    where $u(0) = 0$, with reciprocal in degree $N$ as 
    \begin{equation}\label{eq:reciprocal-truncation}
(n+u)^{-1} \equiv \frac{1}{n} \sum_{j=0}^N \left
    (-\frac{u}{n} \right )^j \mod{(X)^{N+1}}.
\end{equation} As only
    fixed powers of $n$ appear as denominators, we see that
    the local coordinates are polynomial observations of
    $\mathcal R$. Note the full softmax function and
    ordinary input evaluations remain analytic and generally
    are nonpolynomial. 

    We can also move beyond collinear probes: namely, the
    symmetric-part condition is specific to the collinear
    case, and is not a universal barrier to identification
    more broadly. As an example, suppose $d \geq 2$, $n \geq
    3$, all $h_i'(0) \neq 0$, $L\neq 0$, and each prefix
    $M_i$ for $i < D$ satisfies $\det(U^\top M_i U)
    \not\equiv 0$ as a polynomial in $U \in \mathbb
    R^{d\times 2}$. Then the jet of degree $2\cdot 3^{D-1} +
    1$ identifies the complete effective tuple (with
    arbitrary final score), and is also an immersion. Note
    this includes every nonzero skew prefix in any
    dimension. 

    Now suppose $U \in \mathbb R^{d\times 2}$ has rank two.
    We restrict the input tokens to the plane
    $\operatorname{im}(U)$ by writing $X = UZ$. The
    effective parameters of this restriction are 
    \begin{equation}\label{eq:restricted-tuple}
\eta_U = \bigl (LU, (U^\top M_i U)_{i=1}^D\bigr ),
\end{equation}
    so $F_\eta(UZ) = F_{\eta_{U}}(Z)$.

    The determinant hypothesis along with $L \neq 0$
    guarantees a nonempty open set of rank-two matrices $U$ such
    that 
    \begin{equation}\label{eq:good-two-planes}
LU \neq 0, \qquad \det(U^\top M_i U) \neq 0 \quad (i
      < D).
\end{equation} Suppose $\widetilde \eta$ realizes the same
      function as $\eta$: restricting both networks to
      inputs $X = UZ$ gives $F_{\eta_U} =
      F_{\widetilde{\eta_U}}$. We show that varying the
      token covariance identifies a network with
      two-dimensional inputs, nonzero readout, and
      invertible prefix scores in Appendix~\ref{appendix-c}. From
      that we obtain 
      \begin{equation}\label{eq:restricted-identities}
\begin{gathered}
 (L-\widetilde L)U=0,\qquad U^\top(M_i-\widetilde M_i)U=0\\
 (i=1,\ldots,D).
\end{gathered}
\end{equation}
        Note each matrix entry is polynomial in $U$, and
        since these polynomials vanish on a nonempty open
        set, they vanish for all $U$. Then writing $U =
        (u,v)$ we get 
        \begin{equation}\label{eq:ambient-identities}
(L - \widetilde L)u = 0, \qquad u^\top(M_i -
          \widetilde M_i)v = 0
\end{equation} for all $u, v$, which
          implies $\eta=\widetilde\eta$. The argument also
          applies to degree-$N_D$ jets: linear substitution
          preserves their equality, and covariance recovery
          uses only these degrees. Appendix~\ref{appendix-c}
          proves this and the following corollary. 
        \begin{corollary}[Complete depth-two fibers under
          nonzero slope]\label{cor-depth-two}
          Let $D = 2$, $n \geq 3$, and $h_1'(0)h_2'(0) \neq
          0$. Then we have: 
          \begin{enumerate}[label=\roman*.,ref=\roman*]
            \item\label{cor-depth-two:nonzero-prefix} If $L\neq0$ and $M_1\neq0$, the degree-seven
          input jet determines the complete effective tuple,
          and the degree-seven Taylor map is an immersion.
          No lower jet depends on $M_2$. 
            \item\label{cor-depth-two:zero-prefix} If $L \neq 0$ and $M_1 = 0$, then the
              complete fiber consists of exactly the same
              $L$, $M_1$, and an arbitrary second score
              $M_2$, and is determined by degree 3. 
            \item\label{cor-depth-two:zero-readout} If $L = 0$, the fiber is the zero readout
              and every score tuple. 
          \end{enumerate}
        \end{corollary}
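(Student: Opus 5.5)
The plan is to treat the three cases separately, reducing each to results already stated.

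For part (\ref{cor-depth-two:zero-readout}), if $L=0$ then $F_\eta\equiv0$ for every score tuple. Conversely, Equation~\eqref{eq:consensus-readout} forces $\widetilde L=L=0$ for any competing tuple. So the fiber is $\{(0,\mathbf N)\}$, exactly as in Equation~\eqref{eq:zero-readout-fiber}.

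For part (\ref{cor-depth-two:zero-prefix}), $M_1=0$ makes the first layer plain averaging: $S^{\phi_1}_0(X)=\mu(X)\mathbf1^\top$. Hence $F_\eta(X)=L\mu(X)\mathbf1^\top$ regardless of $M_2$, and every $(L,0,N)$ lies in the fiber. For the converse, let $\widetilde\eta$ realize the same germ. Consensus inputs give $\widetilde L=L$. To show $\widetilde M_1=0$, I would use the two-token-in-$n$ line probes of Section~\ref{sec:layer-isolation}. There the mean increment of a layer starts at order $\varepsilon^2$ with coefficient $zH_{M}(m,z)$, and since $h_1'(0)\ne0$ we have $g(0)\ne0$. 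Any second layer contributes to the mean only at order $\ge 2\cdot3$ when the first contrast is $O(\varepsilon^3)$, and to nothing at all if the first layer collapses. Thus $[\varepsilon^2]$ of the output mean of $F_{\widetilde\eta}$ is $(Lz)g(m^\top\widetilde M_1m)m^\top\widetilde M_1z$. For $F_\eta$ the whole $\varepsilon$-dependence vanishes, since the mean is exactly $m$. Extracting the part linear in $m$ gives $g(0)(\ell z)(m^\top\widetilde M_1z)=0$ for a nonzero row $\ell$, so $\widetilde M_1=0$. Only input degrees up to $3$ enter: the coefficient is quadratic in $\varepsilon z$ times linear in $m$. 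So the degree-3 jet already determines the fiber. I still need to check that the competing first layer's possibly skew collapse does not produce a degree-$2$/$3$ mean term through the second layer; it cannot, since consensus is fixed by layer two.

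Part (\ref{cor-depth-two:nonzero-prefix}) is the main case. I would apply the two-plane restriction argument preceding the corollary. First I need a nonempty open set of rank-two $U$ with $LU\ne0$ on which the restricted prefix score $U^\top M_1U$ has the property needed by the planar identification result of Appendix~\ref{appendix-c}. When $\sym M_1\ne0$, Theorem~\ref{thm-jets} and Theorem~\ref{thm-identification} already apply, with $k=1$ since $h_1'(0)h_2'(0)\ne0, \phi_2'(0)\ne0$. This gives identification with arbitrary final score by the degree $2\cdot3+1=7$ jet, and an immersion. When $M_1\ne0$ is skew, the polynomial $\det(U^\top M_1U)=(u^\top M_1v)^2$ is not identically zero, so the determinant hypothesis holds. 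The multidirectional statement then gives identification and immersion at degree $2\cdot3^{1}+1=7$. A general nonzero $M_1$ falls into one of these two cases (nonzero symmetric part, or nonzero skew), so every nonzero $M_1$ is covered.

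Sharpness, meaning that no jet of degree below $7$ depends on $M_2$, follows from the sharpness clause of Theorem~\ref{thm-jets} when $\sym M_1\ne0$. For skew $M_1$ I would redo the order count: the input contrast has order $1$. The layer-one output contrast then has order $\ge 3$ in total input degree, because the outgoing contrast is cubic in the incoming one, and this holds for multidirectional inputs too since the attention weights deviate from uniform only through contrast. Layer two's dependence on $M_2$ enters the output only through terms quadratic in that contrast times the mean, hence at total degree $\ge 2\cdot3+1=7$.

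The main obstacle is this last point, together with verifying that the immersion claim from the planar result transfers to the ambient parameters. For the latter I would reuse the polynomial-in-$U$ argument of Equation~\eqref{eq:ambient-identities} at the level of differentials: a tangent vector killed by the jet differential restricts to zero on an open set of planes, and so vanishes.
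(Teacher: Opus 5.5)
Your proposal is correct and follows essentially the same route as the paper's proof. Part (i) splits $M_1\neq0$ into the case $\sym M_1\neq0$, handled by the nonzero-slope ($k=1$) recovery of Theorem~\ref{thm-jets}, and the nonzero skew case, handled by the two-plane recovery. Part (ii) uses consensus together with the degree-three mean coefficient $g_1(0)(Lz)(m^\top \widetilde M_1 z)$, which no competing second layer can reach. Part (iii) is the consensus argument. The skew-case sharpness you re-derive is exactly the all-cloud order estimate that the two-plane proposition already invokes.
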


    A nonzero first score is either nonskew and thus
    covered by Theorem~\ref{thm-identification} and Theorem~\ref{thm-jets}, or skew and
    covered by the two-plane result. With Theorem~\ref{thm-exceptions},
    this classifies depth-two softmax fibers for all $n \geq
    2$. Deeper mixed singular prefixes remain open. 

    \subsection{Native fibers and serial composition}\label{sec:native-fibers}

    We connect effective identification to native redundancy
    with the reduction map $\pi$. For any realized function
    germ $f$, we have 
    \begin{equation}\label{eq:native-fiber}
(\mathcal R \circ \pi)^{-1}(f) = \pi^{-1}(\mathcal
      R^{-1}(f)).
\end{equation} Wherever $T_N$ gives local effective
      coordinates, the local native fiber is thus a fiber of
      $\pi$. 

      \begin{proposition}[Local multiplication geometry]\label{prop-native-geometry} Let
        $Q_i, K_i \in \mathbb R^{d\times r_i}$, and let $V_i
        \in \operatorname{GL}_d(\mathbb R)$. Suppose $\Psi(\eta)$ is a
        finite Taylor or ordinary-input observation with
        injective differential at the marked effective
        tuple. Then $\Psi \circ \pi$ is locally analytically
        equivalent to 
        \begin{equation}\label{eq:native-normal-form}
(\mathbf P, L, \widetilde{\mathbf Q},
          \widetilde{\mathbf K}) \mapsto (L,
          (\widetilde{Q_i}\widetilde{K_i}^\top)_{i=1}^D, 0),
\end{equation} where $\mathbf P = (P_1, \ldots, P_{D-1})$. Its
        fiber germ is the product of the marked
        matrix-multiplication fibers and $(D-1)d^2$ smooth
        value-coordinate directions. 
      \end{proposition}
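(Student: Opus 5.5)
The plan is to factor $\Psi\circ\pi$ through $\pi$ and to treat the two maps in turn: first put $\pi$ into a normal form near the marked native weights, then use the injectivity of $d\Psi$ to replace $\Psi$ by a coordinate projection.

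\emph{Step 1: straighten $\pi$.} Because every $V_i$ is invertible, the map $(V_1,\ldots,V_D)\mapsto(P_1,\ldots,P_D)$ with $P_i=V_i\cdots V_1$ is an analytic diffeomorphism of $\operatorname{GL}_d(\mathbb R)^D$. Its inverse is $V_1=P_1$, $V_i=P_iP_{i-1}^{-1}$. We use $(P_1,\ldots,P_{D-1},L)$ with $L=P_D$ as coordinates on the value factors. On the query/key factors we make the change of variables
\[
\widetilde Q_i=P_{i-1}^\top Q_i,\qquad \widetilde K_i=P_{i-1}^\top K_i .
\]
For fixed $\mathbf P$ this is linear and invertible, so jointly it is an analytic diffeomorphism. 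In the coordinates $(\mathbf P,L,\widetilde{\mathbf Q},\widetilde{\mathbf K})$ we have $M_i=P_{i-1}^\top Q_iK_i^\top P_{i-1}=\widetilde Q_i\widetilde K_i^\top$. Hence $\pi$ becomes
\[
(\mathbf P,L,\widetilde{\mathbf Q},\widetilde{\mathbf K})\mapsto\bigl(L,(\widetilde Q_i\widetilde K_i^\top)_{i=1}^D\bigr),
\]
which does not depend on $\mathbf P$.

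\emph{Step 2: straighten $\Psi$.} Since $d\Psi$ is injective at the marked tuple $\eta_0$, the constant rank (immersion) theorem applies in the analytic category. It gives an analytic diffeomorphism germ $\Phi$ of the target such that $\Phi\circ\Psi(\eta)=(\eta,0)$ near $\eta_0$. Here the target is a finite-dimensional space: the jet coefficients, or the finitely many ordinary-input evaluations. $\Psi$ is analytic because $T_N$ is polynomial and input evaluations are analytic in $\eta$. Composing Step 1 with this chart gives a source change and a target change after which $\Psi\circ\pi$ becomes exactly the stated normal form
\[
(\mathbf P,L,\widetilde{\mathbf Q},\widetilde{\mathbf K})\mapsto\bigl(L,(\widetilde Q_i\widetilde K_i^\top)_{i=1}^D,0\bigr).
\]

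\emph{Step 3: read off the fiber germ.} The fiber of the normal form through the marked point has three parts. The coordinate $L$ is fixed. Each pair $(\widetilde Q_i,\widetilde K_i)$ ranges over the fiber germ of the matrix-multiplication map $(Q,K)\mapsto QK^\top$ at the marked pair. The block $\mathbf P$ is completely free, and it contributes $(D-1)d^2$ smooth directions. The fiber germ is therefore the stated product, because the source coordinate change is a diffeomorphism and carries fibers onto fibers.

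The only delicate point is Step 1. One must check that the substitution $\widetilde Q_i=P_{i-1}^\top Q_i$ is a diffeomorphism jointly in all variables, and not merely for each fixed $\mathbf P$. This holds because the inverse $Q_i=P_{i-1}^{-\top}\widetilde Q_i$ is analytic on $\operatorname{GL}_d$. The argument also needs the marked effective tuple to lie where the hypothesis on $d\Psi$ holds. This is exactly what the assumption on $\Psi$ supplies, for example through Theorem~\ref{thm-jets} or the two-plane result, so no further work is required there.
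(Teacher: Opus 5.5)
Your proposal is correct and follows essentially the same route as the paper. You use the same source chart $(\mathbf P,L,\widetilde{\mathbf Q},\widetilde{\mathbf K})$, in which $\pi$ no longer depends on $\mathbf P$, and the same immersion-type straightening of $\Psi$. The paper writes that target chart explicitly, as $y\mapsto(\Psi_I^{-1}(y_I),\,y_{I^c}-\Psi_{I^c}(\Psi_I^{-1}(y_I)))$ built from a nonsingular minor. It also checks that the residual ideal, not only the zero-set germ, is unchanged by this analytic change of observation coordinates.
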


      Here, we have source coordinates $\widetilde{Q_i} =
      P^\top_{i-1}Q_i, \widetilde{K_i} = P_{i-1}^\top K_i$,
      and inverse $V_i = P_iP_{i-1}^{-1}$. At
      full-column-rank factor pairs, matrix-product geometry
      \citep{pepinlehalleur2024} gives 
      \begin{equation}\label{eq:native-rank}
\operatorname{rank} D(\Psi \circ \pi) = d^2 +
      \sum_i r_i(2d - r_i).
\end{equation} The remaining directions are
      native gauges. We provide the proof and limits at
      collapse of this proposition in Appendix~\ref{appendix-geometry}. 

      If we have a marked tuple with $L_* \neq 0$,
      nonzero logarithmic slopes, nonskew scores before a
      layer $s$, and $M_s^* = 0$, the effective fiber enlarges. For a suitable finite
      observation $\Psi$, the residual ideal is generated
      by the listed matrix entries: 
      \begin{equation}\label{eq:first-collapse-ideal}
\begin{aligned}
          I_{\eta_*}(\Psi) &= (\Psi - \Psi(\eta_*)) \\ 
                           &= (L - L_*, (M_i - M_i^*)_{i<s},
                           M_s). 
        \end{aligned}
\end{equation} 

      For width-one softmax factors $M_i = q_ik_i^\top$ and
      fixed identity values, the fiber over uniform
      averaging at the all-zero factors has the ideal 
      \begin{equation}\label{eq:serial-collapse-components}
(q_1k_1^\top) = (q_1) \cap (k_1),
\end{equation} with exactly
      two irreducible components $q_1 = 0$ or $k_1 = 0$. For
      $D \geq 2$, independent one-layer fibers have $2^D$
      components. Here collapse at the first layer leaves every
      later factor free. The same ideal
      also gives a depth-independent local learning
      coefficient $\lambda = \frac{d}{2}$ under the
      fixed-value loss model of Appendix~\ref{remark:collapse-learning}. 

      \section{Numerical weakness of exact identification}\label{sec:spectrum}

      The previous Taylor observations establish exact local
      recovery. We now study how sensitively outputs on
      finitely many inputs depend on the native parameters.
      Choose a finite input bank $\mathcal B$ in a
      neighborhood of zero where the network maps are
      defined. We collect their outputs with the finite
      observation map: 
      \begin{equation}\label{eq:finite-observation}
\mathcal O_{\mathcal B}(\theta) = \bigl(
        F_{\pi(\theta)}(X)\bigr)_{X \in \mathcal B}.
\end{equation}
      It has differential $D_{\theta}\mathcal O_{\mathcal
      B}$, which maps perturbations of the native weights to
      first-order changes in the observed outputs. Let
      $\theta(t)$ be an analytic parameter path, and write 
      \begin{equation}\label{eq:native-jacobian}
J_{\mathcal B}(t) = \left.D_{\theta}\mathcal
          O_{\mathcal B}\right|_{\theta=\theta(t)}.
\end{equation}

      The above differential is taken in all the native
      entries (including values) before restricting to the
      path $\theta(t)$. In particular, when differentiating
      along the path $\theta(t)$ itself, 
      \begin{equation}\label{eq:path-derivative}
\frac{\mathrm{d}}{\mathrm{d}t}\mathcal O_{\mathcal
        B}(\theta(t)) = J_{\mathcal B}(t)\dot\theta(t),
\end{equation}
        we test only the direction $\dot\theta(t)$. 

        The positive singular values of $J_{\mathcal B}(t)$
        quantify the strength of the first-order output
        changes. We measure how sensitivity deteriorates as
        $t \to 0$. We say a singular value $\sigma(t)$ has
        \emph{contact order} $\beta$ if 
        \begin{equation}\label{eq:contact-order}
c|t|^\beta\leq \sigma(t) \leq C|t|^\beta \qquad
          (0 < |t| \ll 1)
\end{equation} for $c, C > 0$ constant.
          Contact order zero means that the singular value
          stays bounded away from zero, and larger orders
          imply faster vanishing. 

      \begin{theorem}[Complete native contact spectrum]\label{thm-spectrum}
        Under the model assumptions of Section~\ref{sec:model}, let
        $Q_i, K_i \in \mathbb R^{d\times r_i}$, $1 \leq r_i
        \leq d$, and let all value matrices be trainable and
        $d\times d$: $p = d$. Suppose the normalizers have
        common first nonconstant Taylor degree $k$.
        Consider the native parameter path $\theta(t)$ given
        by 
        \begin{equation}\label{eq:collapse-path}
Q_i = tU_i, \qquad K_i = tW_i, \qquad V_i = I_d.
\end{equation} Assume $U_i$ and $W_i$ are of full column rank
          and that $\sym(U_iW_i^\top) \neq 0$ for all
          layers.

          Then there exists a fixed finite input bank
          $\mathcal B$ such that for all sufficiently small
          $t \neq 0$, the Jacobian $J_{\mathcal B}(t)$ in
          all native parameters has positive singular values
          comparable to $1$ with multiplicity $d^2$ and to $|t|^{\beta_i}$
          with multiplicity $r_i(2d - r_i)$ for $1 \leq i
          \leq D$, where 
          \begin{equation}\label{eq:native-orders}
\beta_i = 2k \cdot 3^{i-1} - 1.
\end{equation} 

          The exact source kernel dimension is 
          \begin{equation}\label{eq:native-kernel}
\zeta = (D-1)d^2 + \sum_{i=1}^D r_i^2.
\end{equation}
        \end{theorem}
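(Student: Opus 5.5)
The plan is to factor the native Jacobian through the effective parametrization. Then I would reduce to a graded, block-triangular effective Jacobian whose blocks scale with distinct powers of $t$.

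\textbf{Kernel from gauges.} Along the path \eqref{eq:collapse-path} we have $P_i=I$, $L=I$ and $M_i=t^2N_i$ with $N_i=U_iW_i^\top$. Following Proposition~\ref{prop-native-geometry}, I would change source coordinates to $(\mathbf P,L,\widetilde{\mathbf Q},\widetilde{\mathbf K})$, where $\widetilde Q_i=P_{i-1}^\top Q_i$ and $\widetilde K_i=P_{i-1}^\top K_i$. This change is analytic with analytic inverse near $V_i=I$, so it distorts singular values only by bounded factors. In these coordinates $\mathcal O_{\mathcal B}\circ\pi$ does not depend on $\mathbf P$, which gives $(D-1)d^2$ kernel directions. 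The factor gauges \eqref{eq:factor-gauge} give $r_i^2$ more directions per layer. This accounts for $\zeta$, and the kernel is exactly this once the remaining directions are shown to have positive singular values.

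\textbf{Scaling of the remaining blocks.} The $L$-directions act on consensus inputs as in \eqref{eq:consensus-readout}. They therefore give $d^2$ singular values of order $1$, provided $\mathcal B$ contains enough consensus inputs. For layer $i$, the differential $(\delta\widetilde Q_i,\delta\widetilde K_i)\mapsto t(\delta\widetilde Q_iW_i^\top+U_i\delta\widetilde K_i^\top)$ has rank $r_i(2d-r_i)$ on full-rank factors, with all singular values comparable to $|t|$. It remains to show that $D_{M_i}\mathcal O_{\mathcal B}$ at $\eta(t)$, restricted to the tangent space $T_i$ of the rank-$r_i$ variety at $N_i$, has singular values comparable to $|t|^{2k3^{i-1}-2}$.

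To get this exponent I would track orders in $t$ on a fixed bank. Write $\phi_j=\exp(b_js^k+\dots)$, so the scores have size $t^2$ and weights deviate from uniform by $O(t^{2k})$. If the incoming contrast has order $t^{e}$, then the derivative of the layer-$i$ output in $M_i$ is of order $t^{2(k-1)}\cdot t^{2e}$: one factor $ks^{k-1}$, one contrast in the score difference, and one contrast in the weighted token difference. The outgoing contrast has order $t^{2k+3e}$. This gives the recurrence $e_0=0$, $e_i=3e_{i-1}+2k$, so $e_{i-1}=k(3^{i-1}-1)$ and the derivative order is $2k-2+2e_{i-1}=2k3^{i-1}-2$. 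Multiplying by the factor-level $t$ gives $\beta_i$. Later layers transmit mean changes with a unit-order Jacobian and contrast changes only at higher order. So, as in \eqref{eq:telescoping-mean}, the perturbation from layer $i$ reaches the output with its leading term undisturbed.

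\textbf{Main obstacle: the lower bound.} I expect the hard part to be proving that each leading coefficient $\Lambda_i:T_i\to\mathbb R^{\mathcal B}$ (the $t^{\beta_i}$-coefficient) is injective. Injectivity must cover all of $T_i$, skew directions included, not merely the symmetric part. I would first try to choose $\mathcal B$ as a union of multidirectional probes, in the spirit of the two-plane argument after Theorem~\ref{thm-jets}. Each $\Lambda_i$ is analytic in the probe points, and the hypothesis $\sym N_j\neq0$ keeps the prefix amplitudes nonzero as germs (cf.\ \eqref{eq:prefix-contrast}). So injectivity on one open set of probes, obtained from the mean signature $H$ with all bilinear entries of $\delta M_i$ exposed, yields a finite bank by Noetherian stabilization. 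Given injective $\Lambda_i$ and the strictly increasing orders $\beta_1<\cdots<\beta_D$, a standard graded singular-value lemma finishes the count. In a basis ordered by degree, $J_{\mathcal B}(t)\,\mathrm{diag}(t^{-\beta})$ converges to a block-triangular matrix with injective diagonal blocks. A Schur-complement argument then gives singular values comparable to $|t|^{\beta_i}$ with the stated multiplicities, and no others besides the kernel.
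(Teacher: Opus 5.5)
Your upper-bound bookkeeping ($e_i=3e_{i-1}+2k$, derivative order $2k-2+2e_{i-1}$, times the factor-level $t$) agrees with the paper. So do your coordinate change that frees the value prefixes and your factor-gauge kernel count. The gap is in the last step.

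Injectivity of each leading block $\Lambda_i$ on the bank, together with distinct orders $0<\beta_1<\cdots<\beta_D$, does not give the claimed spectrum. What you need is that the limiting divided frame $E(0)=[\Lambda_0\,|\,\Lambda_1\,|\cdots|\,\Lambda_D]$ has full column rank. In other words, the images of the value group and of all layer groups must be jointly independent in $\mathbb R^{\mathcal B}$. Your ``standard graded lemma'' is false without this. For example, $J(t)=[v,\;tv+t^2w]$ with $v\perp w$ has nonzero leading blocks at orders $0$ and $1$, but its singular values have orders $0$ and $2$. Nor is $E(0)$ block-triangular in any evident way. Its rows are point evaluations on $\mathcal B$ and carry no degree grading. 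A bank chosen to make each $\Lambda_i$ injective separately need not separate the layers from one another.

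The paper closes this with the homogeneity identity $F_{I,t^2\mathbf B}(X)=t^{-1}F_{I,\mathbf B}(tX)$. After dividing by $t^{\beta_i}$ and setting $t=0$, the layer-$i$ derivative becomes exactly the homogeneous input-degree-$N_i$ part of $D_{B_i}F_{I,\mathbf B}(X)[T_i]$, with $N_i=\beta_i+2$. The value group becomes $\delta L\,\bar x\,\mathbf 1^\top$, of degree $1$, and $1<N_1<\cdots<N_D$.

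Homogeneous polynomials of distinct degrees are linearly independent. Hence per-group injectivity upgrades to joint independence of all leading functions. The paper obtains per-group injectivity from the line-probe coefficient $A_{i-1}^2z\,DP_k(B_i)[T_i]$. Only after this is the finite-evaluation construction applied, once, to the joint span. That produces a single bank with $E(0)$ of full column rank, and analytic row reduction then yields the exact Smith factors and the singular-value comparisons.

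You could instead get triangularity from the spread-order grading behind the telescoping identity. In that case you would still have to build the bank for the joint function space, for instance with enough spread values on each line to invert a Vandermonde system, not layer by layer.

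Two smaller remarks. First, line probes already expose skew directions. $DP_k(M)[T]=0$ forces $\operatorname{sym}T=0$ (set $z=m$), and then $(m^\top Mm)^{k-1}m^\top Tz=0$ gives $T=0$. Second, the multidirectional two-plane/covariance probes you suggest would not fit these hypotheses. They require $g_i(0)\ne0$, which fails for $k\ge2$, and they require nondegenerate prefix compressions, which can fail for rank-one prefix scores.
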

\begin{figure*}[t]
\centering
\includegraphics[width=0.80\textwidth]{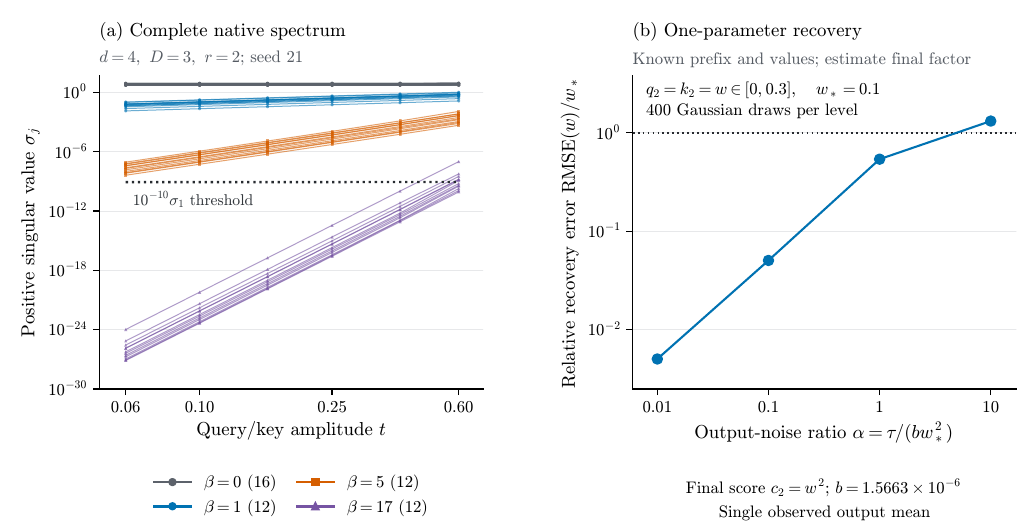}
\caption{\textbf{Exact rank and recovery under noise.}
(a) Positive native Jacobian singular values at 120-digit
precision, compared with the cutoff $10^{-10}\sigma_1$.
(b) Relative error in recovering the final nonnegative
factor from one noisy output mean with a known prefix,
using 400 trials per noise level. Here $b=mz^2$, where
$m$ and $z$ are the known prefix mean and contrast.
Full protocols are given in Appendix~\ref{appendix-experiments}.}
\label{fig:spectra}
\end{figure*}

        On complements to the exact gauge directions, divide
        each derivative group by its predicted power of $t$
        and take $t=0$. Any bank separating these
        limiting functions gives the conclusion, with constants
        depending on $\mathcal B$ and $\theta(t)$.
        Appendix~\ref{appendix-d} gives the equivalent Smith factors
        and heterogeneous-degree formula.

        For softmax with $d=4$, $D=3$, and $r_i=2$, the 96 native
        directions give 16 singular values of order zero, twelve each
        of orders 1, 5, 17, and 44 exact zeros. The zeros represent
        query/key factor gauges and intermediate value-coordinate
        freedom. Positive-order bands measure weakening sensitivity.

        The multiplicity $r_i(2d-r_i)$ is the dimension of the
        rank-$r_i$ matrix stratum. At full-column-rank factor pairs,
        the infinitesimal factor gauges are
        \begin{equation}\label{eq:infinitesimal-factor-gauge}
\begin{gathered}
 (\delta Q_i,\delta K_i)=(Q_iB_i,-K_iB_i^\top),\\
 B_i\in\mathbb R^{r_i\times r_i}.
\end{gathered}
\end{equation} and thus they have $r_i^2$ independent
          kernel directions per layer. 

          Along $\theta(t)$, Smith reduction over the ring of
          convergent series $\mathbb R\{t\}$
          refines functional dimension by recording the contact
          order of each positive singular-value band.

          These orders arise from cubic contrast propagation and
          differentiation of query/key factors. Appendix~\ref{appendix-d}
          proves the matching upper and lower bounds, while
          Appendix~\ref{appendix-c} constructs a finite separating bank.
          Efficient bank design remains open.

          Input scaling relates $T_N$ to $\mathcal O_{\mathcal B}$:
          with identity values and fixed scores $\mathbf B$,
          \begin{equation}\label{eq:scaling}
F_{t^2 \mathbf B}(X) = t^{-1}F_{\mathbf B}(tX),
            \quad t\neq 0.
\end{equation} 

          For the corresponding layer, $N_i = \beta_i + 2$.
          Note $t^2$ is a common inverse-temperature scale
          in softmax. 

The same spectrum quantifies local statistical information
on the fixed input bank. With independent Gaussian output
noise of variance $\tau^2$, the Fisher information is
\citep{rothenberg1971}
\begin{equation}\label{eq:fisher-information}
\mathcal I_{\mathcal B}(t)
=\tau^{-2}J_{\mathcal B}(t)^\top J_{\mathcal B}(t).
\end{equation}
Its positive eigenvalues have scales $\tau^{-2}$ and
$\tau^{-2}|t|^{2\beta_i}$, with the same multiplicities as
in Theorem~\ref{thm-spectrum}. Exact gauges give zero information, while identifiable
directions can carry arbitrarily little as the factors
approach collapse. Thus a small measured output change can
reflect either redundancy or weak local sensitivity. An
audit using such changes as evidence about internal
parameters must account for its input bank and noise level.

\section{Numerical evidence}\label{sec:experiments}

We test the predicted contact orders and their consequences
for numerical rank and recovery under noise. Full protocols and the
scalar plot are in Appendix~\ref{appendix-experiments}.

\paragraph{Depth-dependent powers.}
Depth-four scalar softmax at 110 digits gives final-pair
slopes $(0.999994,\allowbreak 5.002211,\allowbreak 17.001118,\allowbreak 53.000562)$ at
$t=0.04,0.02$, with maximum error below $0.0023$ from
$(1,5,17,53)$. The weakest singular value at $t=0.02$ is
$3.86\times10^{-92}$.

\paragraph{The complete matrix spectrum.}
For $n=d=4$, $D=3$, $r_i=2$, and 48 inputs (seed 21),
we get $J\in\mathbb R^{768\times96}$.
Under a cutoff of $10^{-10}\sigma_1$, float64 automatic
differentiation (AD) reports rank 40 at small scales
versus the exact 52. At 120 digits, the analytic Jacobian
resolves all twelve weak positive values
(Figure~\ref{fig:spectra}(a)).

\begin{center}
\small
\begin{tabular}{@{}>{\raggedright\arraybackslash}p{0.51\linewidth}r@{}}
\toprule
Check & Result \\
\midrule
Relative Frobenius error vs.\ AD & $\le1.47\times10^{-16}$ \\
Weak-band final-pair slopes & $[16.9943,17.0148]$ \\
Max.\ weak-space angle sine, $t=0.06$ & $5.21\times10^{-20}$ \\
90/120-digit relative change & $\le1.31\times10^{-35}$ \\
Smallest positive value, \mbox{$t=0.06$} & $7.58\times10^{-28}$ \\
\bottomrule
\end{tabular}
\end{center}

We compare the analytic Jacobian with AD where float64
arithmetic remains reliable. To identify the weakest band,
we project the final-layer derivative's image orthogonally
to the value and earlier-layer derivative images. The
small angle in the table shows that this image nearly
coincides with the weakest left singular subspace.
At $t=0.06$, the same cutoff still gives rank 40 at high
precision, since the twelve weak values lie below it.

\paragraph{One unknown parameter under noise.}
In Figure~\ref{fig:spectra}(b), all weights are known except
the final query/key factor $q_2=k_2=w\geq0$. We add
Gaussian noise to the output mean and estimate $w$ by
inverting the mean response, clipping observations outside
its attainable range to the endpoints. As the noise
standard deviation increases from $1.57\times10^{-10}$ to
$1.57\times10^{-7}$, the empirical relative error
$\operatorname{RMSE}(\hat w)/w_*$ rises from $0.0050$ to
$1.31$. Thus recovery from noisy observations can still be
difficult when the parameter is uniquely determined.

\section{Related work}\label{sec:related}

Classical neural-network identifiability recovers weights
from the realized function under assumptions about the
architecture and activation
\citep{sussmann1992,albertini1993,fefferman1994}.
\citet{vlacic2021} relate complete functional equivalence
to affine symmetries of the activation function. For ReLU
networks, parameter identifiability and functional dimension
distinguish parameter count from locally independent
functional directions \citep{bonapellissier2023,grigsby2025}.

Neuroalgebraic geometry also studies the parameter-to-function
map. \citet{kileel2019} analyze functional-variety dimension
for polynomial networks. \citet{shahverdi2025} establish
generic identification up to filter rescaling for monomial
convolutional networks. \citet{alexandr2026} derive coefficient
invariants for shallow unnormalized cubic lightning attention,
and \citet{pepinlehalleur2024} study matrix-multiplication
fibers through quiver representations. Our local normal form
extends these geometric descriptions to normalized attention
where finite observations give local effective coordinates.
At collapse, one layer can erase all later scores, so the
network fiber need not be a product of individual layer fibers.

For attention, \citet{brunner2020} study nonuniqueness of
attention weights at a fixed input, whereas
\citet{henry2025} formulate effective identification from
the entire function. \citet{meloux2025} study identifiability of explanations,
finding nonunique circuits and causal alignments in
small MLPs trained on Boolean tasks.
Query algorithms recover single-head
attention regressors \citep{bhattamishra2026} and canonical
multi-head representations \citep{kim2026} from real-valued
token queries.

\citet{dong2021} establish cubic contrast contraction and
doubly exponential rank collapse in pure attention.
\citet{noci2022} connect collapse to vanishing query/key
gradients, and \citet{geshkovski2023} study long-time token
clustering as particle interaction dynamics with fixed
weights. When a layer receives nearly identical tokens,
score changes can become difficult to distinguish. We
quantify this sensitivity loss at fixed depth as the
query/key factors approach zero. Contraction supplies upper
bounds, while matching lower bounds require a finite input
bank separating the divided derivative directions.

Future work includes the nonanalytic class
of the conjecture of Henry--Marchetti--Kohn, deeper softmax fibers with
singular mixed prefixes at three or more tokens, and general
exceptional fibers for zero-slope normalizers. It also
remains to study how these local product descriptions fit
together where the effective differential loses rank. Skip connections can counteract collapse
\citep{dong2021}, and causal and sparse masks can change
its rate \citep{wu2024}. Extending the native-spectrum
analysis to these architectures and multi-head attention
requires tracking their contrast dynamics. Since our experiments use fixed input sets and
parameter paths, the prevalence and consequences of weak
sensitivity in trained networks remain to be studied.
We also ask which internal observations or interventions
distinguish realizations within the same input--output fiber.
\label{main-text-end}
\newpage
\bibliography{main}
\newpage 
\appendix

\section{Identification}\label{appendix-a}

Throughout the appendix, we set $h_i=\log\phi_i$ and $g_i=h_i'$. We drop the index when it is clear we are discussing one layer.

We use the same factorization as introduced previously:
\[
 \Theta_{\mathrm{nat}}\xrightarrow{\pi}\Theta_{\mathrm{eff}}
 \xrightarrow{\mathcal R}\mathbb R\{X\}^{p\times n},\qquad
 \theta\longmapsto\eta=(L,\mathbf M)\longmapsto[F_\eta]_0.
\]
Here $\Theta_{\mathrm{nat}}$ is the space of native weights for the fixed architecture, and $\Theta_{\mathrm{eff}}=\mathbb R^{p\times d}\times(\mathbb R^{d\times d})^D$ is the ambient effective space. The effective tuples realizable by native weights form the image $\pi(\Theta_{\mathrm{nat}})$. We compute $\mathcal R^{-1}(\mathcal R(\eta))$ by comparing an arbitrary $\widetilde\eta$ with a marked tuple satisfying the hypotheses of Theorem~\ref{thm-identification}.

\subsection{Analytic setting and a one-layer expansion}\label{analytic-setting-and-a-one-layer-expansion}

Throughout, tokens are columns, output coordinates and token positions are labeled, and \[
S_M^\phi(X)_a=\frac{\sum_b\phi(x_a^\top Mx_b)x_b}{\sum_b\phi(x_a^\top Mx_b)}.
\] 

The normalizer above is positive, nonconstant, and real analytic, and normalized such that \(\phi(0)=1\). Since the normalization denominator equals \(n\) at $X=0$, it is a unit in the local analytic ring. The layer maps and their finite compositions are therefore jointly analytic in parameters and inputs near zero. In particular, we may differentiate their convergent expansions in the parameters and extract input coefficients in either order. We will use the following two operations in an analytic germ ring $\mathcal A$:
\[
 u(0)\ne0\ \Longrightarrow\ u^{-1}\in\mathcal A,
 \qquad ab=0,\ a\not\equiv0\ \Longrightarrow\ b=0.
\]
The first operation inverts a unit. The second cancels a nonzero germ because $\mathcal A$ is an integral domain. In particular, the factor $a$ may vanish at the origin. We use this cancellation to compare exact functions. Quantitative sensitivity requires the separate estimates in Appendix~\ref{appendix-d}. All function identities below are identities of input germs. For globally positive analytic normalizers, the identity theorem extends them to the connected full input domain. 

For any token matrix $Y$, we define its mean and contrast by
\[
 \mu(Y)=n^{-1}Y\mathbf1,\qquad
 \Delta(Y)=Y-\mu(Y)\mathbf1^\top.
\]
For two tokens, these are the mean $(y_++y_-)/2$ and the deviations $\pm(y_+-y_-)/2$. Since the attention weights sum to one, we have
\[
 S_M^\phi(m\mathbf1^\top)=m\mathbf1^\top,\qquad
 \{x_a\}\subset A\ \Longrightarrow\ \{S_M^\phi(X)_a\}\subset A
\]
for all affine subspaces $A$. On the line \(m+\mathbb Rz\), write \(x_a=m+s_az\), \(\sum_as_a=0\), and set \[
q=m^\top Mm,\quad u=z^\top Mm,\quad v=m^\top Mz,\quad w=z^\top Mz,
\quad \mu_j=n^{-1}\sum_a s_a^j.
\] Here $\mu_j$ is the $j$th moment of the centered scalar coordinates. The score in row \(a\) and column \(b\) is \(q+s_au+s_b(v+s_aw)\). Thus, setting \(q_a=q+s_au\) and \(v_a=v+s_aw\), we can write the output as $S_M^\phi(X)_a=m+T_az$, where \[
T_a=\frac{\sum_b s_b\phi(q_a+s_bv_a)}{\sum_b\phi(q_a+s_bv_a)}.
\] Expanding the numerator and denominator and using \(\mu_1=0\), we get \[
T_a=\mu_2g(q_a)v_a+
\frac{\mu_3}{2}\frac{\phi''(q_a)}{\phi(q_a)}v_a^2+O(\|s\|^4),
\qquad g=(\log\phi)'.
\] The degree-three part of the second summand is $\frac{\mu_3}{2}\frac{\phi^{\prime\prime}(q)}{\phi(q)}v^2$, which is the same for every query row. It therefore disappears when we subtract the output mean. To compute the remaining row dependence, we expand 
\[
g(q+s_au)(v+s_aw)
=g(q)v+s_a\{g(q)w+g'(q)uv\}+O(s_a^2),
\]
\[
 \begin{aligned}
 \mu(S_M^\phi(X))-m&=z\mu_2\mathsf H_M+O(\|s\|^3),\\
 \Delta(S_M^\phi(X))_a&=z\mu_2s_a\mathsf K_M+O(\|s\|^4),
 \end{aligned}
\]
where the signatures (these are the $H_M,K_M$ of the main text) are defined as 
\[
\mathsf H_M=g(m^\top Mm)m^\top Mz,
\qquad
\mathsf K_M=g(m^\top Mm)z^\top Mz
+g'(m^\top Mm)(z^\top Mm)(m^\top Mz).
\tag{A.1}\label{eq:app-a1}
\] Now, for an analytic incoming mean tending to \(m\), we assume
\[
 s_a=\varepsilon^ra c_a+O(\varepsilon^{r+1}),\qquad
 n^{-1}\sum_ac_a=0,\qquad n^{-1}\sum_ac_a^2=1.
\]
Then we have 
\[
 \begin{aligned}
 \mu(S_M^\phi(X))-\mu(X)&=\varepsilon^{2r}a^2z\mathsf H_M+O(\varepsilon^{2r+1}),\\
 \Delta(S_M^\phi(X))_a&=\varepsilon^{3r}a^3c_az\mathsf K_M+O(\varepsilon^{3r+1}).
 \end{aligned}
\]
These formulas give lower bounds on the orders: when a leading coefficient vanishes, the corresponding order increases. The remainders and their parameter derivatives are analytic and locally uniform. 
\subsection{A signature determines one score}\label{a-signature-determines-one-score}

Suppose $\mathsf H_M=\mathsf H_N$. 
If $g(0)\ne0$,
then the terms linear in $m$ give
\[
0=[\deg_m=1](\mathsf H_M-\mathsf H_N)
  =g(0)m^\top(M-N)z.
\]
Since this holds for all $m,z$, we obtain $M=N$.
Thus the signature determines every score in this case,
including those that are zero and skew-symmetric.

Now suppose $g(0)=0$ and that $\operatorname{sym}M\ne0$,
with $N$ still arbitrary. Let
\[
\phi(s)=1+bs^k+O(s^{k+1}),
\qquad b\ne0,\quad k\ge2,
\]
and set $q_M=m^\top Mm$, $q_N=m^\top Nm$.

On the diagonal $z=m$, the signature identity becomes
\[
 \chi(q_M)=\chi(q_N),\qquad
 \chi(s)=sg(s)=kb s^k+O(s^{k+1}).
\]
Consider its first homogeneous term, which gives the polynomial identity
\[
 q_M^k=q_N^k,\qquad
 0=q_N^k-q_M^k=\prod_{\zeta^k=1}(q_N-\zeta q_M)
 \quad\text{in }\mathbb C[m].
\]

Since $\mathbb C[m]$ is an integral domain, one factor
vanishes identically:
\[
q_N=\zeta q_M
\qquad\text{for some fixed }\zeta\text{ with }\zeta^k=1.
\]
Both quadratic forms have real coefficients, and $q_M\ne0$, so $\zeta$ is real. Thus
\[
q_N=\sigma q_M,\qquad
\begin{cases}
\sigma=1, & k\text{ odd},\\
\sigma\in\{1,-1\}, & k\text{ even}.
\end{cases}
\]
In particular,
\[
\operatorname{sym}N
=\sigma\,\operatorname{sym}M\ne0.
\]

For the positive branch,
\[
 q_N=q_M\ \Longrightarrow\
 g(q_M)m^\top(M-N)z=0\ \Longrightarrow\ m^\top(M-N)z=0
 \ \Longrightarrow\ M=N.
\]
Here $g(q_M)$ is a nonzero germ because its leading term is $kbq_M^{k-1}$. For the negative branch, $q_N=-q_M$ gives $\chi(q_M)=\chi(-q_M)$. Choose $v$ with $a:=q_M(v)\ne0$. Since $q_M(tv)=at^2$, we have 
\[
\chi(at^2)=\chi(-at^2)
\qquad\text{for all sufficiently small }t.
\]
The analytic function $F(s):=\chi(s)-\chi(-s)$ thus
has zeros accumulating at $0$, so we have
\[
F\equiv0,\qquad
0=\chi(s)-\chi(-s)=s\bigl(g(s)+g(-s)\bigr).
\]
Thus $g(s)=-g(-s)$ near $0$.

It follows that
\[
\frac{d}{ds}\{\log\phi(s)-\log\phi(-s)\}=g(s)+g(-s)=0.
\] The difference is zero at the origin, so $\phi$ is even. Substituting $q_N=-q_M$ and $g(-q_M)=-g(q_M)$ into the signature identity gives
\[
g(q_M)m^\top(M+N)z=0,
\]
and cancellation yields $N=-M$. 

Conversely, evenness preserves every attention weight:
\[
 \phi(s)=\phi(-s)\ \Longrightarrow\ S_M^\phi=S_{-M}^\phi.
\]
Thus the one-score signature fiber is $\{M\}$ when the normalizer is not even and $\{M,-M\}$ for an even normalizer.

\subsection{Layer isolation and proof of Theorem~\ref{thm-identification}}\label{layer-isolation-and-proof-of-theorem-1}

Take the line probes $x_a=m+\varepsilon c_az$ from Section~\ref{analytic-setting-and-a-one-layer-expansion}. We write $\mathsf K_i=\mathsf K_{M_i}$ for the contrast signature of layer $i$ and define the successive amplitudes by $a_0=1$ and \[
a_i=a_{i-1}^3\mathsf K_i.
\] We first show that these amplitudes remain nonzero under the symmetric-part hypothesis. Suppose \(\operatorname{sym}M_i\ne0\), and restrict the contrast signature to $z=m$. With \(s=m^\top M_im\) and \(\phi_i(s)=1+b_i s^{k_i}+\cdots\), we obtain \[
\mathsf K_i(m,m)=g_i(s)s+g_i'(s)s^2
=b_i k_i^2s^{k_i}+O(s^{k_i+1})\ne0.
\] Thus $\mathsf K_i$ is a nonzero germ. Since the analytic germ ring has no zero divisors, the amplitude recursion now gives $a_i\not\equiv0$ at every such layer. Writing $X^{(i)}=S_{M_i}^{\phi_i}(X^{(i-1)})$, $X^{(0)}=X$, and $\mu_i=\mu(X^{(i)})$, the one-layer expansion gives 
\[
 \Delta(X^{(i)})_a=\varepsilon^{3^i}a_ic_az+O(\varepsilon^{3^i+1}),\qquad
 a_i\not\equiv0,\qquad \mu_0=m.
\]

Once layers $1,\ldots,j-1$ have been recovered, their
output mean $\mu_{j-1}$ is known. Subtracting its readout
from the observed output mean gives
\[
\operatorname{mean}F-L\mu_{j-1}
=L(\mu_D-\mu_{j-1})
=L\sum_{i=j}^D(\mu_i-\mu_{i-1}).
\]
The incoming contrast at layer $i$ has order $3^{i-1}$, so its mean increment satisfies
\[
\operatorname{ord}_\varepsilon(\mu_i-\mu_{i-1})\ge2\cdot3^{i-1}.
\]

For $i>j$ this bound is strictly larger than $2\cdot3^{j-1}$. Thus only layer $j$ can contribute to that coefficient, giving 
\[
[\varepsilon^{2\cdot3^{j-1}}]
\bigl(\operatorname{mean}F-L\mu_{j-1}\bigr)
=a_{j-1}^2Lz\mathsf H_j.
\tag{A.2}\label{eq:app-a2}
\] For any competing score with the same recovered prefix, the one-layer expansion gives $\operatorname{ord}_\varepsilon\widetilde\Delta^{(j)}\ge3^j$. Its later mean increments therefore start at order at least $2\cdot3^j$. A vanishing leading contrast coefficient can only increase this order, so \eqref{eq:app-a2} also isolates the competing next signature. 

Let $\mathcal R(\widetilde\eta)=\mathcal R(\eta)$. At consensus, we have \(F(m,\ldots,m)=(Lm,\ldots,Lm)\), and hence $\widetilde L=L$. Suppose that the first $j-1$ layer maps agree, allowing the exact even-normalizer signs: then their prefix means and amplitudes coincide. Choose a nonzero row $\ell$ of $L$. Subtracting their instances of \eqref{eq:app-a2} gives in the ring $\mathbb R\{m,z\}$
\[
 \underbrace{a_{j-1}^2(\ell z)}_{\not\equiv0}
 (\mathsf H_{M_j}-\mathsf H_{\widetilde M_j})=0
 \quad\Longrightarrow\quad
 \mathsf H_{M_j}=\mathsf H_{\widetilde M_j}.
\]
Section~\ref{a-signature-determines-one-score} now identifies $\widetilde M_j$ and shows that its symmetric part is nonzero. Any remaining sign preserves the layer map, so the two realizations have the same prefix through layer $j$. This completes the induction. Conversely, independently changing the signs at even normalizers preserves every layer map and hence the full function. We therefore obtain the fiber in Theorem~\ref{thm-identification}: 
\[
 \mathcal R^{-1}(\mathcal R(\eta))
 =\{L\}\times\prod_{i=1}^D\Gamma_iM_i,
 \qquad
 \Gamma_i=\begin{cases}\{1\},&\phi_i\text{ not even},\\
 \{1,-1\},&\phi_i\text{ even}.
 \end{cases}
\]

If $g_D(0)\ne0$, the term $[\deg_m=1]\mathsf H_{M_D}=g_D(0)m^\top M_Dz$ identifies any final score. Since there is no further layer to recover, we need no nonzero contrast after layer $D$, and $M_D$ may be arbitrary. 
Note that the separate zero-readout fiber is
\[
 L=0\quad\Longrightarrow\quad
 \mathcal R^{-1}(0)=\{0\}\times(\mathbb R^{d\times d})^D.
\]

Since the output on a consensus input $(m, \ldots, m)$ is $(\widetilde Lm,\ldots,\widetilde Lm)$, we note that realizing the
zero function requires $\widetilde Lm=0$ for every $m$
near zero, and thus $\widetilde L=0$.

\subsection{Native reduction and genericity}\label{native-reduction-and-genericity}

\begin{corollary}[Generic native identification]\label{cor-generic}

Fix all positive token widths and positive head widths in a pure serial native architecture, and let query, key, and value entries vary freely. Suppose we have known normalizers, as in  Theorem~\ref{thm-identification}. The native parameters with effective tuples having $L = 0$ or $\sym{M_i} = 0$ for some $i$ form a proper real algebraic subset. 
Outside this subset, Theorem~\ref{thm-identification} gives the complete effective fiber. Thus in every such architecture, positive injective real-analytic normalizers have generically singleton effective fibers. 
\end{corollary}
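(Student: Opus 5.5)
The plan is to show that the bad set is the zero locus of polynomials in the native entries, and that this locus is proper because each defining condition fails at one explicit native point. By Equation~\eqref{eq:reduction}, the effective tuple $\eta=\pi(\theta)$ has entries that are polynomials in the native entries: $L=P_D=V_D\cdots V_1$ and $M_i=P_{i-1}^\top Q_iK_i^\top P_{i-1}$. The bad set is therefore
\[
\mathcal Z=\{\theta: L(\theta)=0\}\cup\bigcup_{i=1}^D\{\theta:\sym M_i(\theta)=0\},
\]
a finite union of common zero sets of polynomial entries, hence a real algebraic subset. To prove it is proper, I will exhibit a single native point outside every piece. A finite union of proper algebraic subsets of an irreducible affine space is still proper, since its complement is a nonempty Zariski open dense set. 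So it suffices to find, for each piece, some $\theta$ avoiding it; in fact one common $\theta$ will avoid all of them.

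For the construction, let the token widths be $d_0=d,d_1,\ldots,d_D=p$, all positive, so that $V_i\in\mathbb R^{d_i\times d_{i-1}}$, and let the head widths $r_i\ge1$ be positive. I take each $V_i$ to be the rectangular matrix with a single $1$ in entry $(1,1)$ and zeros elsewhere, written $V_i=e_1e_1^\top$ in the appropriate sizes. Then $P_j=e_1e_1^\top\in\mathbb R^{d_j\times d}$ for every $j\ge1$, and $P_0=I$, so $L=P_D\neq0$. For the scores I take $Q_i=K_i=e_1e_1^\top\in\mathbb R^{d_{i-1}\times r_i}$, which gives $A_i=e_1e_1^\top$. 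Then
\[
M_i=P_{i-1}^\top A_iP_{i-1}=e_1e_1^\top\in\mathbb R^{d\times d},
\]
which is symmetric and nonzero. Hence this $\theta$ lies outside $\mathcal Z$, and $\mathcal Z$ is proper.

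For the remaining claims, at any $\theta\notin\mathcal Z$ the tuple $\pi(\theta)$ satisfies the hypotheses of Theorem~\ref{thm-identification}, so that theorem gives the complete effective fiber. If moreover every $\phi_i$ is injective, then no $\phi_i$ is even, because evenness would give $\phi_i(s)=\phi_i(-s)$ with $s\neq -s$. The fiber is then the singleton $\{\pi(\theta)\}$. Since $\mathcal Z$ is proper algebraic, it is closed, nowhere dense, and of Lebesgue measure zero, which is the genericity claim.

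The main obstacle is bookkeeping rather than mathematics: the shapes are rectangular, with heads possibly of width $r_i<d_{i-1}$ and values possibly non-square, so the witness must be built with compatible rank-one matrices, as above, instead of identities. One should also be careful to state properness over $\mathbb R$: a nonzero real polynomial does not vanish on all of $\mathbb R^N$, so the complement of $\mathcal Z$ is a nonempty open set on which each defining polynomial condition holds.
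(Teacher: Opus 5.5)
Your proposal is correct and follows the paper's proof essentially verbatim. You use polynomiality of $\pi$ to make the bad set the algebraic preimage of $\{L=0\}\cup\bigcup_i\{\sym M_i=0\}$. You then take the same rank-one witness ($V_i$ and $Q_i=K_i$ with a single $1$ in the $(1,1)$ entry, giving $L\neq0$ and every $M_i=e_1e_1^\top$) to show it is proper for rectangular values and head widths $r_i<d_{i-1}$. Your remark that injectivity rules out evenness, so that Theorem~\ref{thm-identification} yields a singleton fiber, matches the paper's own observation.
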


For a native layer \(X\mapsto V_iS^{\phi_i}_{A_i}(X)\), where \(A_i=Q_iK_i^\top\), we note that bilinearity gives \[
S^{\phi}_{A}(PX)=P S^{\phi}_{P^\top AP}(X),
\] including for $P$ rectangular or singular. When \(P_0=I\) and \(P_i=V_i\cdots V_1\), the effective parameters are \[
L=P_D,\qquad M_i=P_{i-1}^\top A_iP_{i-1}.
\tag{A.3}\label{eq:app-a3}
\] Thus the map $\pi$ is polynomial, and the exceptional native set is the algebraic preimage
\[
 \mathcal Z_{\mathrm{nat}}=\pi^{-1}(\mathcal Z_{\mathrm{eff}}),\qquad
 \mathcal Z_{\mathrm{eff}}=\{L=0\}\cup\bigcup_i\{\operatorname{sym}M_i=0\}.
\]

Let $d_0,\ldots,d_D$ be the layer dimensions, and let
$r_i\ge1$ be the prescribed head widths. We write
$e^{(i)}$ for the first standard basis vector of
$\mathbb R^{d_i}$. Choose
\[
V_i=e^{(i)}(e^{(i-1)})^\top,
\]
such that each value map preserves the first coordinate, 
and sends every other coordinate to zero. Set
\[
Q_i=K_i=
\begin{bmatrix}e^{(i-1)}&0&\cdots&0\end{bmatrix}
\in\mathbb R^{d_{i-1}\times r_i}.
\]
Then we have
\[
A_i=Q_iK_i^\top
=e^{(i-1)}(e^{(i-1)})^\top.
\]
Composition of the value maps gives
\[
P_i=V_i\cdots V_1=e^{(i)}(e^{(0)})^\top
\qquad(i\ge1),
\]
and thus
\[
\begin{aligned}
L&=e^{(D)}(e^{(0)})^\top\ne0,\\
M_i&=P_{i-1}^\top A_iP_{i-1}
=e^{(0)}(e^{(0)})^\top
\qquad(1\le i\le D).
\end{aligned}
\]
Every $M_i$ is symmetric and nonzero. Thus the constructed native tuple $\theta_*$ satisfies
\[
\pi(\theta_*)\notin\mathcal Z_{\mathrm{eff}}.
\]
Since $\pi$ is polynomial, we note that
$\mathcal Z_{\mathrm{nat}}=\pi^{-1}(\mathcal Z_{\mathrm{eff}})$
is therefore a proper algebraic subset, and its complement
is Zariski open dense and has full Lebesgue measure. 

On the complement, Theorem~\ref{thm-identification}
gives the complete effective fiber.

The reduction to effective parameters uses only the products
\[
L=P_D,\qquad M_i=P_{i-1}^\top Q_iK_i^\top P_{i-1},
\]
which remain well defined even for rectangular or rank-deficient
value maps. The native coordinate change in
Proposition~\ref{prop-native-geometry} additionally requires
square invertible value maps, since recovering the original
factors from the transformed factors requires the inverses in
\[
Q_i=P_{i-1}^{-\top}\widetilde Q_i,\qquad
K_i=P_{i-1}^{-\top}\widetilde K_i.
\]
The rank-one construction above establishes genericity
without requiring this additional invertibility.

\section{Exceptional fibers}\label{appendix-b}

We use the consensus subspace $\mathcal C$ and collapse loci $ Z_i$ defined in Section~\ref{sec:collapse}. A score belongs to the collapse locus precisely when all output tokens agree for every input near zero:
\[
 M\in Z_i
 \quad\Longleftrightarrow\quad \Delta\circ S_M^{\phi_i}\equiv0
 \quad\Longleftrightarrow\quad \operatorname{im}S_M^{\phi_i}\subset\mathcal C.
\]
Since every later layer fixes $\mathcal C$, note that changing the suffix after a collapsing layer preserves the realized function. This proves one inclusion in~\eqref{eq:first-collapse-fiber}.

For the reverse inclusion, we first show how a nonzero incoming contrast identifies the next score. We will then verify that every noncollapsing prefix supplies such a contrast in each regime of Theorem~\ref{thm-exceptions}. Suppose $\mathcal R(\widetilde\eta)=\mathcal R(\eta)$ and $L\ne0$. Consensus gives $\widetilde L=L$. Once the first $i-1$ layer maps have been recovered, write their outgoing centered tokens on a line probe as
\[
 \Delta_a^{(i-1)}=\varepsilon^r a c_a z+O(\varepsilon^{r+1}),
 \qquad a\not\equiv0.
\]
For the next layer, we write
\[
 \mathsf H_{i,M}(m,z)=h_i'(m^\top Mm)\,m^\top Mz.
\]
Then subtraction of the exact prefixes gives
\[
 [\varepsilon^{2r}]
 \bigl(\overline F_\eta-L\mu_{i-1}\bigr)
 =a^2Lz\,\mathsf H_{i,M_i}.
\]
The outgoing contrast has order at least $3r$, so every further mean increment has an order greater than $2r$. This remains true for degenerate competing next layers as well. Choose a nonzero row $\ell$ of $L$. Then by equality of the realized functions, we have 
\[
 a^2(\ell z)
 \bigl(\mathsf H_{i,M_i}-\mathsf H_{i,\widetilde M_i}\bigr)=0.
\]
The factors $a$ and $\ell z$ are nonzero germs, so cancellation gives equality of the two signatures. Their terms of degree one in $m$ are
\[
 [\deg_m=1]\mathsf H_{i,M}=h_i'(0)m^\top Mz.
\]
Since we have $h_i'(0)\ne0$, this identifies $M_i=\widetilde M_i$, including the case where $M_i$ collapses. For $L=0$, inputs at consensus force $\widetilde L=0$, and thus every score tuple realizes the zero function.

It remains to determine $ Z_i$ and to show that each preceding noncollapsing layer transmits a nonzero contrast germ. We prove this in the following two subsections. 

\subsection{Nonzero logarithmic slope and curvature}\label{nonzero-logarithmic-slope-and-curvature}

Assume we have $h_i'(0)h_i''(0)\ne0$ at every layer. For a single layer, by Appendix~\ref{appendix-a} we have that
\[
 \mathsf K_M(m,z)=h'(m^\top Mm)z^\top Mz
 +h''(m^\top Mm)(z^\top Mm)(m^\top Mz).
\]
Suppose $\operatorname{sym}M\ne0$. Then we have 
\[
 \mathsf K_M(0,z)=h'(0)z^\top Mz\not\equiv0.
\]
In the case that $M\ne0$ is skew, we have $m^\top Mm=z^\top Mz=0$ and $z^\top Mm=-m^\top Mz$, and thus
\[
 \mathsf K_M(m,z)=-h''(0)(m^\top Mz)^2\not\equiv0.
\]
Thus every nonzero score transmits a nonzero contrast germ, but
\[
 S_0^\phi(X)=\mu(X)\mathbf1^\top.
\]
Then $ Z_i=\{0\}$. This identifies every score through the first zero, establishing~\eqref{eq:first-collapse-fiber} in Theorem~\ref{thm-exceptions}(\ref{thm-exceptions:curvature}).

\subsection{Two tokens and the exact collapsing condition}\label{two-tokens-and-the-exact-collapsing-condition}

We write the two inputs as \(m\pm z\). For a skew score, let \(b=m^\top Mz\) and \(h=\log\phi\). Then the four scores are \(0,-2b,2b,0\), and from direct normalization we get \[
y_+=m-z\tanh(h(-2b)/2),\qquad
y_-=m+z\tanh(h(2b)/2).
\] Thus \[
\frac{y_+-y_-}{2}=zD_h(2b),\qquad
D_h(u)=-\frac12\{\tanh(h(u)/2)+\tanh(h(-u)/2)\}.
\tag{B.1}\label{eq:app-b1}
\] Since $\tanh$ is odd, the two terms cancel exactly when $\tanh(h(u)/2)=\tanh(-h(-u)/2)$. Local injectivity then gives
\[
 D_h\equiv0\ \Longleftrightarrow\ h(u)+h(-u)=0
 \ \Longleftrightarrow\ h\text{ is odd}.
\]
For a nonzero skew $M$, choose $u,v$ with $u^\top Mv\ne0$. Then $m=tu$ and $z=tv$ give $m^\top Mz=t^2u^\top Mv$. These values accumulate at zero, so analyticity makes vanishing of the contrast equivalent to $D_h\equiv0$. For a nonskew score with $h'(0)\ne0$, the constant term in $m$, $h'(0)z^\top Mz$, of $\mathsf K_M$ is nonzero. Then
\[
 \Delta\circ S_M^\phi\equiv0
 \ \Longleftrightarrow\
 M=0\ \text{ or }\ (M\ne0,\ M^\top=-M,\ h\text{ odd}),
\]
Combining these cases, we obtain the collapse locus in Theorem~\ref{thm-exceptions}(\ref{thm-exceptions:two-tokens}): $Z_i$ is the space of skew matrices when $h_i$ is odd, and $\{0\}$ otherwise.

Suppose $h$ is not odd, and let $c_\ell s^\ell$ be its first nonzero even term. Write $h=h_{\mathrm{odd}}+h_{\mathrm{even}}$. In \eqref{eq:app-b1}, the contributions from $h_{\mathrm{odd}}$ cancel. Taylor expansion in $h_{\mathrm{even}}/2$, using $\tanh^{\prime}(h_{\mathrm{odd}}/2)=1+O(s^2)$, then gives \[
D_h(u)=-\frac{c_\ell}{2}u^\ell+O(u^{\ell+2}).
\tag{B.2}\label{eq:app-b2}
\] For incoming half-spread \(\varepsilon^ra z+O(\varepsilon^{r+1})\) and mean tending to \(m\), the centered output has order \((\ell+1)r\), with nonzero coefficient \[
-2^{\ell-1}c_\ell a^{\ell+1}(m^\top Mz)^\ell z.
\] At a noncollapsing layer, the outgoing centered order is thus
\[
 r'=\begin{cases}3r,&\operatorname{sym}M\ne0,\\
 (\ell+1)r,&M\ne0\text{ skew},\end{cases}
 \qquad r'>r,
\]
with a nonzero analytic coefficient. The leading mean increment is still \(\varepsilon^{2r}a^2z\mathsf H_M\), and thus the argument continues through all noncollapsing prefix layers and identifies the first collapsing score. 
After the first collapsing layer, every later input lies in $\mathcal C$, so every later score is free. The same order comparison applies to an arbitrary competing next layer: faster contraction only delays its later mean increments. This proves~\eqref{eq:first-collapse-fiber} in Theorem~\ref{thm-exceptions}(\ref{thm-exceptions:two-tokens}).

\subsection{Softmax collapse for every token count}\label{softmax-collapse-for-every-token-count}

For softmax, let \(x_a=m+\varepsilon y_a\), \(\sum_a y_a=0\), and \(C=n^{-1}\sum_a y_ay_a^\top\). The row-constant scores all cancel, and the remaining logits are \[
\varepsilon m^\top My_b+\varepsilon^2 y_a^\top My_b.
\] We expand the normalized exponentials through order two, noting that the first term depending on the query index is \(n^{-1}\varepsilon^2 y_a^\top My_b\). The remaining terms of that order are query independent, and thus multiplication by the values and averaging gives  \[
\operatorname{mean}S_M(X)=m+\varepsilon^2CM^\top m+O(\varepsilon^3),
\] \[
S_M(X)_a-\operatorname{mean}S_M(X)
=\varepsilon^3CM^\top y_a+O(\varepsilon^4).
\tag{B.3}\label{eq:app-b3}
\] In the case that \(\operatorname{sym}M\ne0\), choose a collinear cloud in a direction \(z\) with \(z^\top Mz\ne0\), which means that the centered coefficient is nonzero. If \(M\ne0\) is skew and \(n\ge3\), we choose \(y,y'\) with \(\kappa=y^\top My'\ne0\), and use centered vectors \(y_1=y,y_2=y',y_3=-y-y'\), with any other remaining vectors being zero. Then \[
CM^\top y_1=\frac\kappa n(y+2y')\ne0.
\] With the two-token calculation, this gives the complete single-layer image criterion 
\[
 \operatorname{im}S_M^{\exp}\subset\mathcal C
 \quad\Longleftrightarrow\quad
 \begin{cases}M=0,&n\ge3,\\M^\top=-M,&n=2.\end{cases}
\]

To see why the skew case requires more than a line probe, set $y_a=c_az$ and $\gamma=m^\top Mz$. The logits become $\varepsilon\gamma(c_b-c_a)$, and the term $-\varepsilon\gamma c_a$ cancels within each row:
\[
 A_{ab}=\frac{e^{\varepsilon\gamma c_b}}{\sum_ce^{\varepsilon\gamma c_c}},
\]
Thus every row has the same weights and every collinear cloud collapses. The three-token construction above retains contrast by using two independent directions. Section~\ref{recovery-from-generic-two-plane-restrictions} uses these directions throughout a stack to recover scores after nonzero skew prefixes.

\subsection{Why nonzero slope cannot simply be removed}\label{why-nonzero-slope-cannot-simply-be-removed}

Take two tokens, \(\phi_1(s)=\exp(s^2)\), a nonzero skew \(M_1\), and a second score \(M_2=0\). Here \(h_1(s)=s^2\) is even. Formula \eqref{eq:app-b1} gives \[
y_+=m-z\tanh(2b^2),\qquad
y_-=m+z\tanh(2b^2).
\] Hence
\[
 \mu(S_{M_1}^{\phi_1}(X))=m,\qquad
 \Delta(S_{M_1}^{\phi_1}(X))\not\equiv0,\qquad
 S_0^{\phi_2}S_{M_1}^{\phi_1}(X)=m\mathbf1^\top.
\]
The first layer is noncollapsing, yet its score is invisible after the second layer. The two normalizer conditions serve different purposes:
\[
 \phi\text{ even}\ \Longrightarrow\ S_M^\phi=S_{-M}^\phi,
 \qquad \log\phi\text{ odd}\ \Longrightarrow\ \text{two-token skew collapse}.
\]
Here $h_1^{\prime}(0)=0$: the first score affects the contrast but not the mean, and the next layer removes that contrast. Thus the first-collapse classification depends on the nonzero-slope assumption, which lets the mean signature identify the collapsing score itself.

\section{Finite jets}\label{appendix-c}

The map $T_N=j_0^N\circ\mathcal R$ records the Taylor coefficients of the realized function through total input degree $N$ at $X=0$. We ask both when these coefficients give local coordinates and when they determine the full realization fiber:
\[
 \begin{array}{ll}
 \text{local coordinates:}&\ker DT_N(\eta)=0,\\[2pt]
 \text{complete-fiber determination:}&T_N^{-1}(T_N(\eta))=\mathcal R^{-1}(\mathcal R(\eta)).
 \end{array}
\]
The first gives a local inverse onto the image. To identify the complete fiber, we must also distinguish any distant sign branches. We establish the local order, then determine when the entire finite-observation fiber is already the realization fiber. Here we scale the whole input as $X=\rho Y$, whereas the parameter $\varepsilon$ in Appendix~\ref{appendix-a} scaled only the contrast about a fixed mean. We use $\alpha_i$ for total input order and $a_i$ for the analytic spread amplitude.

\subsection{An order estimate for arbitrary clouds}\label{an-order-estimate-for-arbitrary-clouds}

\begin{proposition}[Heterogeneous local jet orders]\label{prop-heterogeneous-jets}

Let \(\log\phi_i(s)=b_i s^{k_i}+O(s^{k_i+1})\), \(b_i\ne0\), and define \[
\alpha_0=1,\qquad \alpha_i=3\alpha_{i-1}+2k_i-2,
\qquad N_i=2\alpha_{i-1}+2k_i-1.
\] If \(L\ne0\) and \(\operatorname{sym}M_i\ne0\) for every layer, the degree-\(N_D\) zero-input jet is an immersion in \((L,\mathbf M)\) and has a local analytic inverse onto its image. Every lower jet is independent of \(M_D\). If \(k_D=1\), the final matrix may be arbitrary. For common \(k_i=k\), \(N_D=2k3^{D-1}+1\).

\end{proposition}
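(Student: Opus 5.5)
The plan is to rescale the whole input as $X=\rho Y$ and track orders in $\rho$ layer by layer, as the line-probe argument of Appendix~\ref{appendix-a} does in $\varepsilon$, but now with the mean itself of order $\rho$. The claim to prove by induction on $i$ is the following. Uniformly for effective parameters in a neighbourhood of $\eta$, the layer-$i$ output has mean $\mu_i=O(\rho)$ and centered tokens $\Delta^{(i)}=O(\rho^{\alpha_i})$. Moreover, the mean increment $\mu_i-\mu_{i-1}$ and every $M_i$-dependence of later quantities begin at order $\rho^{N_i}$.

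For the inductive step, write $\log\phi_i=b_is^{k_i}+\cdots$ and note that scores $x_a^\top M x_b$ are $O(\rho^2)$.
\begin{itemize}
\item The key-dependent part of the logit in row $a$ comes from $h_i'(x_a^\top M\mu)\,x_a^\top M\delta_b$. This has order $\rho^{2k_i-2}\cdot\rho^{1+\alpha_{i-1}}$.
\item Multiplying by the contrast values $\delta_b=O(\rho^{\alpha_{i-1}})$ gives a mean increment of order $2\alpha_{i-1}+2k_i-1=N_i$.
\item The part of the weights that varies with the query row needs one more factor of the contrast, through $\delta_a^\top M\delta_b$ or a derivative in the query mean of the $h_i'$ factor. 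This has order $\rho^{2k_i-2+2\alpha_{i-1}}$.
\item Multiplying by $\delta_b$ gives outgoing contrast of order $3\alpha_{i-1}+2k_i-2=\alpha_i$.
\item Every term involving $M_i$ carries at least the factor producing $\rho^{N_i}$. This holds both in the mean and in $\Delta^{(i)}$, since $\alpha_i\ge N_i$ when $\alpha_{i-1}\ge1$.
\end{itemize}
These are lower bounds valid for arbitrary, possibly degenerate, parameters, because vanishing coefficients only raise orders. Consequently every jet of degree $<N_D$ is independent of $M_D$, which is the sharpness claim. Solving the recursion with $\alpha_i+k-1=3(\alpha_{i-1}+k-1)$ gives $\alpha_i=k3^i-k+1$, hence $N_D=2k3^{D-1}+1$.

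For the immersion, I will order the jet coordinates by degree and view $DT_{N_D}(\eta)$ as block triangular.
\begin{itemize}
\item $L$ is read off at degree $1$ from consensus inputs.
\item By the order estimate, $M_j$ with $j>i$ does not enter the degree-$N_i$ coefficient of the output mean.
\item The degree-$N_i$ homogeneous coefficient of $\overline{F}-L\mu_{i-1}$ depends on $M_i$ only through the leading signature. Specializing to line probes $Y=m\mathbf1^\top+c z^\top$ (in the scaled sense), it equals $a_{i-1}^2\,(Lz)$ times the leading homogeneous part of $\mathsf H_{M_i}$, namely $k_ib_i(m^\top M_im)^{k_i-1}m^\top M_iz$, with $a_{i-1}$ the leading nonzero amplitude homogeneous polynomial.
\end{itemize}
It therefore suffices to show that the differential in $M$ of $\Phi(M)=(m^\top Mm)^{k-1}m^\top Mz$ is injective whenever $\sym M\ne0$ (or whenever $k=1$, with no condition on $M$). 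The differential in direction $E$ is
\[
(k-1)(m^\top Mm)^{k-2}(m^\top Em)(m^\top Mz)+(m^\top Mm)^{k-1}m^\top Ez .
\]
Suppose this vanishes. Setting $z=m$ gives $k(m^\top Mm)^{k-1}(m^\top Em)=0$, so $m^\top Em\equiv0$ by the integral-domain cancellation of Section~\ref{a-signature-determines-one-score}. The identity then reduces to $(m^\top Mm)^{k-1}m^\top Ez=0$, so $E=0$. The diagonal blocks are thus injective, and with them the whole differential. The nonzero amplitudes $a_{i-1}$ come from the contrast signature computation in Section~\ref{layer-isolation-and-proof-of-theorem-1}, restricted to leading homogeneous parts.

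An injective differential of the analytic, in fact polynomial, map $T_{N_D}$ gives a local analytic inverse onto its image by the immersion theorem. When $k_D=1$, the final block is $m^\top Ez$, which is injective for any $M_D$, so no condition on the final matrix is needed.

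I expect the main obstacle to be the leading-coefficient bookkeeping in the first step. One must check that the degree-$N_i$ coefficient really is the homogenized signature with no cross-contamination: from subleading terms of earlier layers, which is handled by exact subtraction of $L\mu_{i-1}$ as in \eqref{eq:app-a2}, and from the $\rho$-dependence of the mean. One must also check that the order bounds hold uniformly in a parameter neighbourhood, so that they can be differentiated. I would handle both by treating everything as jointly analytic in $(\rho,Y,\eta)$, as justified in Appendix~\ref{appendix-a}, and extracting homogeneous components.
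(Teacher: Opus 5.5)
Your proposal is correct and follows the paper's proof essentially step for step. The $\rho$-scaled order recursion is the same (row-constant cancellation, one extra contrast factor for query dependence), and it yields $\alpha_i$, $N_i$ and the independence of lower jets from $M_D$; the immersion comes from the same triangular line-probe argument with the identical two-step cancellation for $DP_k(M)$.

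The one point to state precisely is the bookkeeping you flagged yourself. On line probes the total-degree-$N_i$ part can mix several spread degrees; for example, with $k_1=2$ the layer-one contrast already has components of spread degree $3$, $4$ and $5$ in total degree $5$. So, as the paper does, extract spread degree $2\cdot3^{i-1}$ first and then its lowest mean degree. Then use $\kappa_i\neq0$ (the lowest mean-degree part of $\mathsf K_i$, in mean degree $2k_i-2$) to see that the lowest mean degree of $a_{i-1}$ is exactly $\alpha_{i-1}-3^{i-1}$, so that this component sits in total degree exactly $N_i$.
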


Let a layer receive mean \(\bar x=O(\rho)\) and centered tokens \(\Delta_a=O(\rho^\alpha)\), \(\alpha\ge1\). Write \[
x_a^\top Mx_b=\bar s_a+\widetilde s_{ab},\qquad
\bar s_a=x_a^\top M\bar x=O(\rho^2),\quad
\widetilde s_{ab}=x_a^\top M\Delta_b=O(\rho^{\alpha+1}).
\] Since $\phi=e^h$, the factor $e^{h(\bar s_a)}$ is common to every weight in row $a$ and cancels in the normalization. Thus we may replace the log weights by $h(\bar s_a+\widetilde s_{ab})-h(\bar s_a)$. If $h(s)=bs^k+O(s^{k+1})$, every monomial in this difference satisfies, for $r\ge k$ and $1\le j\le r$,
\[
 (\bar s_a+\widetilde s_{ab})^r-\bar s_a^r
 =\sum_{j=1}^r\binom rj\bar s_a^{r-j}\widetilde s_{ab}^{\,j},\qquad
 2(r-j)+j(\alpha+1)\ge\alpha+2r-1.
\]
Writing $A_{ab}$ for the normalized attention weights, we therefore have $A_{ab}-1/n=O(\rho^{\alpha+2k-1})$. Since the centered tokens satisfy $\sum_b\Delta_b=0$,
\[
 x_a'-\bar x=\sum_b(A_{ab}-1/n)\Delta_b=O(\rho^{2\alpha+2k-1}).
\tag{C.1}\label{eq:app-c1}
\] To bound the contrast, compare output rows $a$ and $c$. Their score difference contains $x_a-x_c=\Delta_a-\Delta_c=O(\rho^\alpha)$ in place of an $O(\rho)$ query factor. Thus every row-dependent monomial gains $\alpha-1$ powers of $\rho$. The normalization denominators are analytic units, so the same gain holds for the normalized weights. Averaging these row differences gives centered output
\[
O(\rho^{3\alpha+2k-2}).
\tag{C.2}\label{eq:app-c2}
\] In both formulas, the output is the stated power of $\rho$ times an analytic function of $\rho$, the parameters, and the scaled input. Thus the bounds remain valid where a leading coefficient vanishes. The mean remains $O(\rho)$.

Starting from \(X=\rho Y\), the order propagation is
\[
 \alpha_i=3\alpha_{i-1}+2k_i-2,\qquad
 N_j=2\alpha_{j-1}+2k_j-1.
\]
Equation~\eqref{eq:app-c2} bounds every outgoing contrast by order $\alpha_i$, while \eqref{eq:app-c1} bounds the first change caused by $M_j$, with its prefix fixed, by order $N_j$. Bounded analytic derivatives of the suffix cannot lower that difference order. Therefore
\[
 N<N_D\quad\Longrightarrow\quad D_{M_D}T_N\equiv0.
\]

\subsection{Immersion and the sharp local order}\label{immersion-and-the-sharp-local-order}

The order bounds above apply to all inputs. To show that they are attained and identify the parameters, we return to the line probes of Appendix~\ref{appendix-a}. Let $v_i$ be the least degree in $m$ of the nonzero prefix amplitude $a_i$. If \(\operatorname{sym}M_i\ne0\), the least-mean-degree part of \(\mathsf K_i\) is \[
\kappa_i=b_i k_i\bigl[q_i^{k_i-1}z^\top M_iz
+(k_i-1)q_i^{k_i-2}(z^\top M_im)(m^\top M_iz)\bigr],
\quad q_i=m^\top M_im,
\] where the second summand is omitted for \(k_i=1\). At \(z=m\) it equals \(b_i k_i^2q_i^{k_i}\), so it is nonzero. Thus \[
v_0=0,\quad v_i=3v_{i-1}+2k_i-2,\quad
A_i=A_{i-1}^3\kappa_i\ne0,
\] where \(A_i\) is the lowest-mean-degree part of \(a_i\). Since \(a_i\) is homogeneous of degree \(3^i-1\) in \(z\), \(\alpha_i=3^i+v_i\). The part of \eqref{eq:app-a2} of mean degree \(2v_{j-1}+2k_j-1\) is \[
b_jk_j A_{j-1}^2 Lz\,P_{k_j}(M_j;m,z),
\quad
P_k(M;m,z)=(m^\top Mm)^{k-1}m^\top Mz.
\tag{C.3}\label{eq:app-c3}
\] Its spread degree is \(2\cdot3^{j-1}\), so its total input degree is exactly \(N_j\).

We first show that this coefficient detects every infinitesimal change in the current score. For $k>1$ and a matrix variation $T$, differentiating $P_k$ gives \[
DP_k(M)[T]=(k-1)(m^\top Mm)^{k-2}(m^\top Tm)(m^\top Mz)
+(m^\top Mm)^{k-1}m^\top Tz.
\] We compute its kernel by two polynomial cancellations:
\[
 DP_k(M)[T]=0
 \ \Longrightarrow\ k(m^\top Mm)^{k-1}m^\top Tm=0
 \ \Longrightarrow\ \operatorname{sym}T=0
 \ \Longrightarrow\ m^\top Tz=0
 \ \Longrightarrow\ T=0.
\]
Indeed, setting $z=m$ gives the first equality. Since $m^\top Mm$ is a nonzero polynomial, cancellation forces $m^\top Tm=0$, or $\operatorname{sym}T=0$. Substituting this into the full derivative leaves $(m^\top Mm)^{k-1}m^\top Tz=0$, and a second cancellation gives $T=0$. For \(k=1\), \(P_1=m^\top Mz\) is linear and injective at every base matrix.

Now let $\dot\eta=(\delta L,\delta M_1,\ldots,\delta M_D)$ lie in $\ker DT_{N_D}(\eta)$. The degree-one consensus output gives $\delta L=0$. Suppose inductively that $\delta M_i=0$ for $i<j$. Then the prefix coefficient $A_{j-1}$ has zero variation, and differentiating \eqref{eq:app-c3} gives
\[
 b_jk_jA_{j-1}^2(Lz)DP_{k_j}(M_j)[\delta M_j]=0
 \ \Longrightarrow\ DP_{k_j}(M_j)[\delta M_j]=0
 \ \Longrightarrow\ \delta M_j=0.
\]
To cancel $Lz$, we take a nonzero row $\ell$ of $L$. Then $A_{j-1}^2\ell z$ is a nonzero polynomial. Later layers have higher spread degree and do not enter this equation. Each step uses an available coefficient of $T_{N_D}$ because
\[
N_{j+1}-N_j=4\alpha_{j-1}+2k_j+2k_{j+1}-4>0.
\] The induction gives $\ker DT_{N_D}(\eta)=0$. We can therefore select as many observed Taylor coefficients as there are effective parameters so that their Jacobian is nonsingular at the marked tuple. Applying the analytic inverse-function theorem to these selected coefficients gives a local inverse onto the observation image. Equation~\eqref{eq:app-c1} shows that every lower jet is independent of $M_D$, proving sharpness. For $k_D=1$, the final derivative is injective at any matrix. For common $k$,
\[
 \alpha_i=k3^i+1-k,\qquad N_D=2k3^{D-1}+1.
\]
This proves Proposition~\ref{prop-heterogeneous-jets} and the local-order part of Theorem~\ref{thm-jets}.

\subsection{Nonzero slope: rational local inversion and uniform determination}\label{nonzero-slope-rational-local-inversion-and-uniform-determination}

If every \(g_i(0)\ne0\) and the prefix matrices have nonzero symmetric parts, the coefficient in \eqref{eq:app-a2} of mean degree one is \[
a_{j-1}(0,z)^2g_j(0)Lz\,(m^\top M_jz),
\qquad
a_{j-1}(0,z)=\prod_{i<j}\{g_i(0)z^\top M_iz\}^{3^{j-1-i}}.
\tag{C.4}\label{eq:app-c4}
\] Every prefix factor is a nonzero polynomial. Comparing two tuples and canceling that factor recovers each arbitrary next matrix at total degree $2\cdot3^{j-1}+1$. Thus, with $N_D=2\cdot3^{D-1}+1$,
\[
 T_{N_D}^{-1}(T_{N_D}(\eta))
 =\mathcal R^{-1}(\mathcal R(\eta))=\{\eta\},
\]
including an arbitrary final matrix.

We can make this local recovery explicit. Choose a nonzero row $l$ of $L$ and independent $z_1,\ldots,z_d$ satisfying
\[
 lz_b\ne0,\qquad z_b^\top M_iz_b\ne0\quad(i<D,\ 1\le b\le d).
\]
The product of the displayed factors is a nonzero polynomial. We can therefore choose each $z_b$ outside its zero set and outside the span of the vectors already chosen. Once chosen at the marked tuple, these directions remain fixed throughout the local inversion. Let $c_{j,b}$ be the column vector such that the selected output row of \eqref{eq:app-c4}, evaluated at $z_b$, is $m^\top c_{j,b}$. Then
\[
 M_jz_b=\frac{c_{j,b}}{a_{j-1}(0,z_b)^2g_j(0)(lz_b)},
\]
and
\[
M_j=(M_jz_1,\ldots,M_jz_d)(z_1,\ldots,z_d)^{-1}.
\] We first recover $L$ linearly from consensus and then apply these formulas successively to each layer. The Taylor and exact-prefix coefficients are polynomial in the parameters, with fixed normalizer coefficients as constants. Every displayed denominator is nonzero at the marked tuple and remains so nearby. The recursion therefore defines a rational map $G$ on an observation chart with
\[
 G\circ T_{N_D}=\operatorname{id}.
\]
This is the remaining claim of Theorem~\ref{thm-jets}.

\begin{corollary}[Uniform finite determination with nonzero curvature]\label{cor-full-jet}

If \(g_i(0)g_i'(0)\ne0\) for every layer, the zero-input jet through \[
N_D^{\mathrm{all}}=4\cdot3^{D-1}-1
\] determines the complete effective fiber at every tuple, including all zero-score and zero-readout exceptions. It is an injective immersion on the locus \(L\ne0\), \(M_i\ne0\) for \(i<D\). The displayed degree is sufficient for arbitrary \(n\), while optimality for noncollinear clouds is not asserted.

\end{corollary}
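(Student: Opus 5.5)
By Theorem~\ref{thm-exceptions}(\ref{thm-exceptions:curvature}), the hypothesis $g_i(0)g_i'(0)\neq0$ gives $Z_i=\{0\}$ at every layer. So the realization fiber is the first-zero fiber \eqref{eq:first-collapse-fiber}, or \eqref{eq:zero-readout-fiber} when $L=0$. By \eqref{eq:jet-fiber-inclusion}, it therefore suffices to show that $T_{N}$ with $N=N_D^{\mathrm{all}}$ already forces $\widetilde L=L$ and $\widetilde M_j=M_j$ for every $j\le s$, where $s$ is the first index with $M_s=0$ (or $s=D$). I would rerun the layer-isolation induction of Appendix~\ref{appendix-b} on the line probes $x_a=m+\varepsilon c_az$. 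The new ingredient is bookkeeping of the bidegree in $(m,\varepsilon)$. A monomial of input degree $e$ in $m$ and spread order $\varepsilon^{t}$ has total input degree $e+t$, so it is read off from $T_N$ as soon as $e+t\le N$. The readout comes from the degree-one consensus part, as in Appendix~\ref{appendix-a}. If $L=0$ we stop there; otherwise we fix a nonzero row $\ell$ of $L$.

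\textbf{Degree bookkeeping.} For each noncollapsing layer I take the lowest $m$-degree part $\kappa_i$ of the contrast signature $\mathsf K_{M_i}$ from \eqref{eq:app-a1}. If $\sym M_i\neq0$, then $\kappa_i=g_i(0)z^\top M_iz$, which has $m$-degree $0$. If $M_i\neq0$ is skew, the computation in Appendix~\ref{nonzero-logarithmic-slope-and-curvature} gives $\kappa_i=-g_i'(0)(m^\top M_iz)^2$, of $m$-degree $2$. In either case the lowest part $A_i=A_{i-1}^3\kappa_i$ of the amplitude $a_i$ is a nonzero polynomial. Its $m$-degree $v_i$ satisfies $v_i\le 3v_{i-1}+2$, hence $v_i\le 3^i-1$. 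Moreover, the incoming contrast at layer $j$ has exact spread order $3^{j-1}$. I then extract from the mean identity \eqref{eq:app-a2} the coefficient of $\varepsilon^{2\cdot3^{j-1}}$ in $m$-degree exactly $2v_{j-1}+1$. Because $\mathsf H_{M_j}=g_j(0)m^\top M_jz+(\text{higher } m\text{-degree})$, this coefficient equals
\[
A_{j-1}^2\,(Lz)\,g_j(0)\,m^\top M_jz .
\]
Its total degree is $2\cdot3^{j-1}+2v_{j-1}+1\le4\cdot3^{j-1}-1\le N$.

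\textbf{Induction and cancellation.} I would run the induction exactly as in Appendix~\ref{appendix-b}. Suppose the prefixes through layer $j-1$ agree, so the exact prefix mean can be subtracted and both tuples share the same $A_{j-1}$. A competing next layer, degenerate or not, sends contrast of spread order at least $3^j$. Its later mean increments therefore live at spread order above $2\cdot3^{j-1}$ and do not touch the chosen coefficient. Equating the bidegree-$(2v_{j-1}+1,\,2\cdot3^{j-1})$ coefficients of the two tuples gives
\[
A_{j-1}^2(\ell z)g_j(0)\,m^\top(M_j-\widetilde M_j)z=0 .
\]
Polynomial cancellation then yields $\widetilde M_j=M_j$, including the case $M_j=0$. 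Once $M_s=0$, every later score is free, which matches \eqref{eq:first-collapse-fiber}. For the immersion claim on the locus $L\neq0$, $M_i\neq0$ for $i<D$, I would differentiate the same fixed-bidegree coefficients in the parameters. Inductively the variation of $A_{j-1}$ vanishes, so only $A_{j-1}^2(\ell z)g_j(0)\,m^\top\delta M_jz$ survives, and it forces $\delta M_j=0$. Injectivity of the map follows from the fiber statement, since on this locus the fiber is the singleton $\{\eta\}$.

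\textbf{Main obstacle.} I expect the delicate step to be the rigorous justification that the fixed-bidegree coefficient isolates $\mathsf H_{M_j}$. In particular, the remainders at spread order $2\cdot3^{j-1}$ coming from the $O(\varepsilon^{3r+1})$ tails of the prefix must not contribute in $m$-degree $\le 2v_{j-1}+1$. I would first try to show this by exact prefix subtraction, which removes every prefix term identically, together with the observation that the layer-$j$ increment at spread order exactly $2\cdot3^{j-1}$ is determined by the leading incoming contrast $A$-coefficient times $\mathsf H$. Only after that would I verify the degree bound $v_i\le 3^i-1$ on that coefficient.
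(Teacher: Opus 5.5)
Your proposal is correct and follows the paper's argument essentially step for step. Both use Theorem~\ref{thm-exceptions}(\ref{thm-exceptions:curvature}) to reduce to the first-zero fiber, track the lowest-mean-degree amplitudes $v_i\le 3v_{i-1}+2\le 3^i-1$ with $\kappa_i=g_i(0)z^\top M_iz$ or $-g_i'(0)(m^\top M_iz)^2$, extract $A_{j-1}^2(Lz)g_j(0)\,m^\top M_jz$ at total degree $2\cdot3^{j-1}+2v_{j-1}+1\le 4\cdot3^{j-1}-1$, cancel, and differentiate for the immersion. The obstacle you flag does not arise: \eqref{eq:app-a2} is already an exact germ identity at spread order $2\cdot3^{j-1}$ with the full amplitude $a_{j-1}$, since all prefix tails lie at strictly higher spread order, so taking its lowest $m$-degree part afterwards is pure polynomial algebra.
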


Now suppose \(g_i(0)g_i'(0)\ne0\), permitting nonzero skew prefixes. The least mean degree of \(\mathsf K_i\) is zero at a nonskew score and two at a nonzero skew score, the latter because \(\mathsf K_i=-g_i'(0)(m^\top M_iz)^2\). Consequently \[
v_0=0,\qquad v_i=3v_{i-1}+2\mathbf1\{M_i\text{ is skew}\},
\qquad v_i\le3^i-1.
\] The lowest-mean-degree part of \eqref{eq:app-a2} identifies the next arbitrary matrix using \[
N_j^{\mathrm{ad}}=2\cdot3^{j-1}+2v_{j-1}+1
\le4\cdot3^{j-1}-1.
\tag{C.5}\label{eq:app-c5}
\] The coefficient is injective in the next matrix even at zero. Induction thus recovers the prefix through first zero, or every score if none occurs, and
\[
 T_{N_D^{\mathrm{all}}}^{-1}(T_{N_D^{\mathrm{all}}}(\eta))
 =\mathcal R^{-1}(\mathcal R(\eta)).
\]
For $L=0$, the degree-one consensus output forces every competing readout to vanish, giving the entire zero-function fiber. If $L\ne0$ and every prefix score is nonzero, differentiating the recovery equations identifies all variations in order, so $\ker DT_{N_D^{\mathrm{all}}}=0$. A nonsingular minor stays nonsingular nearby even if a skew score becomes nonskew. This proves Corollary~\ref{cor-full-jet}. In the next subsection, we determine the optimal order for two tokens.

\subsection{Exact adaptive order for two tokens}\label{exact-adaptive-order-for-two-tokens}

\begin{proposition}[Exact adaptive jet order for two tokens]\label{prop-adaptive}

Assume \(n=2\), \(L\ne0\), and \(h_i'(0)\ne0\) for all \(i\). Let \(s\) be the first collapsing layer if one occurs before the end, and put \(s=D\) otherwise. Along the noncollapsing prefix set \(\alpha_0=1\) and \[
\alpha_i=\begin{cases}
3\alpha_{i-1},&\operatorname{sym}M_i\ne0,\\
(\ell_i+1)\alpha_{i-1}+\ell_i,
&M_i\ne0\text{ skew},
\end{cases}
\] where \(\ell_i\) is the first nonzero even degree of \(h_i\). This degree is finite in the second case because the skew prefix is assumed to be noncollapsing. Then \(2\alpha_{s-1}+1\) is the least zero-input jet order that determines the complete fiber at the marked tuple. If no prefix layer collapses, that jet is also an immersion. In the nonzero-curvature case, \(\alpha_i=3^i+v_i\), where \[
v_0=0,\qquad v_i=3v_{i-1}+2\mathbf1\{M_i\text{ is skew}\}.
\]

\end{proposition}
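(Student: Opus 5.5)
The proof has two halves: sufficiency, that the degree-$(2\alpha_{s-1}+1)$ jet determines the fiber of Theorem~\ref{thm-exceptions}, and minimality, that some lower jet fails to. Both halves use the two-token line structure. With $n=2$, every input is a line probe $m\pm z$. I will rescale $X=\rho Y$, with mean $\rho m$ and half-spread $\rho z$, and track total input degree rather than the spread parameter of Appendix~\ref{appendix-a}.

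\emph{Contrast orders.} First I verify by induction that the outgoing contrast of layer $i$ is exactly of total degree $\alpha_i$ with a nonzero polynomial leading coefficient, and that the mean stays $O(\rho)$.
\begin{itemize}
\item If $\operatorname{sym}M_i\ne0$: since $h_i'(0)\ne0$, the least-mean-degree part of $\mathsf K_i$ is $h_i'(0)z^\top M_iz$. Its total degree is $2$, so the contrast goes from degree $\alpha$ to $3\alpha$. This is the $k=1$ case of Appendix~\ref{appendix-c}, where $\alpha_i=3\alpha_{i-1}+2k-2$.
\item If $M_i\ne0$ is skew: by \eqref{eq:app-b2} the centered output is $-2^{\ell-1}c_\ell a^{\ell+1}(m^\top M_iz)^{\ell}z$. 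The factor $a^{\ell+1}$ contributes $(\ell+1)\alpha_{i-1}$ and $(m^\top M_iz)^\ell$ contributes $2\ell$. Subtracting the $\ell$ powers that were already counted in $\alpha_{i-1}$ should give $(\ell_i+1)\alpha_{i-1}+\ell_i$. I will check this bookkeeping carefully, since the half-spread amplitude $a$ already carries $z$-degree.
\end{itemize}
The coefficient is nonzero as a polynomial, so the contrast orders are exact and not merely lower bounds.

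\emph{Sufficiency.} I follow the layer-isolation argument of Appendix~\ref{appendix-b}, now in total degree. Fix layer $j\le s$ with the prefix through $j-1$ already recovered. The mean increment of layer $j$ has lowest part $a^2Lz\,h_j'(0)\,m^\top M_jz$, of total degree $2\alpha_{j-1}+1$. Every later layer contributes only from degree $2\alpha_j+1$ on, which is larger. The same bound holds for competing tuples, because faster contraction only delays their increments. Cancelling the nonzero polynomial $a^2\ell z$ recovers $M_j$, including at a collapsing $M_s$, since $h_s'(0)\ne0$. The degrees $2\alpha_{j-1}+1$ increase in $j$, so the jet of degree $2\alpha_{s-1}+1$ recovers $L,M_1,\ldots,M_s$. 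Theorem~\ref{thm-exceptions} then says everything after $M_s$ is free. Hence the finite-jet fiber equals the realization fiber. When $s=D$, differentiating these linear recovery equations gives injectivity of $DT$, by the same induction as in Appendix~\ref{appendix-c}.

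\emph{Minimality.} This is the main obstacle. I need some $\widetilde\eta$ outside the fiber with the same jet of degree $2\alpha_{s-1}$. I will take $\widetilde M_s=M_s+T$ for small $T$ and, when $s<D$, also change the suffix.
\begin{itemize}
\item By \eqref{eq:app-c1}--\eqref{eq:app-c2}, adapted to the exact prefix contrast order $\alpha_{s-1}$, the change caused by $M_s$ starts at degree at least $2\alpha_{s-1}+1$.
\item Changing the suffix after layer $s$ starts even later, whether or not $\widetilde M_s$ collapses.
\end{itemize}
So the lower jet is independent of $M_s$. The delicate point is that $\widetilde M_s$ must lie outside the fiber. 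This is automatic, because Theorem~\ref{thm-exceptions} says $M_s$ is identifiable even when it collapses. So any $T\ne0$ works, and the obstacle reduces to checking that the order bounds use only the exact prefix order $\alpha_{s-1}$, which the first step supplied.

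Finally, in the nonzero-curvature case, a skew layer has $\ell_i=2$, so $\alpha_i=3\alpha_{i-1}+2$. Substituting $\alpha_i=3^i+v_i$ reproduces the recursion for $v_i$.
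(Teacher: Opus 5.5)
Your proposal is correct and follows essentially the same route as the paper. It uses the same exact contrast recursion in total degree, with the skew order $(\ell+1)\alpha+\ell$ coming from a bilinear argument of order $\rho^{\alpha+1}$. It recovers each score from the same degree-$(2\alpha_{j-1}+1)$ mean coefficient $g_j(0)a_{j-1}^2Lz\,(m^\top M_jz)$, and it proves sharpness the same way, by perturbing $M_s$.

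There are two small differences. First, you certify that the perturbed tuple leaves the fiber through the identifiability of $M_s$ in Theorem~\ref{thm-exceptions}, whereas the paper uses the nonzero change of that coefficient; both are valid. Second, the bound on competitors' later increments should rest on the uniform at-least-cubic contraction of (C.2), which gives order at least $6\alpha_{j-1}+1$. It should not be stated as $2\alpha_j+1$, because the competing $\widetilde M_j$ may be of a different type and contract more slowly than the marked layer.
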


Assume the hypotheses of \hyperref[prop-adaptive]{the adaptive proposition}. Every input pair can be written as \(\rho m\pm\rho z\). After a noncollapsing prefix the pair has mean \(\rho m+\text{higher terms}\) and half-spread \[
\rho^{\alpha_i}a_i(m,z)z+\text{higher terms},
\] where \(a_i\) is a nonzero homogeneous polynomial. Note that all pairs stay on the original affine line. At a nonskew next score, the constant-in-mean part of the contrast signature gives outgoing order \(3\alpha_i\) and a nonzero coefficient. At a skew next score, \eqref{eq:app-b2} has bilinear argument of order \(\rho^{\alpha_i+1}\), and hence outgoing order \[
\alpha_i+\ell(\alpha_i+1)=(\ell+1)\alpha_i+\ell,
\] again with a nonzero polynomial coefficient. The mean increment is always of order at least \(2\alpha_i+1\ge3\), so the induction's leading mean remains \(\rho m\).

After a known prefix entering layer \(j\) with half-spread order \(\alpha\), the first mean coefficient involving that score is, at degree \(2\alpha+1\), \[
g_j(0)a_{j-1}(m,z)^2Lz\,(m^\top M_jz).
\tag{C.6}\label{eq:app-c6}
\] With the prefix fixed, $a_{j-1}$ is known and nonzero. Taking a nonzero row of $L$ and canceling $a_{j-1}^2\ell z$ therefore recovers the bilinear form $m^\top M_jz$, hence $M_j$. To see why later layers cannot change this coefficient, observe that every competitor with the same prefix satisfies
\[
 \operatorname{ord}_\rho\widetilde\Delta^{(j)}\ge3\alpha
 \quad\Longrightarrow\quad
 \operatorname{ord}_\rho(\widetilde\mu_{j+1}-\widetilde\mu_j)
 \ge6\alpha+1>2\alpha+1.
\]
Thus the suffix cannot affect the extracted coefficient, even if the competing next score has a different skewness type. Starting with the readout recovered at consensus, we apply this step until the first collapsing score is identified, or until all scores are recovered. Since the contrast orders strictly increase, the largest degree used is $2\alpha_{s-1}+1$.

For sharpness, fix the prefix through $s-1$ and perturb $M_s$ by a small nonzero matrix. With $N=2\alpha_{s-1}+1$, the mean bound and analytic suffix give
\[
 T_{N-1}(\widetilde\eta)=T_{N-1}(\eta),\qquad
 T_N(\widetilde\eta)\ne T_N(\eta),
\]
where the second inequality is the nonzero change in \eqref{eq:app-c6}. Thus the lower jet has a strictly larger fiber.

For immersion without a collapsing prefix, set earlier variations to zero and differentiate \eqref{eq:app-c6}. The current-score term is injective. To compare it with later contributions, note that a later mean increment is quadratic in the next half-spread. The original contrast and its variation have orders at least $\alpha_j\ge3\alpha$ and $3\alpha$, respectively, so their product with the order-one mean has order
\[
 \alpha_j+3\alpha+1>2\alpha+1.
\]
The differential is therefore triangular and injective. For $\ell=2$, the recurrence gives $\alpha_i=3^i+v_i$, the stated nonzero-curvature specialization.

\subsection{Exact global sign discrimination}\label{exact-global-sign-discrimination}

\begin{proposition}[Exact complete-fiber order on the regular locus]\label{prop-sign-sharp}

Assume \(L\ne0\) and \(\operatorname{sym}M_i\ne0\) for every layer. With \(\alpha_i,k_i\) from Proposition~\ref{prop-heterogeneous-jets}, let \(o_i\) be the first nonzero odd degree of \(h_i\), when one exists, and put \[
\varsigma_i=\begin{cases}o_i,&h_i\text{ is not even},\\ k_i,&h_i\text{ is even},\end{cases}
\qquad N_{\mathrm{sign}}=\max_i\{2\alpha_{i-1}+2\varsigma_i-1\}.
\] Then \(N_{\mathrm{sign}}\) is the least order at which the entire zero-input jet determines the complete effective fiber. The remaining independent signs at even normalizers are exact symmetries at every order. The competing tuple need not initially be regular.

\end{proposition}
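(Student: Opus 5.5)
The plan is to prove two things: sufficiency, meaning that the jet of order $N_{\mathrm{sign}}$ already cuts the finite-observation fiber down to $\mathcal R^{-1}(\mathcal R(\eta))$, and sharpness, meaning that every lower jet has a strictly larger fiber. Throughout I use the scaling $X=\rho Y$ and the order bookkeeping of Appendix~\ref{an-order-estimate-for-arbitrary-clouds}, together with the line-probe signatures of Appendix~\ref{appendix-a}.

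\textbf{Sufficiency.} I would argue by induction on the layer, as in the proof of Theorem~\ref{thm-identification}.
\begin{enumerate}
\item The degree-one consensus output gives $\widetilde L=L$.
\item Suppose the first $j-1$ layer maps agree, allowing the exact even signs. Apply \eqref{eq:app-a2}: because the comparison allows the competitor to be irregular, I rely on the order bound $\operatorname{ord}\widetilde\Delta^{(j)}\ge 3^{j}$ from \eqref{eq:competing-contrast-order}, not on any regularity of $\widetilde\eta$.
\item Expand the signature $\mathsf H_M=g(q_M)\,m^\top Mz$ homogeneously in $m$. Write $h_j=\sum_r c_rs^r$, so $g=\sum_r rc_rs^{r-1}$. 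The term of $\mathsf H_M$ coming from $c_r$ has mean degree $2r-1$ and total input degree $2\alpha_{j-1}+2r-1$.
\item The lowest term, $r=k_j$, appears at degree $N_j\le N_{\mathrm{sign}}$. Cancelling $A_{j-1}^2\,\ell z$ gives $P_{k_j}(\widetilde M_j)=P_{k_j}(M_j)$. The root-of-unity argument of Appendix~\ref{a-signature-determines-one-score} then gives $q_{\widetilde M}=\sigma q_M$. Substituting this back into the $P_k$ identity and cancelling $q_M^{k-1}$ gives $\widetilde M_j=\sigma M_j$, with $\sigma=-1$ possible only when $k_j$ is even.
\item If $\sigma=-1$ and $h_j$ is not even, compare the $c_r$-terms of the two signatures. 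The term of $\mathsf H_{-M}$ equals $(-1)^{r}$ times the term of $\mathsf H_M$. So the first discrepancy occurs at $r=o_j$, in total degree $2\alpha_{j-1}+2o_j-1\le N_{\mathrm{sign}}$. Its coefficient is a nonzero multiple of $c_{o_j}q_M^{o_j-1}m^\top Mz$, and this excludes $\sigma=-1$.
\item If $h_j$ is even, $S_{M}=S_{-M}$ exactly, so either sign preserves the layer map and the induction proceeds.
\end{enumerate}

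\textbf{Main obstacle.} Coefficient extraction at degree $2\alpha_{j-1}+2o_j-1$ must not be polluted by later layers. When $o_j$ is large, this degree can exceed $2\alpha_j+1$, the order at which later mean increments start, so the crude ``suffix enters later'' argument fails. The key is that $S_M$ and $S_{-M}$ differ only through the odd part of $h_j$.

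I would rerun the estimates \eqref{eq:app-c1}--\eqref{eq:app-c2} on the difference of log-weights $h(s)-h(-s)=2\sum_{r\ \mathrm{odd}}c_rs^r$, replacing $k$ by $o_j$. This should show two things:
\[
\operatorname{ord}_\rho\bigl(\mu(S_M X)-\mu(S_{-M}X)\bigr)\ge 2\alpha_{j-1}+2o_j-1,
\]
\[
\operatorname{ord}_\rho\bigl(\Delta(S_M X)-\Delta(S_{-M}X)\bigr)\ge 3\alpha_{j-1}+2o_j-2 .
\]
Each later mean increment is quadratic in its incoming contrast, and each later layer transmits contrast cubically. Hence the downstream discrepancy has order at least the contrast order $\alpha_j$ plus the contrast-difference order plus one. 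That exceeds $2\alpha_{j-1}+2o_j-1$, so the leading discrepancy is exactly the layer-$j$ coefficient found above, which is nonzero. For an irregular competitor suffix the same bound applies, because only lower bounds on orders are used.

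\textbf{Sharpness.} Let $i$ attain the maximum defining $N_{\mathrm{sign}}$.
\begin{itemize}
\item If $h_i$ is not even, take $\widetilde\eta$ equal to $\eta$ except $\widetilde M_i=-M_i$. By the difference estimates above, all coefficients of degree below $2\alpha_{i-1}+2o_i-1$ agree. The discrepancy at that degree is nonzero, so $\widetilde\eta$ lies in the lower-jet fiber but not in the realization fiber. This is a genuine enlargement, because $S_{-M_i}\ne S_{M_i}$ when $h_i$ is not even.
\item If $h_i$ is even, $\varsigma_i=k_i$. Here I would take a small nonzero perturbation of $M_i$ alone with the prefix fixed. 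By \eqref{eq:app-c1}, it first changes the jet at order $N_i=2\alpha_{i-1}+2k_i-1$. By the injectivity of $DP_{k_i}$ it changes the function, and for a small perturbation it is not the sign image $-M_i$.
\end{itemize}
Finally, independent sign flips at even normalizers leave every layer map unchanged, so they are exact symmetries at every order.
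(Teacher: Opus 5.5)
Your sufficiency steps 1--6 and your two sharpness constructions follow the paper's proof. These are the extraction via \eqref{eq:app-a2}, the root-of-unity reduction to $\{M_j,-M_j\}$, the odd-part difference \eqref{eq:app-c7} starting at mean degree $2o_j-1$, and, for necessity, either a single sign flip or, when the maximizing $h_i$ is even, a small perturbation of that score (which is then necessarily $M_D$). What is wrong is the ``main obstacle'' paragraph, and it matters because you use it in the sufficiency direction.

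\textbf{Why the proposed fix fails.} In the sufficiency direction the competitor's suffix $\widetilde M_{j+1},\dots,\widetilde M_D$ is arbitrary. So $F_\eta-F_{\widetilde\eta}$ is not the propagation of $S_{M_j}X-S_{-M_j}X$ through a common suffix, and your difference estimates do not control it. The assertion that ``the same bound applies'' to an irregular competitor suffix does not follow. A differing suffix already changes the jet at total degree $N_{j+1}=2\alpha_j+2k_{j+1}-1$. That can lie far below $2\alpha_{j-1}+2o_j-1$; take $k_j=2$ and $o_j$ large.

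\textbf{What resolves it.} No such estimate is needed, because the obstacle exists only if you extract by total degree. Your own step 2 extracts the coefficient of $\varepsilon^{2\cdot3^{j-1}}$ on line probes, which is a fixed degree in $z$. For every competing score one has $\operatorname{ord}_\varepsilon\widetilde\Delta^{(j)}\ge3^j$. Hence every later layer of either tuple contributes only at spread degree at least $2\cdot3^j$, whatever its total degree. The spread-degree coefficient is therefore exactly $a_{j-1}^2Lz(\mathsf H_{M_j}-\mathsf H_{\widetilde M_j})$ as a germ in $(m,z)$. Its lowest mean-degree part has bidegree $(2v_{j-1}+2o_j-1,\,2\cdot3^{j-1})$, hence total degree $2\alpha_{j-1}+2o_j-1\le N_{\mathrm{sign}}$, so it is read off from the jet. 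This is precisely the paper's ``first extract the displayed spread degree, then its lowest mean degree.'' Suffix pollution is excluded by the bigrading, not by propagating a difference.

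\textbf{Where the difference estimates belong.} They are legitimate only in the sharpness part, where the suffix is common. There the paper obtains lower-jet agreement more simply. The flipped layer output differs by $O(\rho^{2\alpha_{i-1}+2o_i-1})$, and the fixed analytic suffix is locally Lipschitz. Nonmembership in the realization fiber then follows directly from Theorem~\ref{thm-identification}, since $\phi_i$ is not even. Your finer contrast-difference bound is not needed for this.
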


We again proceed one layer at a time. Once the earlier layer maps agree, the prefix coefficient is common to both realizations, and the leading score coefficient reduces the next competing score to at most two candidates. On the diagonal $z=m$, equality of $P_k$ gives equality of powers of the quadratic forms. The argument of Section~\ref{a-signature-determines-one-score} then gives
\[
 P_k(M)=P_k(N)\ \Longrightarrow\ (m^\top Mm)^k=(m^\top Nm)^k
 \ \Longrightarrow\ N\in\{M,-M\}.
\]
The sign is fixed as a polynomial identity, and cancellation recovers the full bilinear form with that sign. In particular, both candidates have nonzero symmetric part. It remains to determine the first degree at which $M$ and $-M$ can be distinguished.

If \(h(s)=\sum_{r\ge1}b_rs^r\), then \[
\mathsf H_M-\mathsf H_{-M}
=2\sum_{\substack{r\ge1\\r\ \mathrm{odd}}}
rb_r(m^\top Mm)^{r-1}m^\top Mz.
\tag{C.7}\label{eq:app-c7}
\] If the first odd degree is $o$, the difference starts in mean degree $2o-1$. Its product with the common squared prefix amplitude in \eqref{eq:app-a2} has bidegree
\[
 (\deg_\varepsilon,\deg_m)
 =(2\cdot3^{j-1},\ 2v_{j-1}+2o-1),
 \qquad \deg_X=2\alpha_{j-1}+2o-1.
\]
We first extract the displayed spread degree, which removes every later-layer contribution, and then its lowest mean degree. Since \eqref{eq:app-c7} has no lower mean-degree term, the resulting coefficient is its leading term multiplied by $A_{j-1}^2$. This is an identity of homogeneous polynomials, so the nonzero prefix factor can be canceled exactly. If there is no odd term, $\phi$ is even and either sign gives the same layer map. Otherwise the displayed coefficient resolves the sign. In both cases the recovered layer maps agree, so we can continue to the next layer. Induction therefore gives
\[
 T_{N_{\mathrm{sign}}}^{-1}(T_{N_{\mathrm{sign}}}(\eta))
 =\mathcal R^{-1}(\mathcal R(\eta))
\]
against arbitrary competitors, proving sufficiency in \hyperref[prop-sign-sharp]{the exact sign proposition}.

For necessity, first suppose a maximizing index \(j\) has $h_j$ that is not even. Flip only \(M_j\). At arbitrary inputs \(X=\rho Y\), the entering cloud has mean \(O(\rho)\) and centered part \(O(\rho^\alpha)\), \(\alpha=\alpha_{j-1}\). With scores split as in Section~\ref{an-order-estimate-for-arbitrary-clouds}, the difference of row-normalized log weights is \[
2\{h_{j,\mathrm{odd}}(\bar s_a+\widetilde s_{ab})
-h_{j,\mathrm{odd}}(\bar s_a)\}
=O(\rho^{\alpha+2o_j-1}).
\] Both normalized rows sum to one, so multiplication uses only centered values and gives a layer-output difference $O(\rho^{2\alpha+2o_j-1})$. The fixed analytic suffix is locally Lipschitz and cannot lower its order. Equation~\eqref{eq:app-c7} supplies a nonzero coefficient at that degree. Hence the sign-flipped tuple satisfies
\[
 T_{N_{\mathrm{sign}}-1}(\widetilde\eta)=T_{N_{\mathrm{sign}}-1}(\eta),
 \qquad \mathcal R(\widetilde\eta)\ne\mathcal R(\eta).
\]

If every maximizing index has even $h_i$, then $\varsigma_i=k_i$ there. Strict increase of $N_i$ and $N_{\mathrm{sign}}\ge N_D$ force $N_{\mathrm{sign}}=N_D$. A small regular perturbation of $M_D$ outside $\{M_D,-M_D\}$ has the same lower jet by \eqref{eq:app-c1}, but lies outside the realization fiber by Theorem~\ref{thm-identification}. This supplies the lower bound in the remaining case and completes the proof of the optimal Taylor degree. Having determined which coefficients suffice, we turn in Section~\ref{finite-ordinary-evaluations} to observations obtained by evaluating the function on finitely many inputs.

\subsection{Multidirectional covariance recovery}\label{covariance-expansion}

\begin{proposition}[Recovery through invertible prefixes]\label{prop-covariance}

Suppose \(n\ge d+1\), \(g_i(0)\ne0\) at every layer, \(L\ne0\), and \(M_i\) is invertible for \(i<D\). The final matrix is arbitrary. Then the complete effective fiber is a singleton, and the jet through \(2\cdot3^{D-1}+1\) determines it against arbitrary competitors and is an immersion. No lower jet depends on the final score. In particular, invertible skew prefixes are permitted, and the readout \(L\) may have a nontrivial kernel.

\end{proposition}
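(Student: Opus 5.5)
We will run the same layer-by-layer isolation as in Appendix~\ref{appendix-a}, with collinear line probes replaced by multidirectional centered clouds. Inputs are $x_a=m+\varepsilon y_a$ with $\sum_a y_a=0$ and covariance $C=n^{-1}\sum_a y_ay_a^\top$. Since $n\ge d+1$, the centered vectors $y_a$ can span $\mathbb R^d$, so $C$ ranges over an open set of positive definite matrices.

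\textbf{One-layer expansion.} The computation of Section~\ref{softmax-collapse-for-every-token-count} generalizes to a normalizer with $g(0)\ne0$. Expanding the row-normalized weights to second order, the row-constant score terms cancel. We expect
\[
\operatorname{mean}S_M(X)=m+\varepsilon^2 g(0)\,CM^\top m+\cdots,
\qquad
S_M(X)_a-\operatorname{mean}=\varepsilon^3 g(0)\,CM^\top y_a+\cdots.
\]
The first display has leading bidegree one in $m$; the second has leading mean degree zero. The higher-order terms involve $g'(0)$ and higher powers of $m$.

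\textbf{Contrast propagation and recovery.} Propagating through invertible prefixes, the centered cloud entering layer $j$ is
\[
\varepsilon^{3^{j-1}}\,\mathcal T_{j-1}\,y_a+\cdots,
\qquad
\mathcal T_{j-1}=\prod_{i<j} g_i(0)\,C_{i-1}M_i^\top,
\]
where $C_{i-1}$ is the covariance entering layer $i$; it is a polynomial in $C$ and is invertible for generic $C$ because each $M_i$ is invertible. The telescoping identity \eqref{eq:app-a2} then gives
\[
[\varepsilon^{2\cdot3^{j-1}}]\,[\deg_m=1]\,(\overline F-L\mu_{j-1})
= g_j(0)\,L\,C_{j-1}M_j^\top m.
\]
Later layers contribute only at spread order at least $2\cdot 3^j$. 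A competitor with the same prefix has incoming covariance $C_{j-1}$ identical to ours, and its outgoing contrast still has order at least $3^j$, so no regularity is needed for the competitor. Comparing the two realizations gives
\[
L\,C_{j-1}(M_j-\widetilde M_j)^\top=0
\]
for all admissible $C$.

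\textbf{Main obstacle.} The key step is removing $L$ when it has a kernel. We need the span of the columns of $C_{j-1}(M_j-\widetilde M_j)^\top$, as $C$ varies over an open set, to be all of $\mathbb R^d$ unless $M_j=\widetilde M_j$. I would argue that $C\mapsto C_{j-1}$ is a polynomial map whose image, thanks to the invertible $M_i$, contains an open set of symmetric matrices. Varying $C_{j-1}$ over rank-one-perturbed symmetric matrices then shows: if $LSw=0$ for all symmetric $S$ in an open set and some $w\ne0$, then $LS'w=0$ for all symmetric $S'$. Since symmetric matrices move $w$ to any vector, this gives $L=0$, a contradiction, so $M_j=\widetilde M_j$.

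The same check is needed at the Jacobian level for the immersion. Starting from $\widetilde L=L$ at consensus, induction then recovers every $M_j$, including an arbitrary final score $M_D$. The total input degree used is $2\cdot3^{j-1}+1$, which yields the stated jet order. Independence of every lower jet from $M_D$ follows from \eqref{eq:app-c1} with $k=1$.
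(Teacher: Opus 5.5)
Your architecture matches the paper's proof:
\begin{itemize}
\item multidirectional centered clouds;
\item the leading covariance recursion $C_i=g_i(0)^2C_{i-1}M_i^\top C_{i-1}M_iC_{i-1}$;
\item isolation of $g_j(0)LC_{j-1}M_j^\top m$ at total degree $2\cdot3^{j-1}+1$ (your $[\varepsilon^{2\cdot3^{j-1}}][\deg_m=1]$ extracts the same bihomogeneous coefficient as the paper's $\rho$-scaling);
\item the tripling order bounds that hide an arbitrary competitor's suffix;
\item the linear-extension step that removes $\ker L$.
\end{itemize}
All of that is fine.

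The gap is the step you flag as the main obstacle. You assert that the image of $C\mapsto C_{j-1}(C)$ contains an open set ``thanks to the invertible $M_i$''. Invertibility only gives that $C_{j-1}(C)=\mathcal T_{j-1}C\mathcal T_{j-1}^\top$ is positive definite. It says nothing about how large the image of the cubic map $\Phi_M(S)=SM^\top SMS$ is.

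To get an open image, or even one spanning the symmetric matrices (which is all your cancellation needs), you need a further argument. Possible routes are:
\begin{itemize}
\item a positive-definite point where $D\Phi_M$ is nonsingular;
\item injectivity of $\Phi_M$ on the cone, combined with invariance of domain;
\item surjectivity of $\Phi_M$ onto the cone.
\end{itemize}
At $S=I$ the differential is $Y\mapsto YM^\top M+M^\top YM+M^\top MY$. A Cauchy--Schwarz trace estimate shows this is injective when $M$ is normal, for example skew or orthogonal. For a non-normal invertible $M$ this is not evident, and the naive trick of choosing $S$ so that $S^{1/2}MS^{1/2}$ is normal can fail. So the proposal leaves unproved exactly the ingredient that distinguishes this proposition from the collinear argument.

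The paper supplies it as a genuine lemma: $\Phi_M$ maps the positive-definite cone onto itself. Given $T\succ0$, substitute $S=T^{1/2}XT^{1/2}$ and $B=T^{1/2}MT^{1/2}$. Then $\Phi_M(S)=T$ becomes the fixed-point equation $X=(B^\top XB)^{-1/2}$. The map $G(X)=(B^\top XB)^{-1/2}$ halves Thompson distances:
\begin{itemize}
\item congruence by $B$ preserves the Loewner bounds $e^{-a}X\preceq Y\preceq e^aX$;
\item inversion reverses them;
\item the square root replaces $e^{\pm a}$ by $e^{\pm a/2}$.
\end{itemize}
Hence $G$ has a unique positive-definite fixed point, and composing layers makes $C_0\mapsto C_{j-1}(C_0)$ onto the cone. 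Every $C_0\succ0$ is realized when $n\ge d+1$. The cone spans all symmetric matrices, so your step ``$LSw=0$ for all symmetric $S$ forces $L=0$'' then applies. The same surjectivity handles the differentiated identity, giving the immersion.
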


To prove \hyperref[prop-covariance]{the invertible-prefix proposition}, replace the one-dimensional contrast amplitude by a covariance matrix. Start with \(x_a=\rho(m+y_a)\), \(\sum_a y_a=0\), and \(C_0=n^{-1}\sum_a y_ay_a^\top\). For an incoming mean \(\mu=O(\rho)\), contrast \(\Delta_a=O(\rho^\alpha)\), and \(g(0)\ne0\), subtract the row-constant log weight:
\[
h(x_a^\top M(\mu+\Delta_b))-h(x_a^\top M\mu)
=g(0)x_a^\top M\Delta_b+O(\rho^{\alpha+3}).
\] The quadratic remainder is sufficiently high because \(2\alpha+2\ge\alpha+3\). Normalization and multiplication by centered values give, with \(C=n^{-1}\sum_b\Delta_b\Delta_b^\top\), \[
x_a'=\mu+g(0)CM^\top x_a+O(\rho^{2\alpha+3}),
\quad
\Delta_a'=g(0)CM^\top\Delta_a+O(\rho^{3\alpha+2}).
\tag{C.8}\label{eq:app-c8}
\] The centered remainder uses the monomial argument of \eqref{eq:app-c1}: query dependence replaces an order-one mean by an order-$\alpha$ contrast and adds $\alpha-1$ powers. Thus its order is
\[
 (2\alpha+3)+(\alpha-1)=3\alpha+2.
\]
Thus centering retains the additional $\alpha-1$ powers supplied by the query dependence.

The leading centered degree triples at each layer. After removing the factor $\rho^{3^i}$ from its centered tokens, let $C_i$ be the covariance of their leading coefficients. The centered expansion then gives \[
C_i=g_i(0)^2C_{i-1}M_i^\top C_{i-1}M_iC_{i-1}.
\tag{C.9}\label{eq:app-c9}
\] Congruence by an invertible prefix gives $C_0>0\Longrightarrow C_i>0$, so no leading covariance vanishes. Exact prefix subtraction isolates, at $N_j=2\cdot3^{j-1}+1$, the coefficient
\[
g_j(0)L C_{j-1}(C_0)M_j^\top m.
\tag{C.10}\label{eq:app-c10}
\]

To recover $M_j$ from \eqref{eq:app-c10} when $L$ has a kernel, we vary the covariance entering the layer. We will show that it can be any positive-definite matrix. For any invertible \(M\), the map \(\Phi_M(S)=SM^\top SMS\) is onto the positive-definite cone. Given \(T>0\), substitute \(S=T^{1/2}XT^{1/2}\) and \(B=T^{1/2}MT^{1/2}\). Then \[
\Phi_M(S)=T
\quad\Longleftrightarrow\quad
XB^\top XBX=I
\quad\Longleftrightarrow\quad
X=(B^\top XB)^{-1/2}.
\tag{C.11}\label{eq:app-c11}
\] This is the one-term exponent \(-1/2\) case of Duan, Peng and Duan's fixed-point equation \citep[Theorem 5]{duan2009}. A self-contained proof uses the Thompson distance
\[
d_T(X,Y)=\inf\{a\ge0:e^{-a}X\preceq Y\preceq e^aX\}.
\] If $e^{-a}X\preceq Y\preceq e^aX$, congruence by $B$ preserves these inequalities. Inversion reverses the inequalities, and taking the positive square root replaces $e^{\pm a}$ by $e^{\pm a/2}$. Thus
\[
 G(X)=(B^\top XB)^{-1/2},\qquad d_T(G(X),G(Y))\le\tfrac12d_T(X,Y).
\]
Starting from $X_0=I$, define $X_{k+1}=G(X_k)$. The contraction estimate makes successive distances summable, so the iterates are Cauchy and stay at bounded Thompson distance from $I$. They therefore lie in an order interval $aI\preceq X_k\preceq bI$, with $0<a\le b<\infty$. Since this interval is compact and bounded away from singular matrices, the iterates converge to a positive-definite limit. Continuity makes this limit a fixed point, and contraction gives uniqueness. Substituting the fixed point into Equation~\eqref{eq:app-c11} proves surjectivity. Finally, multiplication by $g_i(0)^2>0$ preserves the positive-definite cone, so every iterated prefix covariance map in \eqref{eq:app-c9} is onto.

When \(n\ge d+1\), every \(C_0>0\) is attained by a centered cloud: choose \(U\in\mathbb R^{d\times n}\) with \(U\mathbf1=0\), \(UU^\top=nI\), and take its transformed columns \(Y=C_0^{1/2}U\). The mean \(m\) is independent. For a nonzero row \(l\) of \(L\), if \[
lC_{j-1}(C_0)T^\top m=0\quad\text{for all }C_0>0,m,
\] then surjectivity lets us vary the prefix covariance over every positive-definite $C$. Since this cone is open in the space of symmetric matrices, linearity extends the identity to that whole space:
\[
 TCl^\top=0\quad(C>0)
 \ \Longrightarrow\ TCl^\top=0\quad(C=C^\top)
 \ \Longrightarrow\ T=0,
\]
because every $v\in\mathbb R^d$ equals $Cl^\top$ for some symmetric $C$. For example, writing $u=l^\top$, take
\[
C=\frac{vu^\top+uv^\top}{u^\top u}-\frac{u^\top v}{(u^\top u)^2}uu^\top,
\qquad Cu=v.
\]
Thus varying the covariance detects every score direction using one nonzero output row.

Consensus fixes $L$. With a common recovered prefix, equality of \eqref{eq:app-c10} gives the displayed covariance identity with $T=M_j-\widetilde M_j$, hence $M_j=\widetilde M_j$. Every competing prefix thus becomes invertible as a conclusion. The order bounds of \eqref{eq:app-c1} exclude the suffix even for a singular next candidate. Induction, including the arbitrary final matrix, yields
\[
 T_{N_D}^{-1}(T_{N_D}(\eta))=\{\eta\},\qquad N_D=2\cdot3^{D-1}+1.
\]
Replacing differences by variations proves $\ker DT_{N_D}(\eta)=0$. Every lower jet is independent of $M_D$ by \eqref{eq:app-c1}, while a small final-score change leaves the singleton fiber. This proves complete-fiber determination, immersion and sharpness.

\subsection{Recovery from generic two-plane restrictions}\label{recovery-from-generic-two-plane-restrictions}

\begin{proposition}[Recovery through nondegenerate two-plane restrictions]\label{prop-two-plane}

Let \(d\ge2\), \(n\ge3\), \(L\ne0\), and \(g_i(0)\ne0\) at every layer. For a real \(d\times2\) matrix \(U\), suppose each polynomial \[
p_i(U)=\det(U^\top M_iU),\qquad i<D,
\] is not identically zero. The final score is arbitrary. Then the complete effective fiber is a singleton against arbitrary competitors. The zero-input jet through \[
N_D=2\cdot3^{D-1}+1
\] determines that fiber and is an immersion, and no lower jet depends on the final score. In particular these conclusions hold whenever every prefix score is nonzero and skew, in both odd and even dimension.

\end{proposition}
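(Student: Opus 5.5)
The plan is to reduce to Proposition~\ref{prop-covariance} on generic two-dimensional input planes, following the outline already sketched in Section~\ref{sec:jets}. Fix $U\in\mathbb R^{d\times2}$ of rank two and restrict inputs to $X=UZ$ with $Z\in\mathbb R^{2\times n}$. By the bilinearity identity $S^\phi_A(PX)=PS^\phi_{P^\top AP}(X)$ from Appendix~\ref{native-reduction-and-genericity}, we have $F_\eta(UZ)=F_{\eta_U}(Z)$ with $\eta_U=(LU,(U^\top M_iU)_i)$, and the same holds for any competitor $\widetilde\eta$. Since linear substitution preserves equality of germs and maps degree-$N$ jets to degree-$N$ jets, the hypothesis $T_{N_D}(\widetilde\eta)=T_{N_D}(\eta)$ gives $T_{N_D}(\widetilde\eta_U)=T_{N_D}(\eta_U)$ for the two-dimensional networks. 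The same argument applies to the realizations.

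Next, I will choose good planes. The set of $U$ with $LU\neq0$ and $p_i(U)\neq0$ for all $i<D$ is the complement of finitely many proper algebraic subsets. Indeed, $LU\ne0$ is a nontrivial condition since $L\neq0$, and each $p_i\not\equiv0$ by hypothesis. So this set is nonempty, open, and dense. On such $U$, the restricted tuple has input width $2$, token count $n\ge3=2+1$, nonzero readout $LU$, invertible $2\times2$ prefix scores, the same normalizers with $g_i(0)\neq0$, and an arbitrary final score. Proposition~\ref{prop-covariance} therefore applies with $d=2$. It yields $\widetilde\eta_U=\eta_U$ from the degree-$N_D$ jet, injectivity of $DT_{N_D}$ at $\eta_U$, and independence of every lower jet from the final score.

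The final step lifts back to the ambient space. Each entry of $(L-\widetilde L)U$ and $U^\top(M_i-\widetilde M_i)U$ is a polynomial in $U$ that vanishes on a nonempty open set, so it vanishes identically. Writing $U=(u,v)$ gives $(L-\widetilde L)u=0$ and $u^\top(M_i-\widetilde M_i)v=0$ for all $u,v$, hence $\widetilde\eta=\eta$. This establishes both the singleton fiber and jet determination against arbitrary competitors; note that no regularity of $\widetilde\eta$ is needed because Proposition~\ref{prop-covariance} already allows arbitrary competitors. For immersion, I will apply the same argument to a tangent vector $\dot\eta\in\ker DT_{N_D}(\eta)$. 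Restriction is linear in $\eta$, so $\dot\eta_U\in\ker DT_{N_D}(\eta_U)=0$ for all good $U$, and polynomial extension gives $\dot\eta=0$. Sharpness follows from the order bound \eqref{eq:app-c1}, which applies to arbitrary clouds with $k=1$, so $N_D=2\cdot3^{D-1}+1$ and $T_N$ is independent of $M_D$ for $N<N_D$. A small change of $M_D$ then leaves the singleton fiber.

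For the skew claim, let $M$ be nonzero and skew and pick $u,v$ with $u^\top Mv=\kappa\ne0$. Then $U^\top MU=\begin{pmatrix}0&\kappa\\-\kappa&0\end{pmatrix}$ has determinant $\kappa^2\ne0$, so $p_i\not\equiv0$ in every dimension $d\ge2$, odd or even. The main obstacle I expect is making sure Proposition~\ref{prop-covariance} genuinely covers the restricted setting, in particular that its jet degree and the covariance-surjectivity argument transfer with $d=2$ and $n\ge3$. Here the condition $n\ge d+1$ is exactly met, which is why $n\ge3$ is assumed.
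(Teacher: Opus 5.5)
Your proposal is correct and follows the paper's proof essentially step for step. Both arguments restrict to a nonempty open set of frames $U$ with $LU\ne0$ and invertible compressed prefixes, apply Proposition~\ref{prop-covariance} in width two through the restriction identity \eqref{eq:app-c12}, and lift $(L-\widetilde L)U=0$ and $U^\top(M_i-\widetilde M_i)U=0$ back by polynomial vanishing on an open set. Immersion via linearity of restriction, sharpness via \eqref{eq:app-c1}, and the skew determinant check are handled exactly as in the paper.
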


\textbf{Proof.} A two-plane frame is a matrix $U\in\mathbb R^{d\times2}$ of rank two, whose columns form a basis of the plane on which we restrict the inputs. We need one frame for which the readout is nonzero and every compressed prefix is invertible. Define $\mathcal U$ to be the nonvanishing set of the polynomial
\[
\det(U^\top U)\,\|LU\|_F^2\prod_{i<D}\det(U^\top M_iU).
\]
Every factor is a nonzero real polynomial, including $\|LU\|_F^2$ since $L\ne0$. Their product is nonzero, so $\mathcal U$ is nonempty and Euclidean open. For every $U\in\mathcal U$,
\[
 \operatorname{rank}U=2,\qquad LU\ne0,\qquad
 \det(U^\top M_iU)\ne0\quad(i<D).
\]
Thus one common family of planes works for the whole marked prefix.

For every fixed full-rank \(U\) and every two-dimensional token array \(Z\), the exact restriction identity is \[
F_{L,\mathbf M}(UZ)
=F_{LU,(U^\top M_iU)_{i=1}^D}(Z).
\tag{C.12}\label{eq:app-c12}
\] Each score on the left is \(z_a^\top U^\top M_iUz_b\), and a normalized weighted combination of tokens in \(\operatorname{im}U\) remains there. These facts prove \eqref{eq:app-c12} layer by layer. The identity holds for every competing tuple as well.

Equality of input jets through \(N_D\) remains true after the linear substitution \(X=UZ\). For each \(U\in\mathcal U\), apply Proposition~\ref{prop-covariance} in width two: \(n\ge3=2+1\), the readout \(LU\) is nonzero, and the marked compressed prefixes are invertible. Its conclusion against arbitrary competitors gives \[
(L-L')U=0,\qquad U^\top(M_i-M_i')U=0\quad(1\le i\le D).
\tag{C.13}\label{eq:app-c13}
\] The compressed competing tuple is identified by that proposition without any initial invertibility assumption on its prefix. Each entry in \eqref{eq:app-c13} is a polynomial in $U$. Since it vanishes on the nonempty open set $\mathcal U$, it vanishes for every $U$. With $U=[a\ b]$, its off-diagonal entry gives
\[
 a^\top(M_i-M_i')b=0\quad\text{for all }a,b
 \quad\Longrightarrow\quad M_i=M_i',
\]
and $(L-L')U=0$ gives $L=L'$. Thus the jet fiber is the singleton realization fiber.

For immersion, linear restriction commutes with the differential:
\[
 DT_{N_D}(\eta)[\dot\eta]=0
 \ \Longrightarrow\ \delta L\,U=0,\quad U^\top\delta M_iU=0
 \quad(U\in\mathcal U).
\]
The compressed immersion from Proposition~\ref{prop-covariance} supplies the implication. Since the resulting identities are polynomial in $U$, vanishing on $\mathcal U$ gives $\dot\eta=0$. Every lower jet is independent of the final score by Section~\ref{an-order-estimate-for-arbitrary-clouds}, whereas a small nonzero final-score change leaves the singleton fiber. The degree is therefore necessary and sufficient.

Finally, for a nonzero skew \(M\), \[
U^\top MU=\begin{pmatrix}0&a^\top Mb\\-a^\top Mb&0\end{pmatrix},
\qquad \det(U^\top MU)=(a^\top Mb)^2\not\equiv0.
\] This verifies the condition for every nonzero skew prefix, including singular skew matrices in odd dimension. \(\square\)

The restriction condition includes ambient singular skew scores because their generic two-plane compressions are invertible. A rank-one score has singular two-plane compressions, so it instead falls under Theorem~\ref{thm-identification} when its symmetric part is nonzero. Deeper stacks mixing these two types require a recovery argument that combines the two kinds of probes.

\subsection{Proof of Corollary~\ref{cor-depth-two}}\label{complete-depth-two-classification-with-three-or-more-tokens}


\textbf{Proof.} For Corollary~\ref{cor-depth-two}(\ref{cor-depth-two:nonzero-prefix}), the alternatives for $M_1\ne0$ are exhaustive:
\[
 \begin{array}{c|c}
 \operatorname{sym}M_1\ne0&\text{nonzero-slope recovery, Section~\ref{nonzero-slope-rational-local-inversion-and-uniform-determination}}\\
 M_1^\top=-M_1&d\ge2,\ \text{two-plane recovery, Proposition~\ref{prop-two-plane}}.
 \end{array}
\]
Both give a singleton jet fiber and injective differential at degree seven, with arbitrary final matrix.

For Corollary~\ref{cor-depth-two}(\ref{cor-depth-two:zero-prefix}), \(M_1=0\), so the first layer produces consensus and the second score has no effect. Consensus inputs determine \(L\). The degree-three mean coefficient involving the first score is \[
g_1(0)Lz\,(m^\top M_1z)
\] on the line probes. For $L\ne0$, polynomial cancellation makes it injective in $M_1$. Moreover, a competing second layer contributes only from degree seven. Hence
\[
 T_3^{-1}(T_3(L,0,M_2))
 =\mathcal R^{-1}(\mathcal R(L,0,M_2))
 =\{(L,0,N):N\in\mathbb R^{d\times d}\}.
\]
Every degree-two jet is score-independent, proving necessity of degree three. Consensus gives the zero-readout case in Corollary~\ref{cor-depth-two}(\ref{cor-depth-two:zero-readout}). \(\square\)

\subsection{Finite ordinary evaluations}\label{finite-ordinary-evaluations}

Let $V=\operatorname{span}\{f_1,\ldots,f_m\}$ be a space of linearly independent vector-valued analytic functions on a connected open input neighborhood. We seek finitely many inputs at which their evaluations remain linearly independent. For an input $X$ and output component $a$, write $e_{X,a}(f)=f(X)_a$.

Start with $K_0=V$. If $K_j\ne0$, choose $f\in K_j$ that is not identically zero, and choose $X_j,a_j$ with $f(X_j)_{a_j}\ne0$. Then
\[
K_{j+1}=K_j\cap\ker e_{X_j,a_j}\subsetneq K_j.
\]
Each step lowers the dimension, so after at most $m$ steps we obtain $K_q=0$. Taking all output components at the selected inputs gives a bank $\mathcal B$ with
\[
\ker(\operatorname{ev}_{\mathcal B}|_V)=0.
\]
A maximal minor of this evaluation matrix is nonzero at the constructed bank and is analytic in the bank points. Its zero set has Lebesgue measure zero. Consequently, sufficiently many independent inputs drawn with a density on the full-dimensional neighborhood separate this fixed function space almost surely. Both the bank size needed for a particular construction and its conditioning concern this fixed space. In contrast, inputs supported on consensus or on a discrete corpus may fail to separate it.

\section{Native contact spectra}\label{appendix-d}

For the analytic path $t\mapsto\theta(t)$ of Theorem~\ref{thm-spectrum} and a finite input bank $\mathcal B$, set
\[
 \Psi_{\mathcal B}=\operatorname{ev}_{\mathcal B}\circ\mathcal R,
 \qquad \mathcal O_{\mathcal B}=\Psi_{\mathcal B}\circ\pi,
 \qquad J(t)=D_\theta\mathcal O_{\mathcal B}|_{\theta(t)}.
\]
Here $J(t)$ differentiates with respect to every native parameter, while $t$ only specifies the point where that differential is evaluated. Let $\mathcal O=\mathbb R\{t\}$ be the ring of convergent power series near $t=0$. We seek row and column transformations, analytic and invertible at zero, which reduce $J(t)$ to a rectangular diagonal matrix with entries $1$, $t^{\beta_i}$, and zero. These powers are its Smith factors. A positive power records a direction whose sensitivity tends to zero. A zero column instead records a direction whose differential vanishes identically along the path.

\paragraph{Proof overview.}
The proof has two steps. First, row normalization cancels the row-constant part of the log weights. Together with homogeneity, this makes the centered output after layer $i$ divisible by $t^{e_i}$ and the layer-$i$ factor derivatives divisible by $t^{\beta_i}$. For common first nonconstant degree $k$,
\[
 e_0=0,\qquad e_i=3e_{i-1}+2k,\qquad
 \beta_i=2e_{i-1}+2k-1.
\]
The layer mean first changes at order $2e_{i-1}+2k$ in $t$. Differentiating with respect to one query or key factor reduces this order by one, giving $\beta_i$. Section~\ref{homogeneity-gives-joint-analytic-divisibility} proves these divisibilities jointly in the parameter $t$ and the input.

Second, restrict the Jacobian to a complement of its exact source kernel and divide each layer-$i$ derivative column by $t^{\beta_i}$. Collect these divided columns in a matrix $E(t)$, which we call the divided derivative frame. We will show that its columns remain independent at $t=0$, giving
\[
 \begin{aligned}
 J_{\mathrm{vis}}(t)&=E(t)D(t),\\
 D(t)&=\operatorname{diag}\bigl(I_{d^2},(t^{\beta_i}I_{m_i})_{i=1}^D\bigr),\\
 m_i&=r_i(2d-r_i),\qquad
 \operatorname{rank}E(0)=d^2+\sum_{i=1}^D m_i.
 \end{aligned}
\]
Triangular score signatures establish independence of the leading derivative functions. Then Section~\ref{finite-ordinary-evaluations} supplies a finite input bank preserving that independence. A maximal minor of $E(t)$ is therefore nonzero at $t=0$, hence a unit in $\mathcal O$. Analytic row reduction gives the exact diagonal powers in $D(t)$, so the divisibility bounds are attained. Sections~\ref{the-divided-frame-and-exact-smith-factors}--\ref{factor-kernels-multiplicities-and-singular-values} carry out this reduction, count the exact kernel, and prove matching singular-value bounds. The construction establishes existence of a separating bank. Optimizing the number and placement of its inputs remains a separate problem.

\subsection{The native source and its value coordinates}\label{the-native-source-and-its-value-coordinates}

\begin{proposition}[Heterogeneous native contact orders]\label{prop-heterogeneous-spectrum}

Under the native value, factor-rank and symmetric-part hypotheses of Theorem~\ref{thm-spectrum}, permit the first nonconstant normalizer degrees \(k_i\) to differ. Define \[
e_0=0,\qquad e_i=3e_{i-1}+2k_i,\qquad
\beta_i=2k_i-1+2e_{i-1}
=2k_i-1+4\sum_{j<i}3^{i-1-j}k_j.
\] On a fixed bank separating the divided leading derivative frame, the Smith factors over \(\mathbb R\{t\}\) consist of \(d^2\) units and \(r_i(2d-r_i)\) factors \(t^{\beta_i}\) for each layer, with \((D-1)d^2+\sum_i r_i^2\) exact source-kernel directions. Each unit factor corresponds to a singular value comparable to \(1\), while the other positive singular values are comparable to their powers of \(|t|\). Such finite banks exist, and sufficiently many independent inputs drawn with a density on a full-dimensional open neighborhood separate the fixed leading frame almost surely.

\end{proposition}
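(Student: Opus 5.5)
We prove Proposition~\ref{prop-heterogeneous-spectrum} along the path $Q_i=tU_i$, $K_i=tW_i$, $V_i=I_d$ by establishing three facts. First, divisibility of the Jacobian columns by the predicted powers of $t$. Second, exact identification of the kernel. Third, independence of the divided leading frame at $t=0$ on a finite bank. The Smith factors and singular-value bounds then follow by analytic row reduction.

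\textbf{Divisibility.} We use homogeneity. Along the path, $A_i=t^2U_iW_i^\top$, and the scores at an input $X$ are $t^2x_a^\top B_ix_b$ with $B_i=U_iW_i^\top$. We induct on layers with both $t$ and $X$ as variables. Assume the mean after layer $i-1$ is $O(1)$ in $t$ and the centered tokens are divisible by $t^{e_{i-1}}$. Split each score into its row-constant part and its $\Delta$-dependent part, as in Section~\ref{an-order-estimate-for-arbitrary-clouds}. The factor $e^{h(\bar s_a)}$ cancels in the normalization. The remaining log-weight difference is then $t^{2k_i}$ times a term linear in $\Delta$ plus higher-order terms. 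This gives $A_{ab}-1/n=O(t^{2k_i+e_{i-1}})$. Since $\sum_b\Delta_b=0$, the mean increment is $O(t^{2k_i+2e_{i-1}})$. Comparing two query rows replaces an $O(1)$ mean factor by an $O(t^{e_{i-1}})$ contrast, so the centered output is $O(t^{3e_{i-1}+2k_i})=O(t^{e_i})$.

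For the derivative columns, differentiate with respect to an entry of $Q_i$ or $K_i$. This removes one factor of $t$ from $t^2B_i$ in the leading monomial, so layer-$i$ factor columns are divisible by $t^{2e_{i-1}+2k_i-1}=t^{\beta_i}$. A layer-$i$ perturbation affects the final output only through the mean increment and its suffix propagation. The suffix maps are analytic and do not lower the order. Value derivatives are handled through the coordinates of Proposition~\ref{prop-native-geometry}. Let $\mathbf P$ be the value products. Varying $P_D=L$ gives $d^2$ unit-order columns coming from $F=L\,(\cdots)$ at consensus-dominated outputs. The intermediate $P_j$ act only through $\widetilde Q_i=P_{i-1}^\top Q_i$ and $\widetilde K_i=P_{i-1}^\top K_i$. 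In the $(\mathbf P,L,\widetilde{\mathbf Q},\widetilde{\mathbf K})$ chart these are exact kernel directions. This chart is valid because $V_i=I$ is invertible.

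\textbf{Kernel and multiplicities.} In the chart, $\mathcal O_{\mathcal B}$ factors through $(L,(\widetilde Q_i\widetilde K_i^\top)_i)$. The intermediate value coordinates contribute $(D-1)d^2$ kernel directions. The infinitesimal factor gauges \eqref{eq:infinitesimal-factor-gauge} contribute $r_i^2$ directions per layer. Full column rank makes the map $(\delta Q,\delta K)\mapsto \delta Q\,W^\top+U\,\delta K^\top$ have rank $r_i(2d-r_i)$, the tangent dimension of the rank-$r_i$ stratum. Together these give $\zeta$. It remains to show that the complementary columns have rank exactly $d^2+\sum_i m_i$ after division.

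\textbf{Independence of the divided frame (main obstacle).} Write $E(t)$ for the complementary columns divided by $1$ or $t^{\beta_i}$. We must show that the limiting functions $E(0)$ are linearly independent as functions of the input. We compute these limits on line probes, mimicking \eqref{eq:app-c3}. The divided layer-$i$ derivative in a direction $T=\delta Q\,W^\top+U\,\delta K^\top$ has leading term
\[
b_ik_iA_{i-1}^2\,Lz\,DP_{k_i}(B_i)[T]
\]
at spread degree $2\cdot3^{i-1}$. Here $A_{i-1}$ is the product of the prefix contrast signatures evaluated at $B_j$. These are nonzero because $\sym B_j\ne0$. The limiting columns from different layers occupy different spread degrees, and the $L$-columns appear at degree one on consensus. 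Hence a linear relation among the limits is block triangular. Within layer $i$, the kernel computation for $DP_k$ in Section~\ref{immersion-and-the-sharp-local-order}, together with cancellation of the nonzero germ $A_{i-1}^2\ell z$, forces $T=0$. Injectivity of the tangent map of the rank stratum on the complement then forces the factor variation itself to vanish.

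The delicate points come from the order of operations. We take $t\to0$ with the input fixed, so we must check that the divided derivative limits agree with the Taylor-leading terms computed above. Joint homogeneity in $(t,X)$, through the scaling \eqref{eq:scaling}, should give this. We must also check that the bounded lower-order corrections do not mix bands. This holds because each band is exactly divisible and the limit is taken after division.

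\textbf{Conclusion.} Section~\ref{finite-ordinary-evaluations} supplies a finite bank on which $E(0)$ has full column rank. A maximal minor of $E(t)$ is then a unit in $\mathbb R\{t\}$. Analytic row reduction gives $J_{\mathrm{vis}}=E\,D$ with $D=\operatorname{diag}(I,t^{\beta_i}I)$, which yields the Smith form. The singular values of $E(t)D(t)$ lie between $\sigma_{\min}(E)\sigma_j(D)$ and $\|E\|\sigma_j(D)$, by the standard inequalities. This gives two-sided bounds $c|t|^{\beta_i}\le\sigma\le C|t|^{\beta_i}$ with the stated multiplicities. The almost-sure statement follows because the nonvanishing maximal minor is analytic in the bank points.
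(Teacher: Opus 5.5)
Your proposal is correct and is essentially the paper's argument. The $(\mathbf P,L,\widetilde{\mathbf Q},\widetilde{\mathbf K})$ chart isolates the $(D-1)d^2+\sum_i r_i^2$ exact kernel, and the layer-$i$ factor columns are divisible by $t^{\beta_i}$. The divided limit is the degree-$(\beta_i+2)$ Taylor part, whose line-probe signature $A_{i-1}^2z\,DP_{k_i}(B_i)[T_i]$ is injective, and a separating bank plus analytic row reduction finish the proof. The step you flagged as delicate follows in one line from $\delta_iF=t^{-2}D_{B_i}F_{I,\mathbf B}(tX)[T_i]$. The only differences are cosmetic: you get divisibility from a direct layerwise $t$-order count instead of transferring the Appendix~C input-degree bounds through homogeneity, and you separate layers by spread degree rather than total homogeneous degree.
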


Near \(V_i=I_d\), put \[
P_i=V_i\cdots V_1\ (i<D),\quad L=P_D,
\qquad\widetilde Q_i=P_{i-1}^\top Q_i,\quad
\widetilde K_i=P_{i-1}^\top K_i.
\] The inverse source coordinates are
\[
 V_i=P_iP_{i-1}^{-1},\qquad P_D=L,\qquad
 Q_i=P_{i-1}^{-\top}\widetilde Q_i,\qquad
 K_i=P_{i-1}^{-\top}\widetilde K_i.
\]
These formulas give an analytic change of native coordinates near $V_i=I_d$, including at $t=0$. In the new coordinates, $M_i=\widetilde Q_i\widetilde K_i^\top$ and the realized function depends only on $L$ and the $M_i$. The matrices $P_1,\ldots,P_{D-1}$ are independent coordinates absent from that function, contributing $(D-1)d^2$ exact kernel directions. The path becomes
\[
 \widetilde Q_i=tU_i,\qquad\widetilde K_i=tW_i,\qquad L=I.
\]
The Jacobian of this coordinate change and its inverse are analytic at zero. They therefore preserve the Smith factors and change singular values by at most fixed multiplicative constants near zero.

\subsection{Homogeneity gives joint analytic divisibility}\label{homogeneity-gives-joint-analytic-divisibility}

For any known normalizer and \(t\ne0\), bilinearity gives \[
S_{t^2B}^{\phi}(X)=t^{-1}S_B^{\phi}(tX).
\] Iterating with identity values gives the exact identity \[
F_{I,t^2\mathbf B}(X)=t^{-1}F_{I,\mathbf B}(tX).
\tag{D.1}\label{eq:app-d1}
\] The right side extends analytically to $t=0$ because $F_{I,\mathbf B}(0)=0$. Thus \eqref{eq:app-d1} converts small score matrices into small inputs, where the order estimates of Appendix~\ref{appendix-c} apply. For softmax, the score multiplier $t^2$ also corresponds to temperature $t^{-2}$.

Let \(B_i=U_iW_i^\top\). At fixed external \(t\), a factor variation produces \[
\delta M_i=tT_i,\qquad
T_i=\delta\widetilde Q_iW_i^\top+U_i\delta\widetilde K_i^\top.
\] Since $M_i=t^2B_i$, this variation corresponds, for fixed $t\ne0$, to $\delta B_i=t^{-1}T_i$. The additional factor $t^{-1}$ in \eqref{eq:app-d1} therefore gives \[
\delta_i F(X)=t^{-2}D_{B_i}F_{I,\mathbf B}(tX)[T_i].
\tag{D.2}\label{eq:app-d2}
\] Write the analytic input expansion as $D_{B_i}F(X)[T_i]=\sum_{\nu\ge N_i}f_{i,\nu}(X;T_i)$, with each $f_{i,\nu}$ homogeneous of degree $\nu$. Section~\ref{an-order-estimate-for-arbitrary-clouds} gives $N_i\ge3$, hence
\[
 \delta_iF(t,X)=\sum_{\nu\ge N_i}t^{\nu-2}f_{i,\nu}(X;T_i)
 \in t^{N_i-2}\mathbb R\{t,X\}^{d\times n}.
\]
After division by $t^{N_i-2}$, the derivative remains analytic jointly in $(t,X)$. Thus the bound applies to every output component on every sufficiently small input, before choosing a finite bank. The argument also allows the normalizer degrees to differ across layers. Since $\alpha_i=e_i+1$, the input recurrence gives
\[
N_i-2=2k_i-1+2e_{i-1}=\beta_i,
\quad e_0=0,\quad e_i=3e_{i-1}+2k_i.
\] At $t=0$ all scores vanish, so the effective stack sends $X$ to its consensus mean $\bar x\mathbf1^\top$. Varying the independent readout coordinate therefore gives $(\delta L\bar x)\mathbf1^\top$ at the endpoint. These $d^2$ directions have order zero.

\subsection{The divided frame and exact Smith factors}\label{the-divided-frame-and-exact-smith-factors}

We now show that the derivative bounds are attained. Divide each layer-$i$ factor derivative by $t^{\beta_i}$. The preceding divisibility makes the quotient analytic, so we can evaluate it at $t=0$. By \eqref{eq:app-d2}, the resulting function is the homogeneous input-degree-\(N_i\) part of \(D_{B_i}F_{I,\mathbf B}(X)[T_i]\). Its mean, restricted to the line probes and then to the appropriate mean degree, is a nonzero scalar times \[
A_{i-1}^2 z\,DP_{k_i}(B_i)[T_i].
\tag{D.3}\label{eq:app-d3}
\] Section~\ref{immersion-and-the-sharp-local-order} gives injectivity in $T_i$ after canceling the nonzero prefix amplitude and a nonzero component of $z$. Each group is therefore independent on a complement to its factor-tangent kernel. Across groups, homogeneous degree separates them:
\[
 1<N_1<\cdots<N_D,\qquad
 f_0+\sum_i f_i=0,\quad \deg f_0=1,\ \deg f_i=N_i
 \quad\Longrightarrow\quad f_0=f_1=\cdots=f_D=0.
\]
Here $f_0=\delta L\bar x$ is the value-product endpoint function. This proves joint independence, including the absence of interlayer cancellations.

Since the leading derivative functions are independent, Section~\ref{finite-ordinary-evaluations} supplies a finite bank on which their evaluations remain independent. Fix this bank and evaluate the analytic divided derivatives to obtain $E(t)$. Its value $E(0)$ has full column rank, so we can select a square row submatrix whose determinant is nonzero at zero and hence a unit in $\mathcal O$. After permuting rows, write $E=(E_1^\top,E_2^\top)^\top$ with $E_1(0)$ invertible. The explicit analytic row operation is
\[
 U(t)=\begin{pmatrix}E_1^{-1}&0\\-E_2E_1^{-1}&I\end{pmatrix},
 \qquad U(t)E(t)=\begin{pmatrix}I\\0\end{pmatrix}.
\]
Applying this row operation to
\[
J_{\mathrm{vis}}(t)
=E(t)\operatorname{diag}\bigl(I_{d^2},
t^{\beta_1}I_{m_1},\ldots,t^{\beta_D}I_{m_D}\bigr),
\]
gives the diagonal blocks $I_{d^2},t^{\beta_1}I_{m_1},\ldots,t^{\beta_D}I_{m_D}$, with zero rows below them. Thus no further powers of $t$ remain hidden in $E(t)$: the displayed exponents are the exact Smith orders. Since the selected determinant stays nonzero near zero, the same bank works for every sufficiently small $t$.

\subsection{Factor kernels, multiplicities and singular values}\label{factor-kernels-multiplicities-and-singular-values}

For full-column-rank \(U,W\in\mathbb R^{d\times r}\), the constant map \[
(\Delta Q,\Delta K)\longmapsto\Delta QW^\top+U\Delta K^\top
\]
has kernel $\{(UA,-WA^\top):A\in\mathbb R^{r\times r}\}$. Substitution shows that each displayed pair lies in the kernel. Conversely, let $q_U:\mathbb R^d\to\mathbb R^d/\operatorname{im}U$ be the quotient map and apply it columnwise to a kernel pair. Then
\[
 \Delta QW^\top+U\Delta K^\top=0
 \ \Longrightarrow\ q_U(\Delta Q)W^\top=0
 \ \Longrightarrow\ \Delta Q=UA
 \ \Longrightarrow\ \Delta K=-WA^\top.
\]
The middle implication uses the full row rank of $W^\top$, and the last uses a left inverse of $U$. These $r^2$ independent directions give columns that vanish identically along the path. Thus
\[
m_i=2dr_i-r_i^2=r_i(2d-r_i).
\] Choose a constant linear complement in each factor pair. Together with the free prefix-value coordinates this yields exactly \[
\zeta=\sum_i r_i^2+(D-1)d^2
\]
zero source columns, with count identity
\[
 d^2+\sum_i m_i+\zeta=Dd^2+2d\sum_i r_i.
\]
These columns vanish for all $t$ along the path. The remaining score columns are nonzero for small $t\ne0$ and vanish only at $t=0$, where the pointwise kernel is consequently larger.

Solving the recurrence gives \[
e_i=2\sum_{j\le i}3^{i-j}k_j,
\qquad
\beta_i=2k_i-1+4\sum_{j<i}3^{i-1-j}k_j.
\] For common \(k\), \(e_i=k(3^i-1)\), giving \(\beta_i=2k3^{i-1}-1\). This completes the algebraic part of Proposition~\ref{prop-heterogeneous-spectrum} and its common-degree specialization, Theorem~\ref{thm-spectrum}.

To pass from the algebraic orders to singular values, write the reduction as $J(t)=A(t)D(t)B(t)$, including the zero rows and columns in $D(t)$. Both $A,B$ and their inverses are analytic at zero, hence have bounded operator norms on a sufficiently small compact interval. For the resulting diagonal matrix $D(t)$, singular-value min--max gives
\[
 c\,\sigma_j(D(t))\le\sigma_j(J(t))\le C\,\sigma_j(D(t)),
 \qquad 0<|t|\ll1,
\]
for every ordered nonzero singular value. This is the convergent analytic invariant-factor comparison \citep{kaveh2022}, where $c,C>0$ depend on the bank, normalizers and path. If the observation metric satisfies $c_0I\preceq\mathcal W\preceq C_0I$, then
\[
 c_0J^\top J\preceq J^\top\mathcal WJ\preceq C_0J^\top J.
\]
Its nonzero eigenvalues have orders $0$ with multiplicity $d^2$, and $2\beta_i$ with multiplicity $m_i$. This gives the spectrum of the Gauss--Newton matrix. For a squared loss, the full Hessian also contains terms weighted by the observation residual, and those terms vanish at an exact fit.

\subsection{Covariance and two-plane extensions}\label{covariance-and-two-plane-extensions}

\begin{corollary}[Native spectrum through invertible prefixes]\label{cor-covariance-spectrum}

In the native setting of Theorem~\ref{thm-spectrum}, instead assume \(n\ge d+1\), \(g_i(0)\ne0\), and \(U_iW_i^\top\) invertible for \(i<D\), so \(r_i=d\) there. The final factors remain full column rank but their product need not have nonzero symmetric part. For a separating fixed finite bank, the same multiplicities and kernel count hold, with \[
\beta_i=2\cdot3^{i-1}-1.
\] No symmetric-part hypothesis is needed on the invertible prefix.

\end{corollary}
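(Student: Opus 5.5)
The plan is to run the proof of Proposition~\ref{prop-heterogeneous-spectrum} with two substitutions: the line-probe signature (D.3) is replaced by the covariance coefficient (C.10), and the contrast recursion $e_i=3e_{i-1}+2k$ is specialized to the nonzero-slope case $k=1$.

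\textbf{Coordinates and kernel.} I first pass to the source coordinates $(P_1,\ldots,P_{D-1},L,\widetilde Q_i,\widetilde K_i)$ of Section~\ref{the-native-source-and-its-value-coordinates}. In these coordinates the prefix value coordinates account for $(D-1)d^2$ exact kernel directions. The factor-tangent kernels $\{(U_iA,-W_iA^\top)\}$ of Section~\ref{factor-kernels-multiplicities-and-singular-values} give $r_i^2$ further directions per layer. That computation uses only full column rank, so the count $\zeta$ and the multiplicities $m_i=r_i(2d-r_i)$ carry over unchanged. For the prefix layers $r_i=d$, so $m_i=d^2$ there.

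\textbf{Divisibility.} The homogeneity identity (D.1)--(D.2) does not use the symmetric part. Since $g_i(0)\ne0$ means $k_i=1$, the order estimates (C.1)--(C.2) give divisibility of each layer-$i$ factor derivative by $t^{\beta_i}$, where
\[
\beta_i=2e_{i-1}+1,\qquad e_i=3e_{i-1}+2,\qquad e_i=3^i-1,
\]
so $\beta_i=2\cdot3^{i-1}-1$. This gives the upper bounds. The readout directions have order zero, exactly as in Section~\ref{homogeneity-gives-joint-analytic-divisibility}.

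\textbf{Lower bounds.} This is the main step. After dividing by $t^{\beta_i}$ and setting $t=0$, the layer-$i$ divided derivative is the homogeneous degree-$N_i$ part of $D_{B_i}F_{I,\mathbf B}(X)[T_i]$, with $N_i=2\cdot3^{i-1}+1$. I will take its token mean on multidirectional clouds $x_a=m+y_a$ with covariance $C_0$. By (C.8)--(C.10), the mean part that is linear in $m$ equals
\[
g_i(0)\,C_{i-1}(C_0)\,T_i^\top m,
\]
where $C_{i-1}$ is the iterated prefix covariance (C.9) built from the invertible $B_j=U_jW_j^\top$, $j<i$. I must check that the leading mean coefficient of the derivative is exactly this term. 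The remainders in (C.8) are of higher order, and the derivative at fixed prefix enters only through the last layer's $M$.

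Next I use the surjectivity of $\Phi_M$ onto the positive-definite cone, proved via the Thompson-metric contraction, together with $n\ge d+1$, which realizes every $C_0>0$. Then $C_{i-1}$ ranges over an open set of symmetric matrices. Suppose $T_i^\top$ is killed for all such $C_{i-1}$. Reading one output coordinate ($L=I$ at the endpoint), the argument of Section~\ref{covariance-expansion} shows that $C_{i-1}T_i^\top m=0$ for all $C_{i-1}$ and $m$, which forces $T_i=0$. Combined with the factor-kernel description, this gives injectivity on the complement. For the final layer no prefix invertibility beyond $i<D$ is needed, and the same $m$-linear term recovers an arbitrary $T_D$, so no symmetric-part hypothesis is required anywhere.

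\textbf{Independence and conclusion.} Groups from different layers have distinct homogeneous degrees $1<N_1<\cdots<N_D$, so the divided leading functions are jointly independent. Section~\ref{finite-ordinary-evaluations} then supplies a finite bank on which their evaluations stay independent. The analytic row reduction of Section~\ref{the-divided-frame-and-exact-smith-factors} yields exact Smith factors $t^{\beta_i}$, and the min--max comparison converts them into the stated singular-value orders. The step I expect to be delicate is justifying that the covariance coefficient survives differentiation in the factor directions at the degenerate point $t=0$. This requires the divided derivative frame to be evaluated on the rescaled stack $F_{I,\mathbf B}$ with $\mathbf B$ having invertible prefixes, rather than at zero scores.
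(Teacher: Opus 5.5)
Your proposal is correct and follows the paper's proof essentially step for step: $k_i=1$ in the divisibility from (D.1)--(D.2) gives $\beta_i=2\cdot3^{i-1}-1$, and the covariance coefficient $g_i(0)C_{i-1}(C_0)T_i^\top m$ from (C.10) replaces the line-probe signature (D.3) to prove injectivity of each divided group. The remaining steps are also the paper's: joint independence by distinct degrees, then the unchanged bank, row-reduction and factor-kernel arguments. One minor simplification is available: since $L=I$ along the path, the whole vector $C_{i-1}(C_0)T_i^\top m$ is observed, and $C_{i-1}(C_0)$ is already positive definite for any single $C_0>0$ (congruence through the invertible $B_j$), so you do not need the Thompson-metric surjectivity here.
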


Under Corollary~\ref{cor-covariance-spectrum}, every normalizer has first nonconstant degree $k_i=1$. The divisibility proof \eqref{eq:app-d1}--\eqref{eq:app-d2} still gives \(\beta_i=N_i-2=2\cdot3^{i-1}-1\) for every group. The hypothesis \(\operatorname{sym}B_i\ne0\) was used only to prove injectivity of the divided leading functions in Section~\ref{the-divided-frame-and-exact-smith-factors}. Replace that step by \eqref{eq:app-c10}: the mean of the homogeneous degree-\(N_i\) derivative is \[
g_i(0)C_{i-1}(C_0)T_i^\top m.
\] Covariance surjectivity from Section~\ref{covariance-expansion} gives
\[
 g_i(0)C_{i-1}(C_0)T_i^\top m\equiv0\quad\Longrightarrow\quad T_i=0.
\]
Only the preceding matrices must be invertible. For the final score, the argument needs only the stated full-column-rank factors for its tangent rank count. Distinct input degrees give joint independence. The same bank construction, row reduction and factor-kernel calculation then give every power, multiplicity and zero of the full native Jacobian, including the value entries.

\begin{corollary}[Native spectrum through nondegenerate two-plane restrictions]\label{cor-two-plane-spectrum}

Keep the full-column-rank native factor and trainable identity-value hypotheses of Theorem~\ref{thm-spectrum}. Instead of the symmetric-part conditions, assume \(d\ge2\), \(n\ge3\), \(g_i(0)\ne0\), and that each polynomial \[
U\longmapsto\det\!\bigl(U^\top(U_iW_i^\top)U\bigr),\qquad i<D,
\] is nonzero. No symmetric-part condition is imposed on the final factor product. Then a fixed separating finite bank has the same multiplicities and exact kernel count, with \(\beta_i=2\cdot3^{i-1}-1\). In particular this holds when all prefix products are nonzero skew, even if they are singular in the ambient space.

\end{corollary}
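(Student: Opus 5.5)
We follow the template of Corollary~\ref{cor-covariance-spectrum}: keep the divisibility half of the proof of Theorem~\ref{thm-spectrum} unchanged, and replace only the injectivity of the divided leading functions by the two-plane argument of Proposition~\ref{prop-two-plane}.

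First, the coordinate change of Section~\ref{the-native-source-and-its-value-coordinates} does not use the symmetric-part hypothesis. It splits off the $(D-1)d^2$ free prefix-value directions and puts the path in the form $\widetilde Q_i=tU_i$, $\widetilde K_i=tW_i$, $L=I$. The homogeneity identity \eqref{eq:app-d1} and the order estimates of Section~\ref{an-order-estimate-for-arbitrary-clouds} hold for arbitrary scores. Since $g_i(0)\ne0$ forces $k_i=1$, every layer-$i$ factor derivative is divisible by $t^{\beta_i}$ with $\beta_i=N_i-2=2\cdot3^{i-1}-1$. After division, the $t=0$ quotient is the homogeneous degree-$N_i$ part of $D_{B_i}F_{I,\mathbf B}(X)[T_i]$, where $T_i=\delta\widetilde Q_iW_i^\top+U_i\delta\widetilde K_i^\top$. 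The factor-kernel computation of Section~\ref{factor-kernels-multiplicities-and-singular-values} needs only full column rank. It therefore again gives $r_i^2$ exact kernel directions per layer and multiplicity $m_i=r_i(2d-r_i)$ on a constant complement.

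The key step is injectivity of $T_i\mapsto D_{B_i}F_{I,\mathbf B}(X)[T_i]_{\deg N_i}$ on all inputs. We restrict to $X=UZ$ with $U$ in the open set $\mathcal U$ of Proposition~\ref{prop-two-plane}, now built from $B_i=U_iW_i^\top$ and $L=I$. By \eqref{eq:app-c12}, restriction commutes with differentiation and with taking homogeneous parts. The restricted leading derivative is therefore the corresponding leading function for the width-two tuple $(U,(U^\top B_jU)_j)$, applied to $U^\top T_iU$. Its prefix compressions are invertible, and $n\ge3=2+1$. So \eqref{eq:app-c10} gives the mean $g_i(0)\,U\,C_{i-1}(C_0)\,(U^\top T_iU)^\top m$. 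The surjectivity of the iterated covariance map from Section~\ref{covariance-expansion}, together with $U\ne0$, then forces $U^\top T_iU=0$. As in Proposition~\ref{prop-two-plane}, this is a polynomial identity in $U$ holding on an open set, so $T_i=0$. The final layer needs no invertibility, because only prefix compressions enter $C_{i-1}$. Independence across groups follows from the distinct homogeneous degrees $1<N_1<\cdots<N_D$, exactly as in Section~\ref{the-divided-frame-and-exact-smith-factors}, with the readout group at degree one.

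Finally, Section~\ref{finite-ordinary-evaluations} supplies a finite bank separating the divided frame $E(0)$. The analytic row reduction of Section~\ref{the-divided-frame-and-exact-smith-factors} then yields the Smith factors $1$ (multiplicity $d^2$) and $t^{\beta_i}$ (multiplicity $m_i$). The invariant-factor comparison of Section~\ref{factor-kernels-multiplicities-and-singular-values} turns these into the singular-value bounds, and the count $\zeta=(D-1)d^2+\sum_ir_i^2$ follows. For the skew special case, we use $\det(U^\top MU)=(a^\top Mb)^2\not\equiv0$.

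The main obstacle is making the restriction compatible with the divided $t\to0$ limit. We must check that the degree-$N_i$ leading part of the width-two restricted derivative is exactly what \eqref{eq:app-c10} describes, even though $U$ is fixed and the bank lives in ambient inputs. We handle this by working with the joint analytic quotient in $(t,X)$ before restricting. The quotient is analytic in both variables, so substitution $X=UZ$ and evaluation at $t=0$ commute.
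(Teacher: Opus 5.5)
Your proposal is correct and follows the paper's proof essentially step for step. You keep the symmetric-part-free divisibility, restrict the divided derivative to $X=UZ$ with $U\in\mathcal U$, use \eqref{eq:app-c12} and covariance injectivity to get $U^\top T_iU=0$ on an open set and hence $T_i=0$, and then finish with degree separation, the finite bank, the analytic row reduction and the factor-kernel count. Your remark that restriction commutes with the divided $t\to0$ limit, because the quotient is jointly analytic in $(t,X)$, makes explicit a point the paper leaves implicit.
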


\textbf{Proof.} The all-input divisibility in Section~\ref{homogeneity-gives-joint-analytic-divisibility} uses no symmetric-part condition. Restrict the divided derivative associated to $T_i$ to inputs $X=UZ$, with $U$ in the admissible frame set $\mathcal U$ of Proposition~\ref{prop-two-plane}. By \eqref{eq:app-c12}, it is the width-two endpoint for $U^\top T_iU$. Covariance injectivity gives
\[
 \text{ambient endpoint}=0
 \ \Longrightarrow\ U^\top T_iU=0\quad(U\in\mathcal U)
 \ \Longrightarrow\ T_i=0,
\]
the last implication being polynomial vanishing on a nonempty open set. Each endpoint map is therefore injective on its factor-tangent image. Its input degree differs from every other score group and from the degree-one value group, proving joint independence. The finite-bank construction, analytic row reduction and kernel calculation of Sections~\ref{the-divided-frame-and-exact-smith-factors}--\ref{factor-kernels-multiplicities-and-singular-values} finish the full native spectrum. \(\square\)

\section{Geometry of the realization map}\label{appendix-geometry}

\subsection{Germs, finite observations and parameter fibers}\label{sec:geometry-observations}

We now describe the local geometry underlying the identification results. We keep $D\ge1$, $n\ge2$, and the prescribed positive analytic normalizers. Recall that the effective realization map is
\[
\mathcal R:\Theta_{\mathrm{eff}}\longrightarrow
\mathbb R\{X\}^{p\times n},\qquad
(L,\mathbf M)\longmapsto[F_{L,\mathbf M}]_0.
\]
Here $\mathbb R\{X\}$ is the ring of power series convergent near the zero input. Thus two realized germs agree precisely when their functions agree on some neighborhood of $X=0$. We obtain finite-dimensional observations either by retaining Taylor coefficients through degree $N$, denoted by $j_0^N$, or by evaluating on a finite input bank $\mathcal B$, denoted by $\operatorname{ev}_{\mathcal B}$. These give
\[
 \begin{aligned}
 T_N&=j_0^N\circ\mathcal R:\Theta_{\mathrm{eff}}\longrightarrow\mathcal J_N,\\
 \Psi_{\mathcal B}&=\operatorname{ev}_{\mathcal B}\circ\mathcal R:
 \Theta_{\mathrm{eff}}\longrightarrow(\mathbb R^{p\times n})^{\mathcal B},\\
 \mathcal O_{\mathcal B}&=\Psi_{\mathcal B}\circ\pi.
 \end{aligned}
\]
Here $\mathcal J_N$ is the finite-dimensional space of Taylor coefficient arrays through degree $N$. Theorems~\ref{thm-identification} and~\ref{thm-exceptions} describe the fibers of $\mathcal R$. Theorem~\ref{thm-jets} then shows when the finite observation $T_N$ gives local coordinates on the effective space. Finally, Theorem~\ref{thm-spectrum} studies how the differential of the native observation $\mathcal O_{\mathcal B}$ vanishes along the specified parameter path.

To describe a fiber locally, we instead consider analytic functions of the parameters near a fixed tuple $\theta_*$. For a finite analytic observation map $f$, we define its local parameter ring and residual ideal by
\[
 \mathcal O_{\theta_*}=\mathbb R\{\theta-\theta_*\},\qquad
 I_{\theta_*}(f)=(f_j(\theta)-f_j(\theta_*))_j\subset\mathcal O_{\theta_*}.
\]
The generators are the observation residuals, so their common zero germ is $(f^{-1}(f(\theta_*)),\theta_*)$. The ideal also records how these residuals vanish. For example, $(u)$ and $(u^2)$ have the same zero set but different orders of vanishing. We will use coordinate changes in the finite-dimensional parameter and observation spaces to describe this ideal. Thus the construction does not require a manifold structure on the space of realized input germs.

We first note why Taylor observations are polynomial in the parameters. Every intermediate token vanishes at $X=0$, so each normalization denominator has constant term $n\phi_i(0)=n$. Writing the denominator as $n+u(X)$, with $u(0)=0$, we have
\[
 (n+u)^{-1}\equiv n^{-1}\sum_{j=0}^N(-u/n)^j\pmod{(X)^{N+1}}.
\]
Only finitely many terms survive the truncation, and their denominators are powers of the fixed number $n$. Consequently, $T_N$ and $T_N\circ\pi$ are polynomial maps, with the normalizer coefficients treated as constants. Ordinary evaluations need only be analytic. Nevertheless, whenever $DT_N$ is injective, the finite-evaluation construction in Section~\ref{finite-ordinary-evaluations} supplies a bank for which $D\Psi_{\mathcal B}$ is also injective.

\subsection{The boundary of the analytic argument}\label{sec:flat-boundary}

The proofs above use analyticity in two ways: Taylor coefficients detect a nonzero germ, and a nonzero germ can be canceled from a product identity. We give a smooth example to explain why these steps require analyticity. Define
\[
 b(0)=0,\qquad b(s)=e^{-1/s^2}\ (s\ne0),\qquad \phi(s)=e^{s b(s)}.
\]
Off zero, every derivative of $b$ is a polynomial in $1/s$ times $e^{-1/s^2}$ and extends by zero at the origin. Hence $\phi-1$ is smooth and flat, although
\[
 \phi>0,\qquad (s b(s))'=b(s)(1+2/s^2)>0\quad(s\ne0),
\]
so $\phi$ is strictly increasing and injective.

For one layer, two scalar tokens and unit value, composing the flat function with quadratic scores preserves flatness, and the denominator remains nonzero. If $A(X)=\mu(X)\mathbf1^\top$, then
\[
 j_0^N S_c^\phi=j_0^N A\quad\text{for every }N,c,
 \qquad S_c^\phi(r,0)_1=\frac{r\phi(cr^2)}{\phi(cr^2)+1}\quad(r>0).
\]
Since $\phi$ is strictly increasing, the displayed output distinguishes $c=1$ and $c=2$ for every $r>0$, although all their Taylor coefficients agree. Thus the full function contains score information that every finite jet misses. For the separate cancellation step, consider the two smooth germs
\[
 b_+(s)=b(s)\mathbf1_{s>0}\not\equiv0,\qquad
 b_-(s)=b(s)\mathbf1_{s<0}\not\equiv0,\qquad b_+b_-=0.
\]
Both are nonzero arbitrarily close to zero, but their product vanishes. Thus the smooth germ ring admits zero divisors. These examples explain the limits of the Taylor and cancellation arguments, while leaving full-function identification as a separate question.

\subsection{Proof of Proposition~\ref{prop-native-geometry}}\label{sec:geometry-normal-form}

We first choose coordinates in the native parameter space that separate the value maps from the score factorizations. As in Proposition~\ref{prop-native-geometry}, every token width is $d$ and the value matrices are invertible near the marked point. The query and key matrices satisfy $Q_i,K_i\in\mathbb R^{d\times r_i}$ and may be rank deficient. Set
\[
P_0=I_d,\quad P_i=V_i\cdots V_1,\quad L=P_D,
\qquad
\widetilde Q_i=P_{i-1}^\top Q_i,\quad
\widetilde K_i=P_{i-1}^\top K_i.
\]
The matrices $P_1,\ldots,P_{D-1},L$ and the transformed factors form native coordinates. Indeed, the original parameters can be recovered analytically by
\[
 V_i=P_iP_{i-1}^{-1},\qquad P_D=L,\qquad
 Q_i=P_{i-1}^{-\top}\widetilde Q_i,\qquad
 K_i=P_{i-1}^{-\top}\widetilde K_i.
\]
In this chart the reduction is
\[
\pi=(L,\widetilde Q_1\widetilde K_1^\top,\ldots,
\widetilde Q_D\widetilde K_D^\top).
\]
Thus the prefix-value coordinates $P_1,\ldots,P_{D-1}$ are absent from the effective tuple. The native observation is $\Psi\circ\pi$, where $\Psi$ is the finite effective observation in the proposition. We next choose observation coordinates in which $\Psi$ becomes the inclusion of its effective coordinates.

Let $a=(D+1)d^2$ be the dimension of the effective parameter space. Since $D\Psi$ is injective at $\eta_*$, we can select $a$ observed coordinates, indexed by $I$, for which $D\Psi_I(\eta_*)$ is nonsingular. The analytic inverse-function theorem then gives a local inverse $\Psi_I^{-1}$. For an observation vector $y$, define the new coordinates by
\[
y\longmapsto
\left(\Psi_I^{-1}(y_I),\;
y_{I^c}-\Psi_{I^c}(\Psi_I^{-1}(y_I))\right)
\]
The first coordinate recovers the effective tuple from $y_I$, while the second subtracts its predicted remaining observations. The inverse sends $(v,w)$ to $(\Psi_I(v),w+\Psi_{I^c}(v))$, so this is an invertible analytic change of coordinates. In particular, it sends $\Psi(v)$ to $(v,0)$, giving
\[
 \Psi\circ\pi\ \sim_{\mathrm{an}}\ (\pi,0)
 =\bigl(L,(\widetilde Q_i\widetilde K_i^\top)_i,0\bigr).
\]

We also need to check that changing observation coordinates preserves the residual ideal. For an analytic coordinate change $\kappa$, write $f_*=f(\theta_*)$ and factor its difference along the line segment from $f_*$ to $f$. Applying the same factorization to $\kappa^{-1}$ gives
\[
 \kappa(f)-\kappa(f_*)=A(f)(f-f_*),\qquad
 I(\kappa\circ f)\subseteq I(f)\subseteq I(\kappa\circ f).
\]
The two inclusions give equality of the residual ideals. Thus fixing the observed function fixes $L$ and each product $\widetilde Q_i\widetilde K_i^\top$, while leaving the prefix-value coordinates free. The native fiber germ is therefore
\[
(\mathrm{GL}_d(\mathbb R)^{D-1},\mathbf P_*)\times
\prod_{i=1}^D
\bigl(\mu_i^{-1}(M_i^*),
       (\widetilde Q_i^*,\widetilde K_i^*)\bigr),
\qquad \mu_i(Q,K)=QK^\top.
\]
Here $\mu_i^{-1}(M_i^*)$ is the space of factorizations of the fixed score $M_i^*$ near the marked pair. The argument allows deficient factor ranks, so these may be singular multiplication fibers, as studied by \citet{pepinlehalleur2024}. Thus the product describes all nearby native parameters realizing the marked function. To determine which nearby effective scores are realizable, one would instead study the local image of each multiplication map $\mu_i$.

At full-column-rank factors, Section~\ref{factor-kernels-multiplicities-and-singular-values} gives
\[
 \ker D\mu_i=\{(Q_iA,-K_iA^\top):A\in\mathbb R^{r_i\times r_i}\},\qquad
 \operatorname{rank}D\mu_i=r_i(2d-r_i).
\]
The independent readout and score blocks therefore give $\operatorname{rank}D(\Psi\circ\pi)=d^2+\sum_i r_i(2d-r_i)$ under the immersion hypothesis.

On the regular locus, the signs at even normalizers give separate effective branches, and this construction applies locally at each branch where $D\Psi$ is injective. At collapse, later scores become invisible and $D\Psi$ can acquire a kernel. We can then no longer construct $\Psi_I^{-1}$ by this argument. This is why Theorem~\ref{thm-spectrum} analyzes the derivative through the collapsed endpoint directly. For the fiber at a first zero score, the following argument instead separates the visible prefix from the free suffix.

\subsection{The residual ideal at the first zero score}\label{sec:geometry-residual-ideal}

\begin{proposition}[First-zero residual ideal]\label{prop-collapse-ideal}
Suppose $g_i(0)\ne0$ for every layer, $L_*\ne0$,
$\operatorname{sym}M_i^*\ne0$ for $i<s$, and $M_s^*=0$.
The suffix scores are arbitrary. There is a fixed finite ordinary-input bank whose observation residual ideal at the marked effective tuple is exactly
\[
\bigl(L-L_*,\;M_1-M_1^*,\ldots,M_{s-1}-M_{s-1}^*,\;M_s\bigr).
\]
Its squared residual norm is locally comparable to the sum of the squares of these entries, uniformly over a sufficiently small suffix neighborhood. For a native realization with square invertible values, its pullback is the corresponding prefix multiplication-fiber ideals and the zero-product ideal at layer $s$, with free suffix factors and value-prefix coordinates.
\end{proposition}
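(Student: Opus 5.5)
The strategy is to build an effective observation $\Psi=\operatorname{ev}_{\mathcal B}\circ\mathcal R$ whose residual ideal is, after an analytic change of observation coordinates, generated by exactly the listed entries. The native statement then follows by pulling back through the chart of Proposition~\ref{prop-native-geometry}.

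I would first work with Taylor coefficients and pass to a bank at the end. With the prefix fixed at nonskew scores and $g_i(0)\ne0$, Section~\ref{nonzero-slope-rational-local-inversion-and-uniform-determination} supplies the recovery formulas. Consensus gives $L$ linearly. For $j\le s$, the coefficient \eqref{eq:app-c4} at degree $2\cdot3^{j-1}+1$ is linear and injective in $M_j$, with a prefactor $a_{j-1}(0,z)^2g_j(0)Lz$ that is a unit near the marked tuple. At the marked point $a_{j-1}(0,z_b)\ne0$ for $j\le s$, because the prefix symmetric parts are nonzero. I would choose fixed directions $z_1,\dots,z_d$ with $lz_b\ne0$ and $z_b^\top M_i^*z_b\ne0$ for $i<s$. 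The rational recovery map $G$ then expresses $L$, $M_1,\dots,M_s$ as analytic functions of finitely many observed coefficients.

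The key point is that this stays valid uniformly in the suffix. The suffix scores $M_{s+1},\dots$ enter these coefficients only at higher spread order, by \eqref{eq:app-c1} and the competitor order bound \eqref{eq:competing-contrast-order}. This holds even though $M_s$ is near $0$, since the outgoing contrast of layer $s$ has order at least $3^s$ whatever $M_s$ is. So the selected coordinates $y_I$ equal $\Phi(L,M_1,\dots,M_s)$ with $D\Phi$ invertible. Independence from the suffix makes the residual ideal of $y_I$ equal to $(L-L_*,M_i-M_i^*,M_s)$.

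Next I must show that the remaining observations add nothing to this ideal. Every full residual $\Psi(\eta)-\Psi(\eta_*)$ lies in the ideal. Indeed, $\Psi(\eta_*)$ is independent of the suffix by Theorem~\ref{thm-exceptions}, since $M_s^*=0\in Z_s$. Also $\Psi$ restricted to $\{M_s=0\}$ depends only on $(L,M_{<s})$. Hence $\Psi(\eta)-\Psi(\eta_*)=[\Psi(\eta)-\Psi(\eta)|_{M_s=0}]+[\Psi(L,M_{<s},0)-\Psi(\eta_*)]$. The first bracket is divisible by $M_s$ by Hadamard's lemma, and the second is divisible by the prefix differences. Together with the reverse inclusion from the selected coordinates, this gives the exact ideal. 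For the finite bank, Section~\ref{finite-ordinary-evaluations} applied to the leading (divided) coefficient functions gives inputs whose evaluations determine the selected Taylor functionals up to an analytic invertible change. Alternatively, I could evaluate at scaled inputs $\rho Y$ and solve the Vandermonde system in $\rho$.

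The comparability of the squared norm follows from the same decomposition. The upper bound comes from the generators with analytic coefficients bounded uniformly on the compact suffix neighborhood. The lower bound holds because $y_I-y_I^*$ is an analytic diffeomorphic image of the listed entries. I expect the main obstacle to be the ordinary-bank version of the selection step. An evaluation mixes all Taylor degrees, including suffix-dependent ones, so I must show that the evaluation map still factors as an invertible-in-prefix part plus an $M_s$-divisible suffix part, uniformly in the suffix. My first attempt would be to use the factorization $\Psi=\Psi|_{M_s=0}+M_s\cdot R$, with $R$ analytic, before choosing the bank. For the native pullback, I would substitute $M_i=\widetilde Q_i\widetilde K_i^\top$ in the coordinates of Proposition~\ref{prop-native-geometry}. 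This gives the prefix multiplication-fiber ideals and $(\widetilde Q_s\widetilde K_s^\top)$, with $P$ and the suffix factors free.
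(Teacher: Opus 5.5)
You prove the inclusion $I\subseteq(u)$, with $u=(L-L_*,(M_i-M_i^*)_{i<s},M_s)$, correctly and for every bank. Your split at $M_s=0$ plus Hadamard's lemma is the paper's factorization $r(u,v)=H(u,v)u$. It needs only $S^\phi_0(X)=\mu(X)\mathbf 1^\top$. You should not cite Theorem~\ref{thm-exceptions}, whose curvature or two-token hypotheses are not assumed here.

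The gap is in the reverse inclusion. You prove it only for Taylor coordinates $y_I$, but the statement asserts a finite ordinary-input bank. Neither transfer you propose works as stated:
\begin{itemize}
\item Point evaluations mix all input degrees, including suffix-dependent ones.
\item A Vandermonde solve over finitely many scalings $\rho_kY$ returns each coefficient plus a combination of all higher ones, so it never reproduces $y_I$ exactly.
\item Section~\ref{finite-ordinary-evaluations} only keeps a fixed finite-dimensional space of functions linearly independent on a bank. It does not let evaluations recover Taylor functionals of a nonlinearly parametrized family.
\end{itemize}
You name this step as the main obstacle but leave it open. Your lower comparability bound also rests on the Taylor coordinates, so it inherits the same gap.

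The missing step is short and uses what you already have. Since $r(u,v)=H(u,v)u$ for every bank, with $H(0,v)=D_ur(0,v)$, the reverse inclusion only needs $D_ur(0,v_*)$ to have full column rank. At $u=0$ the input to layer $s+1$ is consensus. There $D_XS^\phi_M(Y)[\dot Y]=\mu(\dot Y)\mathbf 1^\top$ and $D_MS^\phi_M(Y)=0$, so the suffix passes the mean variation unchanged. Your order bound gives the same conclusion for the jet. Differentiating the recovery equations of Section~\ref{nonzero-slope-rational-local-inversion-and-uniform-determination} then shows that $\dot u\mapsto D_uF(0,v_*)[\dot u]$ is injective into analytic germs.

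Its image is finite dimensional, so Section~\ref{finite-ordinary-evaluations} supplies a bank on which it stays injective. A nonsingular square block $H_I(0,v_*)$ stays nonsingular nearby. Then $u=H_I(u,v)^{-1}r_I(u,v)$ gives $(u)\subseteq I$. Boundedness of $H$ and $H_I^{-1}$ gives the two-sided norm estimate uniformly for $v$ near $v_*$. Your native pullback then goes through as you describe.
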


\begin{proof}
We separate the parameters that determine the marked function from those erased by the first zero score. Write
\[
 u=(L-L_*,(M_i-M_i^*)_{i<s},M_s),\qquad v=(M_{s+1},\ldots,M_D).
\]
When $u=0$, layer $s$ averages its inputs and every later layer fixes the resulting consensus. Thus the realized function is independent of $v$ on $u=0$. For any finite bank, let $r(u,v)$ be its observation vector minus the marked observation. Since $r(0,v)=0$, integrating the derivative in $u$ gives
\[
r(u,v)=H(u,v)u,\qquad
H(u,v)=\int_0^1 D_ur(\tau u,v)\,d\tau.
\]
Thus every residual is an analytic linear combination of the coordinates of $u$, and the residual ideal satisfies $I=(r_j)_j\subseteq(u)$. To prove the reverse inclusion, we construct a bank whose differential detects every $u$ direction.

For injectivity of the visible differential, consensus first forces $\delta L=0$. With earlier variations zero, Section~\ref{nonzero-slope-rational-local-inversion-and-uniform-determination} gives a nonzero prefix factor times
\[
 g_j(0)L_*z(m^\top\delta M_jz)=0\quad\Longrightarrow\quad\delta M_j=0
 \qquad(j\le s).
\]
The preceding symmetric parts make the prefix factors nonzero. The step $j=s$ survives every suffix because, at $Y=m\mathbf1^\top$,
\[
 D_XS_M^\phi(Y)[\dot Y]=\mu(\dot Y)\mathbf1^\top,\qquad
 D_MS_M^\phi(Y)[\dot M]=0.
\]
Thus the suffix passes the common-mean derivative unchanged, proving visible injectivity through the first zero regardless of $v$.

By Section~\ref{finite-ordinary-evaluations}, we may choose a bank separating these visible derivatives at $(0,v_*)$. Since $H(0,v_*)=D_ur(0,v_*)$, some square row submatrix $H_I$ has nonzero determinant there. Its determinant stays nonzero on a small neighborhood, so $H_I$ has an analytic inverse and
\[
 u=H_I(u,v)^{-1}r_I(u,v)
 \quad\Longrightarrow\quad (u)\subseteq I\subseteq(u).
\]
On a smaller neighborhood, bounded $H$ and $H_I^{-1}$ give
\[
 c\|u\|^2\le\|r(u,v)\|^2\le C\|u\|^2,\qquad c,C>0,
\]
uniformly over the suffix coordinate $v$.

For square invertible native values, the preceding source coordinates pull the ideal back to
\[
\bigl(L-L_*,\;
\widetilde Q_i\widetilde K_i^\top-M_i^*\ (i<s),\;
\widetilde Q_s\widetilde K_s^\top\bigr).
\]
Thus the suffix factors and the prefix-value coordinates are free in this native fiber. The assumed invertibility of the value maps is what makes them independent coordinates.
\end{proof}

The ideal describes the fiber through $M_s=0$. If we move to $M_s\ne0$, layer $s$ may transmit contrast again, allowing the nearby function to depend on the suffix.

\begin{corollary}[Serial collapse is not a product of layer fibers]\label{cor-serial-components}
Fix identity values and softmax, with one-column query/key factors $M_i=q_ik_i^\top$, where $q_i,k_i\in\mathbb R^d$. The scores may have rank zero. At the native point where every factor vanishes, the fiber over uniform averaging has residual ideal $(q_1k_1^\top)$, with every later factor free. It has two irreducible analytic components. For $D\ge2$, this differs from the product of the individual zero-product fibers, which has $2^D$ components.
\end{corollary}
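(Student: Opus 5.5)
The plan is to apply Proposition~\ref{prop-collapse-ideal} at $s=1$ in native coordinates, then check the component structure directly. With identity values, $P_i=I$ and $L=I$ are fixed (the values are not varied), so the effective tuple is $(I,q_1k_1^\top,\ldots,q_Dk_D^\top)$. At the marked point every score vanishes, so the realized function is uniform averaging $X\mapsto\mu(X)\mathbf1^\top$. Softmax has $g(0)=1\ne0$, and the hypotheses of Proposition~\ref{prop-collapse-ideal} with $s=1$ hold vacuously (there is no prefix before layer $1$). The proposition supplies a finite bank whose residual ideal in effective coordinates is $(L-I,M_1)$. With $L$ fixed, this pulls back to the native ideal $(q_1k_1^\top)$, generated by the $d^2$ entries $q_{1,a}k_{1,b}$, with $(q_i,k_i)_{i\ge2}$ free. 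As a sanity check on the inclusion $\supseteq$: any tuple with $q_1k_1^\top=0$ collapses at layer one, and later layers fix consensus. This is exactly Theorem~\ref{thm-exceptions}, since softmax has $Z_1\ni0$.

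Next I would compute the components. In $\mathbb R\{q_1,k_1\}$ the ideal generated by all products $q_{1,a}k_{1,b}$ equals $(q_1)\cdot(k_1)$. Since the variable sets are disjoint, this product of prime ideals equals $(q_1)\cap(k_1)$. The reverse inclusion holds because an element of both ideals must have every monomial divisible by some $q_{1,a}$ and by some $k_{1,b}$. Hence the zero germ is $\{q_1=0\}\cup\{k_1=0\}$, times the free suffix space. These two components are linear subspaces, so they are irreducible and neither contains the other (for $d\ge1$). There are therefore exactly two irreducible analytic components.

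For the comparison, the product of the individual zero-product fibers is $\prod_{i=1}^D\{q_ik_i^\top=0\}$. This set is the union over $\epsilon\in\{q,k\}^D$ of products of the corresponding coordinate subspaces, and these give $2^D$ distinct irreducible linear components. For $D\ge2$ we have $2^D>2$. Moreover, the serial fiber strictly contains this product, because tuples with $q_1=0$ and $q_2k_2^\top\ne0$ lie in the serial fiber. So the two fibers differ.

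The main obstacle is checking that the ideal equality, and not merely the equality of zero sets, transfers through the pullback. Since $L$ and the values are held fixed, the effective residual ideal restricted to the native subspace is just the ideal generated by the pulled-back generators. I would verify this step via the factorization $r=H(u,v)u$ together with the locally invertible minor $H_I$ from the proof of Proposition~\ref{prop-collapse-ideal}, applied with $u=q_1k_1^\top$ after composing with the polynomial map $(q_1,k_1)\mapsto q_1k_1^\top$.
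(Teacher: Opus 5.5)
Your proposal is correct and follows the paper's proof. You apply Proposition~\ref{prop-collapse-ideal} with $s=1$ and $L=I_d$ fixed, show $(q_1k_1^\top)=(q_1)\cap(k_1)$ by the monomial regrouping argument (convergent on a slightly smaller polydisc), and compare the two resulting components with the $2^D$ coordinate components of the product of one-layer fibers; your explicit transfer of the ideal through $(q_1,k_1)\mapsto q_1k_1^\top$, by composing $r=H(u,v)u$ and $u=H_I^{-1}r_I$ with the native map, and your remark that the serial fiber strictly contains the product, are sound additions.
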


\begin{proof}
Apply Proposition~\ref{prop-collapse-ideal} with $s=1$ and fixed $L=I_d$. In the convergent power-series ring,
\[
(q_1k_1^\top)=(q_1)\cap(k_1).
\]
Every monomial of a series in the intersection contains a $q_1$ coordinate and a $k_1$ coordinate. Grouping the monomials by a choice of such a pair expresses the series as a sum of the generators $q_{1,a}k_{1,b}$ times convergent series on a smaller polydisc. The ideals $(q_1)$ and $(k_1)$ are prime and incomparable, so
\[
 V(q_1k_1^\top)=V(q_1)\cup V(k_1)
\]
has exactly two irreducible analytic components, with all later factors free. For independent zero-product fibers at every layer, the choices $(q_i=0)$ or $(k_i=0)$ are independent, giving $2^D$ distinct coordinate components. The components here are irreducible analytic germs and can intersect at the marked point.
\end{proof}

\paragraph{A local learning-coefficient consequence.}\label{remark:collapse-learning}
We can also use this residual ideal to compute how a local loss integral decays. This gives the local learning coefficient from singular learning theory \citep{watanabe2009,lau2025llc}. Fix $n\ge2$, $D\ge1$, identity values, softmax, and one-column query/key factorizations $M_i=q_ik_i^\top$ with $q_i,k_i\in\mathbb R^d$. Mark the point where every factor is zero. Let $R(\theta)$ be the squared Euclidean observation residual over uniform averaging, using a finite ordinary-input bank supplied by Proposition~\ref{prop-collapse-ideal}. For a sufficiently small fixed product neighborhood $U$ of the marked point and an analytic prior density $\rho$ positive there, define the local pair $(\lambda,\nu)$ by
\[
 Z_U(T)=\int_U e^{-T R(\theta)}\rho(\theta)\,d\theta
 \asymp T^{-\lambda}(\log T)^{\nu-1},
 \qquad T\longrightarrow\infty .
\]
Compare this serial observation with observing the $D$ layers independently, using the corresponding one-layer banks and the sum of their squared residuals. Then
\[
 (\lambda,\nu)_{\mathrm{serial}}=(d/2,2),
 \qquad
 (\lambda,\nu)_{\mathrm{independent}}=(Dd/2,D+1).
\]
To compute these pairs, Proposition~\ref{prop-collapse-ideal} gives
\[
R_{\mathrm{serial}}\asymp\|q_1\|^2\|k_1\|^2,
\qquad
R_{\mathrm{independent}}\asymp\sum_i\|q_i\|^2\|k_i\|^2.
\]
The first comparison is uniform in the suffix factors. Since the analytic prior is positive, it is bounded above and below on a sufficiently small neighborhood. It therefore suffices to estimate the corresponding constant-density integrals. In polar coordinates, a single factor pair contributes, up to fixed constants,
\[
\begin{aligned}
\int_0^\epsilon\!\int_0^\epsilon
e^{-Ta^2b^2}a^{d-1}b^{d-1}\,da\,db
&=\int_0^{\epsilon^2}e^{-Tu^2}u^{d-1}
\log(\epsilon^2/u)\,du\\
&\asymp T^{-d/2}\log T,
\end{aligned}
\]
where we used $u=ab$ and then scaled $u$ by $T^{-1/2}$. In the serial network, integrating over the free suffix gives only a finite positive volume. For independent layers, the product neighborhood gives a product of $D$ such integrals. These yield the displayed pairs. The fixed constants in the loss comparison only rescale $T$, and comparison with inner and outer product neighborhoods gives the same pairs for small balls. Thus the residual ideal determines these local loss exponents throughout a parameter neighborhood, beyond the information supplied by a single Jacobian path. The exponents describe the concentration of the displayed loss integral near the fiber. To obtain a training rate, one would additionally need to analyze the optimization dynamics.

\subsection{What the contact spectrum records}\label{sec:geometry-contact-spectrum}

We finally explain the algebraic meaning of the contact orders. Let $J(t)$ be the observation Jacobian, differentiated in all native parameters and then evaluated on the path of Theorem~\ref{thm-spectrum}. A native covector is a linear functional on native parameter variations, and each row of $J(t)$ is such a covector: it gives the first-order change in one observed output. Over $\mathcal O=\mathbb R\{t\}$, quotienting all native covectors by the module generated by these observed differentials gives $\operatorname{coker}J(t)^\top$. The diagonal reduction in Appendix~\ref{appendix-d} gives
\[
\operatorname{coker}J(t)^\top
\cong\mathcal O^\zeta\oplus
\bigoplus_i\left(\mathcal O/(t^{\beta_i})\right)^{r_i(2d-r_i)}.
\]
Here $\zeta=(D-1)d^2+\sum_i r_i^2$ is the exact kernel count. A unit diagonal entry generates an entire coordinate copy of $\mathcal O$, so its quotient is zero. An entry $t^{\beta_i}$ instead gives $\mathcal O/(t^{\beta_i})$, while a zero column leaves a free copy of $\mathcal O$. Thus the $d^2$ unit factors disappear from the cokernel, and the remaining summands record
\[
 \begin{array}{c|c}
 \mathcal O^\zeta&\text{exact source-kernel count }\zeta\\
 \mathcal O/(t^{\beta_i})&\text{sensitivity order }|t|^{\beta_i}.
 \end{array}
\]
Invertible analytic coordinate changes preserve these powers. Since the coordinate changes and their inverses are bounded near zero, the same powers give the singular-value comparisons. This quotient is the module of ordinary relative differentials restricted to the path. It records first-order observation data, whereas the full relative cotangent complex also includes relations in other degrees.\footnote{Further finite-observation comparisons and polynomial examples with identical differential modules but different negative-degree cotangent cohomology are developed separately. These results concern the chosen observation algebra and are not needed for the attention theorems proved here.} The residual ideals above answer the separate question of which native parameters realize the marked function.

\section{Numerical protocols}\label{appendix-experiments}

We use two types of observations to examine the distinction between exact identification and numerical sensitivity. First, we evaluate the native Jacobian on a fixed input bank along a prescribed parameter path. We then consider recovery of one scalar factor from a noisy output mean with a known prefix. The observations are
\[
 J_{\mathcal B}(t)=D_\theta\mathcal O_{\mathcal B}|_{\theta(t)},
 \qquad Y=f(w_*^2)+\tau Z,\quad Z\sim N(0,1).
\]
Here $f$ is the scalar mean map defined in Section~\ref{one-noisy-observation-with-a-known-prefix}. The Jacobian experiments compare measured singular-value decay with the predicted contact orders, and the recovery experiment measures how that weak sensitivity affects estimation. All computations use fixed weights and inputs, without a training procedure.

\subsection{Scalar depth-four spectrum}\label{scalar-depth-four-spectrum}

We begin with the scalar experiment in Figure~\ref{fig:scalar-spectrum}. We use two tokens and four softmax layers, with all values fixed at one and $q_i=k_i=t$. For each of the following input pairs, we observe the mean of the two final output tokens: \[
(1+z_j,1-z_j),\qquad z_j=0.45+0.2j,\quad j=0,1,2,3.
\] Let $B(c)$ be the vector of these four means, with $c_i=q_ik_i$. We use Euclidean norms on both parameters and observations, and compute $D_cB$ at $t=0.08,0.04,0.02$ using 110 decimal digits. To express its sensitivity in the native query/key coordinates, we consider the two orthonormal directions in each factor pair:
\[
 (\delta q_i,\delta k_i)=\frac{(1,1)}{\sqrt2}
 \ \Longrightarrow\ \delta c_i=\sqrt2\,t,
 \qquad
 (\delta q_i,\delta k_i)=\frac{(1,-1)}{\sqrt2}
 \ \Longrightarrow\ \delta c_i=0.
\]
Consequently
\[
 \sigma_+\bigl(D_{(q,k)}B\bigr)
 =\sigma\bigl(D_cB\,(\sqrt2\,t I_4)\bigr).
\]
Thus a $4\times4$ matrix contains all positive singular values of the $4\times8$ native query/key Jacobian. The remaining four directions preserve the products $q_ik_i$ and lie in its exact kernel. In this scalar experiment, we differentiate only the query and key factors, keeping the values fixed.

The four fitted orders, computed from the final pair \(0.04,0.02\), are \[
0.999994,\qquad5.002211,\qquad17.001118,\qquad53.000562.
\] For a positive singular value, the final-pair estimator is
\[
 \widehat\beta_j=\frac{\log\sigma_j(0.04)-\log\sigma_j(0.02)}{\log(0.04/0.02)}.
\]
The measured orders agree with the predictions $1,5,17,53$ to maximum absolute discrepancy $0.002212$. In particular, at $t=0.02$ the weakest positive singular value is $3.86167\times10^{-92}$, illustrating the scale separation across layers. The saved results contain binary64 summaries of the high-precision positive values. In the figure, we plot these positive values and omit the four exact kernel directions. The dashed guides show the predicted powers anchored at the smallest scale.

\begin{figure}[htbp]
\centering
\includegraphics[width=0.49\textwidth]{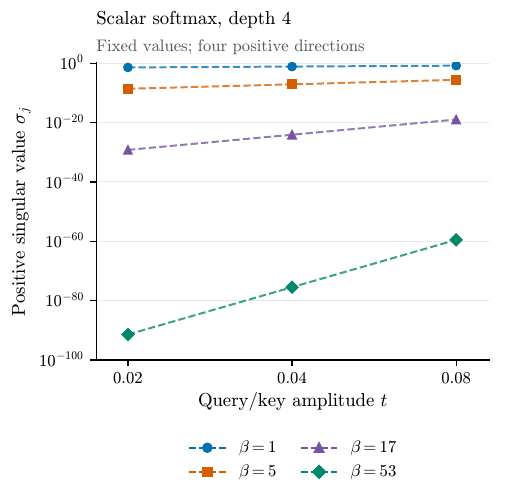}
\caption{Depth-four scalar softmax with fixed unit values. The four positive query/key singular values follow powers $1,5,17,53$. Points use saved 110-digit computations, and dashed guides show the theoretical powers anchored at the smallest scale. Four exact factor-gauge directions are omitted.}
\label{fig:scalar-spectrum}
\end{figure}

\subsection{Full native matrix spectrum}\label{full-native-matrix-spectrum}

We next differentiate all native parameters in a matrix-valued stack. We use token width $d=4$, query/key width $r=2$, depth $D=3$, and a bank of 48 four-token sequences. With a PyTorch generator seeded at 21, we first draw \(X=0.6\,\mathrm{randn}(48,4,4)\), then \(U_i=\mathrm{randn}(4,2)/2\) for all three layers, then \(W_i=U_i+0.3\,\mathrm{randn}(4,2)/2\). We evaluate the Jacobian at \[
Q_i=tU_i,\quad K_i=tW_i,\quad V_i=I,
\qquad t\in\{0.6,0.4,0.25,0.16,0.1,0.06\}.
\] We use the score convention of the theorem, with no additional $1/\sqrt r$ multiplier. All 96 query, key, and value entries are independently differentiated, and we observe all 768 scalar outputs. To compare precision levels on identical data, we import the original binary64 inputs and weights exactly into arbitrary precision and store their round-trip representations with the results.

The implementation stores tokens as rows, so its token matrix is the transpose of the manuscript convention. For this matrix $X$ with tokens as rows, let $Z=XMX^\top$ be the logits and $A$ their rowwise softmax. We propagate analytic derivatives using
\[
\begin{aligned}
\delta Z&=\delta XMX^\top+X\delta M X^\top+XM\delta X^\top,\\
\delta A_{ab}&=A_{ab}\left(\delta Z_{ab}-\sum_cA_{ac}\delta Z_{ac}\right),\\
\delta(AX)&=\delta A\,X+A\,\delta X.
\end{aligned}
\] The score-factor variations are \(\delta M_i=t(\delta Q_iW_i^\top+U_i\delta K_i^\top)\). Value-prefix variations and the output-product variation are included through the full native source differential.

The exact kernel has twelve factor directions \((\delta Q_i,\delta K_i)=(U_iB,-W_iB^\top)\), \(B\in\mathbb R^{2\times2}\), and 32 interface directions. At either internal interface \(j\), these are \[
\delta V_j=H,\quad\delta V_{j+1}=-H,\quad
\delta Q_{j+1}=-tH^\top U_{j+1},\quad
\delta K_{j+1}=-tH^\top W_{j+1},
\quad H\in\mathbb R^{4\times4}.
\] These source variations lie in $\ker J$ algebraically, without thresholding:
\[
 \dim\ker J=3(2^2)+2(4^2)=44,\qquad\operatorname{rank}J=96-44=52.
\]

To compute the 52 positive singular values, we first separate the exact kernel. For each score, we keep variations in the lower two query rows and in all key rows. Since the upper $2\times2$ block of $U_i$ is invertible, these form a complement to the factor kernel. Together with sixteen output-product directions, they give $J=F(t)T(t)$, where $F$ is $768\times52$ and $T$ is the $52\times96$ map from the original native variations to these coordinates. Although these coordinates remove the exact kernel, they need not be orthonormal in the original native metric. We therefore retain $T$ when computing singular values. To resolve the different scales, we first divide the columns of $F$ by their predicted powers and compute a QR factorization: \[
F=E\Delta,\qquad E=Q_oR,\qquad
\Delta=\operatorname{diag}(I_{16},tI_{12},t^5I_{12},t^{17}I_{12}).
\] Since $Q_o^\top Q_o=I$,
\[
 J=Q_oR\Delta T,
 \qquad \sigma_+(J)=\sigma_+(R\Delta T).
\]
We evaluate the $52\times52$ Gram matrix $(R\Delta T)(R\Delta T)^\top$ at 90 and 120 decimal digits. Figure~\ref{fig:spectra}(a) shows all 52 positive singular values at 120 digits, and the following table gives the range of fitted orders within each predicted band.

\begin{center}
\begin{tabular}{rrrr}
\toprule
Predicted order & Multiplicity & Smallest final-pair fit & Largest final-pair fit \\
\midrule
0 & 16 & 0.000019 & 0.001653 \\
1 & 12 & 0.995986 & 1.000806 \\
5 & 12 & 5.001606 & 5.009794 \\
17 & 12 & 16.994265 & 17.014769 \\
\bottomrule
\end{tabular}
\end{center}

The fits use the final pair $t=0.1,0.06$. We also test whether the weakest band corresponds to the final layer. Let $E_{<D}$ contain the divided value and earlier-layer columns, and let $E_D$ contain the divided final-layer columns. Let $P_{<D}$ be the orthogonal projection onto $\operatorname{im}E_{<D}$. Applying $I-P_{<D}$ to the final-layer columns removes the directions already represented by the stronger groups. We therefore compare
\[
 V_D=\operatorname{im}((I-P_{<D})E_D)
 \quad\text{with the weakest twelve left singular directions}.
\]
Both spaces have dimension twelve. We measure their discrepancy by the sine of their largest principal angle, which decreases from $3.01\times10^{-6}$ at $t=0.6$ to $5.21\times10^{-20}$ at $t=0.06$. Thus the weakest singular space approaches the final-layer image after removing the stronger contributions.

Increasing precision from 90 to 120 digits changes a positive singular value by at most \(1.31\times10^{-35}\) relatively. The subspace discrepancies agree through all 45 reported digits. The reconstructed native Jacobian agrees with independent float64 automatic differentiation to relative Frobenius error at most \(1.47\times10^{-16}\). These comparisons check numerical convergence and agreement of the two derivative implementations. The calculations use high-precision arithmetic without interval certification.

At $t=0.06$, the smallest positive value is $7.58144\times10^{-28}$. The declared diagnostic is
\[
 \operatorname{rank}_{\mathrm{num}}J
 =\#\{j:\sigma_j>10^{-10}\sigma_1\}=40,
 \qquad\operatorname{rank}J=52.
\]
The twelve weakest positive values lie below the chosen cutoff even in high precision, which explains the reported rank of 40. Insufficient float64 precision is a second effect: it also prevents resolution of those values. Thus the cutoff omits twelve identifiable directions in addition to the 44 exact kernel directions. We use it only as a numerical diagnostic, rather than as an exact-rank test or a threshold calibrated to observation noise.

The supplementary code includes the inputs, saved results, and generating scripts, with file descriptions in its README. The computations used mpmath 1.3.0, NumPy 2.4.3, and PyTorch 2.12.1.

\subsection{One noisy observation with a known prefix}\label{one-noisy-observation-with-a-known-prefix}

We finally consider recovery from a single noisy observation, shown in Figure~\ref{fig:spectra}(b). We use depth two and input $(1.5,0.5)$, with known first-layer factors $q_1=k_1=0.1$ and known values $v_1=v_2=1$, so $L=1$. We restrict the unknown final factors to $q_2=k_2=w\in[0,0.3]$, with true value $w_*=0.1$. Thus the final score is $c_2=w^2$, and the nonnegative balanced factor is uniquely determined by that score. This gives a one-parameter recovery problem with a fixed prefix.

The known prefix produces mean $m=1.0024999635\ldots$ and contrast $z=0.0012499661\ldots$, where contrast is half the difference between its two output tokens. Writing its two output tokens as $m\pm z$, we observe the final mean with additive Gaussian noise: \[
Y=f(w_*^2)+\tau Z,\quad Z\sim N(0,1),\qquad
f(c)=m+\frac z2\{\tanh(cz(m+z))+\tanh(cz(m-z))\}.
\] Since $m>z>0$, both arguments of $\tanh$ increase with $c$, and hence $f$ is strictly increasing on $[0,0.09]$. We can therefore recover $c$ by inversion. When noise places $Y$ outside the attainable mean interval, we first clip it to that interval and then define
\[
 Y_{\mathrm{clip}}=\min\{f(0.09),\max\{f(0),Y\}\},\qquad
 \widehat c=f^{-1}(Y_{\mathrm{clip}}),\qquad\widehat w=\sqrt{\widehat c}.
\]
We evaluate this inverse using 75 bisection steps at 100-digit precision.

The mean map satisfies $f(c)=m+bc+O(c^3)$, where $b=mz^2=1.5663213492\times10^{-6}$. We therefore scale the noise as $\tau=\alpha bw_*^2$, relative to the leading mean change at the true parameter. For the four values of $\alpha$ below, we reuse the same 400 independent standard-normal draws from NumPy seed 20261001.

For the resulting estimates $\widehat w_1,\ldots,\widehat w_{400}$, the reported summaries are
\[
 \widehat{\mathrm{MSE}}=\frac1{400}\sum_j(\widehat w_j-w_*)^2,\qquad
 \widehat{\mathrm{bias}}=\frac1{400}\sum_j(\widehat w_j-w_*),\qquad
 \mathrm{relative\ RMSE}=\frac{\sqrt{\widehat{\mathrm{MSE}}}}{w_*}.
\]

\begin{center}
\begin{tabular}{rrr}
\toprule
Noise ratio $\alpha$ & Empirical MSE in $w$ & Empirical bias in $w$ \\
\midrule
0.01 & $2.5319\times10^{-7}$ & $-4.6635\times10^{-6}$ \\
0.1 & $2.5448\times10^{-5}$ & $-1.6122\times10^{-4}$ \\
1 & $0.0029054$ & $-0.0108935$ \\
10 & $0.0172789$ & $0.0253375$ \\
\bottomrule
\end{tabular}
\end{center}

\end{document}